\newcommand{\T}{^\top}
\newcommand{\x}{w}
\newcommand{\R}{R}
\algrenewcommand\algorithmicrequire{\textbf{Input:}}
\algrenewcommand\algorithmicensure{\textbf{Output:}}
\newlength{\continueindent}
\newcommand*{\ALG@customparshape}{\parshape 2 \leftmargin \linewidth \dimexpr\ALG@tlm+\continueindent\relax \dimexpr\linewidth+\leftmargin-\ALG@tlm-\continueindent\relax}
\apptocmd{\ALG@beginblock}{\ALG@customparshape}{}{\errmessage{failed to patch}}
\algnewcommand\algorithmicpardo{\textbf{in parallel do}}
\newcommand{\update}[1]{#1}
\newcommand{\myparagraph}[1]{\vspace*{0.5em}\par\noindent\textbf{{#1}.}} %
\newcommand\D{\textnormal{d}}
\newcommand{\fedavg}{{FedAvg}\xspace}
\newcommand{\fedavgsub}{{FedAvg-Sub}\xspace}
\newcommand{\fedprox}{{FedProx}\xspace}
\newcommand{\qffl}{$q${-FFL}\xspace}
\newcommand{\afl}{{AFL}\xspace}
\newcommand{\term}{{Tilted-ERM}\xspace}
\newcommand{\newfl}{$\Delta$-{FL}\xspace}
\renewcommand{\epsilon}{\varepsilon}
\newcommand{\tabemph}[1]{}
\definecolor{puorange}{rgb}{0.80,0.20,0}
\definecolor{bluegray}{rgb}{0.04,0,0.7}
\definecolor{greengray}{rgb}{0.05,0.50,0.15}
\definecolor{darkbrown}{rgb}{0.40,0.2,0.05}
\definecolor{darkcyan}{rgb}{0,0.4,1}
\definecolor{black}{rgb}{0,0,0}
\definecolor{grey}{rgb}{0.93,0.93,0.93}
\newcommand \reals {\mathbb{R}}
\newcommand \inv {^{-1}} %
    \renewcommand \T {^{\top}}	%
    \renewcommand{\x}{ w}
    \renewcommand{\R}{\mathbb{R}}
\newcommand \prob {\mathbb{P}}
\newcommand \expect {\mathbb{E}}
\newcommand \supq {\mathbb{S}}
\newcommand{\riskM}{\mathbb{M}}
\newcommand \pow [1]{^{(#1)}}
\DeclarePairedDelimiterX{\inp}[2]{\langle}{\rangle}{#1, #2} %
\DeclarePairedDelimiterX{\normsq}[1]{\Vert}{\Vert^2}{#1} %
\newcommand{\eps}{\epsilon}
\newcommand \prox {\mathop{\mathrm{prox}}\nolimits}  %
\newcommand \grad {\nabla}
\theoremstyle{thmstyleone}%
\newtheorem{theorem}{Theorem}%
\newtheorem{proposition}[theorem]{Proposition}%
\newtheorem{lemma}[theorem]{Lemma}
\newtheorem{claim}[theorem]{Claim}
\newtheorem{property}[theorem]{Property}
\theoremstyle{thmstyletwo}%
\newtheorem{remark}{Remark}%
\theoremstyle{thmstylethree}%
 \declaretheoremstyle[
notefont=\bfseries, notebraces={}{},
bodyfont=\normalfont\itshape,
headformat=\NAME \NOTE
]{nopar}
\setlist{leftmargin=5.5mm}
\algnewcommand{\Initialize}[1]{%
  \State \textbf{Initialize:}
  \Statex \hspace*{\algorithmicindent}\parbox[t]{.8\linewidth}{\raggedright #1}
}
\DeclareMathOperator*{\argmin}{arg\,min}
\DeclareMathOperator*{\argmax}{arg\,max}
\DeclareMathOperator{\ZZ}{{\mathbb{Z}}}
\DeclareMathOperator{\Rd}{\mathbb{R}^d}
\newcommand{\range}[1]{\{1, \dots, #1\}}
\newcommand{\dotproduct}[2]{\left\langle #1, #2 \right\rangle}
\DeclareMathOperator{\rv}{X}
\newcommand{\Fcal}{\mathcal{F}}
\newcommand{\Pcal}{\mathcal{P}}
\newcommand{\Tcal}{\mathcal{T}}
\newcommand{\Ncal}{\mathcal{N}}
\newcommand{\Xcal}{\mathcal{X}}
\newcommand{\Acal}{\mathcal{A}}
\newcommand{\expectation}[2]{\mathbb{E}_{#2}\left[#1\right]}
\newcommand{\norm}[1]{\left\|#1 \right\|}
\newcommand{\squarednorm}[1]{\left\|#1 \right\|^2}
\newcommand{\globalmodel}[1]{w^{(#1)}}
\newcommand{\localmodel}[3]{w_{#2, #3}^{(#1)}} %
\newcommand{\meanoverselectedsets}[1]{#1 \mathbb{E}_{S \sim U_m}}
\DeclareMathOperator{\totalnoise}{\Tcal_3}
\renewcommand{\tabemph}[1]{\cellcolor{gray!10}{#1}}%
\renewcommand{\update}[1]{#1}
\title{Federated Learning with Superquantile Aggregation for Heterogeneous Data}
\author{Krishna Pillutla$^{* 1}$  $\qquad$
Yassine Laguel$^{*2}$ $\qquad$
Jérôme Malick$^3$ $\qquad$
Zaid Harchaoui$^4$ 
\vspace{0.5em} 
\\ 
{\small
$^1$Google Research $\qquad$
$^2$Rutgers University}  \\
{\small
$^3$CNRS $\qquad$
$^4$University of Washington 
}
}
\date{\vspace*{-3em}}
\begin{document}
\maketitle
%
%
%
\let\thefootnote\relax\footnotetext{$^*$These authors contributed equally to this work.}

\begin{abstract}
We present a federated learning framework that is designed to robustly deliver good predictive performance across individual clients with heterogeneous data. The proposed approach hinges upon a superquantile-based learning objective that captures the tail statistics of the error distribution over heterogeneous clients. We present a stochastic training algorithm that interleaves differentially private client filtering with federated averaging steps. We prove finite time convergence guarantees for the algorithm: $O(1/\sqrt{T})$ in the nonconvex case in $T$ communication rounds and $O(\exp(-T/\kappa^{3/2}) + \kappa/T)$ in the strongly convex case with local condition number $\kappa$. Experimental results on benchmark datasets for federated learning demonstrate that our approach is competitive with classical ones in terms of average error and outperforms them in terms of tail statistics of the error. 
 \end{abstract}

\section{Introduction} \label{sec:sfl:intro}
 Federated learning is a distributed machine learning framework where many clients (e.g. mobile devices) collaboratively train a model under the orchestration of a central server (e.g. service provider) while keeping the training data private and local to the client throughout the training process~\cite{mcmahan2017communication,kairouz2019flsurvey}.
It has found widespread adoption 
across industry~\cite{bonawitz2019towards,paulik2021federated}
for applications ranging from 
smart device apps~\citep{yang2018applied,hard2018federated} %
to healthcare~\citep{brisimi2018federated,huang2019patient}.

A key feature of federated learning is the 
statistical heterogeneity, i.e., 
client data distributions
are {\em not} identically distributed~\cite{kairouz2019flsurvey,li2020federated}.
In typical cross-device federated learning scenarios, 
each client corresponds to a user. The diversity in the data 
they generate reflects the diversity in their unique personal, cultural, regional, and geographical characteristics. 

This data heterogeneity in federated learning manifests itself as a train-test distributional shift. 
Indeed, the usual approach minimizes the prediction error of the model on average over the population of clients available for training~\cite{mcmahan2017communication} while at test time, the same model is deployed on individual clients. 
This approach can fail on clients whose data distribution is far from most of the population or who may have less data than most of the population.
It is highly desirable, therefore, to have a federated learning method that can robustly deliver good predictive performance across
a wide variety of natural distribution shifts posed by individual clients. 

We present in this paper a robust approach to federated learning that guarantees a minimum level of predictive performance to all clients, even in situations where the population is heterogeneous.  The method we develop addresses these issues by minimizing a learning objective based on the notion of a superquantile~\cite{rockafellar2002conditional,rockafellar2008risk}, a risk measure that captures the tail behavior of a random variable. 

Training models with a learning objective involving the superquantile raises challenges. 
The superquantile is a non-smooth functional with sophisticated properties. Furthermore, the superquantile function can be seen as a kind of nonlinear expectation that we would like to blend well with averaging mechanisms. 
We show how to address the former by leveraging the dual formulation and the latter by leveraging the tail-domain viewpoint.
As a result, we can obtain an algorithm that can be implemented in a similar way to \fedavg~\cite{mcmahan2017communication} yet offers important benefits to heterogeneous populations.

The approach we propose, \newfl, allows one to control higher percentiles of the distribution of errors over the heterogeneous population of clients. We show in the experiments that our approach is more efficient than a direct approach, simply seeking to minimize the worst error over the population of clients. Compared to~\fedavg,~\newfl delivers improved prediction to tail clients or data-poor clients. 
Our algorithm relies on differentially private quantile computation to filter out clients on which to run federated averaging steps.
We present finite-time theoretical convergence guarantees for our algorithm when used to train additive models or deep networks and prove bounds on the privacy and utility of the algorithm. 

\begin{figure*}[t]
\begin{center}
\includegraphics[width=0.99\linewidth]{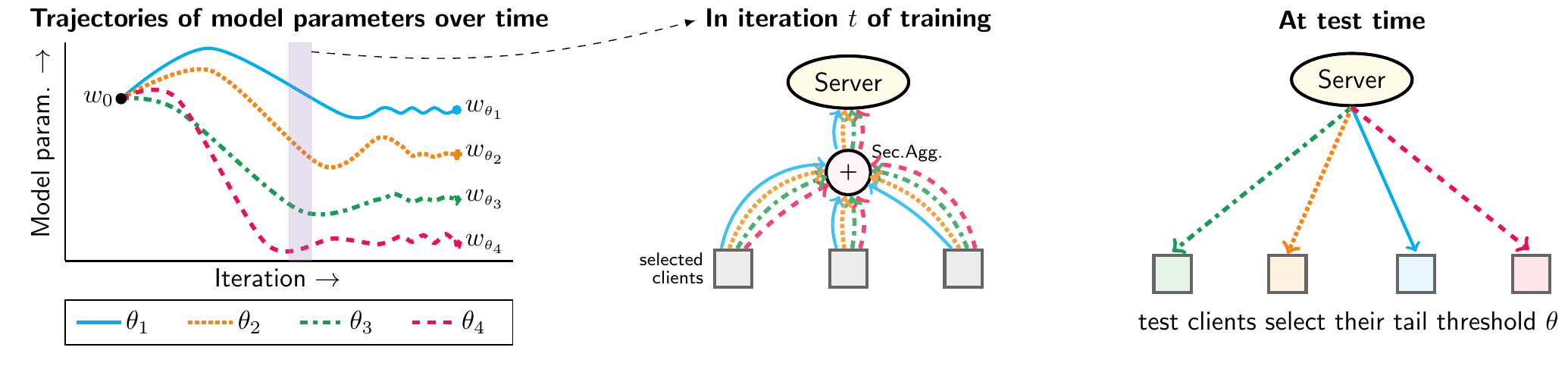}
\end{center}
    \caption{\small{
    Schematic summary of the \newfl framework.~~\textbf{Left}: The server maintains multiple models $w_{\theta_j}$, one for each tail threshold\;$\theta_j$. 
    \textbf{Middle}:
    During training, 
    selected clients participate in training 
    {\em each} model $w_{\theta_j}$.
    Individual updates are 
    securely aggregated to
    update the server model. 
    \textbf{Right}:
    Each test user is allowed to select their tail threshold
    $\theta$, and are served the corresponding
    model $w_\theta$.
    }}
    \label{fig:sfl:schematic_of_framework}
\end{figure*}

\subsection{Contributions}
We make the following concrete contributions in this work.

\myparagraph{The \newfl Framework}
The usual objective of federated learning, which we call the \emph{vanilla FL} objective is 
\begin{align} \label{eq:fl:avg}
    \min_{w \in \reals^d} 
    \frac{1}{n}\sum_{i=1}^n F_i(w) + \frac{\lambda}{2}\normsq{w} \,,
\end{align}
where $F_i(w) = \expect_{\xi \sim q_i}[f(w; \xi)]$ is the expected loss on client $i$ under its data distribution $q_i$ for $i=1, \ldots, n$, and $\lambda$ is a regularization parameter~\cite{mcmahan2017communication}.
Minimizing the average loss can lead to poor performance on clients whose distribution $p$ is far from the population training distribution $p_{\mathrm{train}} = (1/n) \sum_{i=1}^n q_i$. 
\update{Our goal is to improve the performance on such \emph{tail clients}.}

\update{
To this end, we directly minimize the average loss across tail clients whose loss is above a certain tail threshold. We formalize this through the notion of a risk measure known as the \textbf{superquantile}, a tail summary statistic of random variables~\cite{rockafellar2002conditional}. 
The $(1-\theta)$-superquantile is defined for a  continuous random variable $Z$ and $\theta \in (0, 1)$ as $\supq_\theta(Z) = \expect[Z \,\vert\, Z > Q_\theta(Z)]$, where $Q_\theta(Z)$ is the $(1-\theta)$-quantile of $Z$.
A similar interpretation holds for discrete distributions; the formal definition of the superquantile for this case is given in \Cref{sec:sfl:setup:heterogeneity}. 
}

\update{
Instead of minimizing the average loss as in \eqref{eq:fl:avg}, 
the proposed framework \newfl minimizes the tail loss across clients, as measured by the superquantile. 
Concretely, at a tail threshold $\theta \in (0, 1)$, we minimize
\begin{align} \label{eq:sfl}
    F_\theta(w) := \supq_\theta\big(F_1(w), \ldots, F_n(w)\big) + \frac{\lambda}{2} \norm{w}^2  \,,
\end{align}
where $\supq_\theta(a_1, \ldots, a_n)$ is the $(1-\theta)$-superquantile of the empirical distribution $(1/n)\sum_{i=1}^n \delta_{a_i}$.
Thus, the objective \eqref{eq:sfl} measures the tail statistics of the per-client loss distribution. 
}

\update{
By a duality argument, we show that the superquantile objective \eqref{eq:sfl} promotes distributional robustness. 
If we have a test client who is unseen during training and whose distribution $p_\pi = \sum_{i=1}^n \pi_i q_i$ can be written as a mixture of the training distributions $q_1, \ldots, q_n$, then the \newfl objective can be written as 
\[
    F_\theta(w) = \max_{\pi_i \le 1/(\theta n)} \expect_{\xi \sim p_\pi} \left[f(w; \xi) \right] + \frac{\lambda}{2}\norm{w}^2\,. 
\]
In other words, we minimize the \emph{worst-case loss over all mixture distributions with a constraint} $\pi_i \le 1/(\theta n)$ on the mixture weights; see \Cref{sec:sfl:framework} for details. 
}

\myparagraph{Optimization Algorithms}
\update{
To design a federated optimization algorithm to optimize the \newfl objective, the nonsmoothness of the 
superquantile $a \mapsto \supq_\theta(a_1, \ldots, a_n)$ might lead to potential difficulties in optimization. 
Fortunately, we can derive an expression for the subgradient of the \newfl objective \eqref{eq:sfl}: when $\theta n$ is an integer, we have
}
\[
  \sum_{i=1}^n \pi_i^\star F_i(w) + \lambda w \in \partial F_\theta(w)\,,
  \quad \text{where} \quad
  \pi_i^\star = \frac{\mathbb{I}(F_i(w) \ge Q_\theta)}{\sum_{j = 1}^n \mathbb{I}(F_j(w) \ge Q_\theta)} \,,
\]
and $Q_\theta = Q_\theta(F_1(w), \ldots, F_n(w))$ is the $(1-\theta)$-quantile of the losses evaluated at $w$.
\update{In other words, averaging the gradients of the losses that are larger than the quantile $Q_\theta$ gives a valid subgradient of the objective \eqref{eq:sfl}.} 

Using this expression, we design a federated optimization algorithm that interleaves federated averaging with differentially private quantile estimation. Specifically, the local updates $w_i^+$ from the subsample of $m$ selected clients $i \in S$ are aggregated to update the global model with the following two steps: 
\begin{itemize}[topsep=0.0ex,partopsep=0.0ex,itemsep=0.0ex,leftmargin=1.5em]
    \item estimate $\hat Q_\theta \approx Q_\theta(F_i(w) \,:\, i\in S)$ using the distributed discrete Gaussian mechanism~\cite{kairouz2021distributed} and hierarchical histograms~\cite{cormode2019answering}, and
    \item aggregate the updates from the tail clients  where $F_i(w) \ge \hat Q_\theta$ to find the new global model $w^+$ as 
    \[
        w^+ = \frac{1}{\vert S_\theta \vert}\sum_{i \in S_\theta} w_i^+
        \,,
      \quad \text{where} \quad
      S_\theta = \{i \,:\, F_i(w) \ge \hat Q_\theta\} \,.
    \]
\end{itemize}
Similar to FedAvg, this aggregation rule enjoys a simplification in the case of a single local update per-client with a learning rate $\gamma$. Specifically, under the assumption of full client participation (i.e., $m=n$), if the local update $w - w_i^+ = \gamma \grad\left(F_i(w) + (\lambda/2) \normsq{w}\right)$ is a single gradient step and $\hat Q_\theta = Q_\theta(F_1(w), \ldots, F_n(w))$ is the exact quantile of the per-client losses, the aggregated update is simply a subgradient step $w-w^+ = \gamma \grad F_\theta(w)$ where we denote the subgradient as $\grad F_\theta(w) \in \partial F_\theta(w)$. 

\myparagraph{Convergence Analysis}
Apart from the nonsmoothness of the superquantile, the convergence analysis also has to overcome the difficulty that we cannot obtain unbiased minibatch subgradient estimators for the superquantile objective.
\update{
Given $m$ i.i.d. copies $Z_1, \ldots, Z_m$ of a random variable $Z$, the empirical mean $\bar Z_m = (1/m) \sum_{i=1}^m Z_i$ is an unbiased estimate of the population mean, i.e., $\expect[\bar Z_m] = \expect[Z]$. This is no longer true for the superquantile, i.e., $\expect[\supq_\theta(Z_1, \ldots, Z_m)] \neq \supq_\theta(Z)$. As a result, we cannot access unbiased stochastic gradients in the learning setting,  where $m$ is the minibatch size. 
Moreover, it is not reasonable to assume in federated learning that we have access to all the clients due to a diurnal availability pattern of clients~\cite{kairouz2019flsurvey}. We overcome this issue by actually minimizing the \emph{expected minibatch superquantile} instead. It is defined as 
\[
    \overline F_{\theta}(w) := \expect_{(i_1, \ldots, i_m) \sim U_m} \left[ \supq_\theta\big(F_{i_1}(w), \ldots, F_{i_m}(w)\big) \right] \,,
\] 
where $U_m$ is the uniform distribution over all subsets of $\{1, \ldots, n\}$ of batch size $m$.
We can build an unbiased subgradient estimator for this objective by sampling a minibatch $(i_1, \ldots, i_m) \sim U_m$.
This is a uniform close surrogate of the original objective~\cite[Prop. 1]{levy2020largescale}
}
\[
    \vert F_\theta(w) - \overline F_{\theta}(w) \vert \le O \left( \frac{\max_{i=1, \ldots, n}  \vert F_i(w)\vert }{\sqrt{\theta m}}  \right) \,.
\]

Assuming that each $F_i$ is $G$-Lipschitz and $L$-smooth,
we establish a rate of $\sqrt{LG^2/T}$ 
in the nonconvex (and nonsmooth) case where $\lambda=0$. 
If, additionally, each $F_i$ is convex and $\lambda > 0$,
the problem is strongly convex and
we establish a rate of 
$\exp(-T / \kappa^{3/2}) + G^2/(\lambda T)$ in this case where
$\kappa = 1 + L /\lambda$ is the per-client condition number.

\myparagraph{Privacy and Utility Analysis}
The standard algorithms to compute quantiles with differential privacy are based on the exponential mechanism and require a trusted central aggregator~\cite{smith2011privacy}. Since this is not usually the case in federated learning, we 
estimate the cumulative distribution using 
the hierarchical histogram method and combine it with the distributed discrete Gaussian mechanism~\cite{kairouz2021distributed} in order to simulate a central aggregation using a cryptographic primitive known as secure aggregation~\cite{bonawitz2017practical}.  
\update{The hierarchical histogram method, also known as tree aggregation, is a classical approach to answer range queries under differential privacy~\cite{hay2010boosting,dwork2010continual,chan2011private,smith2017interaction,cormode2019answering}.
}

Privacy guarantees are obtained by adding noise to the per-client computations, resulting in a degradation of utility (i.e., the performance relative to the non-private case). This leads to a tradeoff between privacy and utility. 
For a hierarchical histogram of $b$ bins, we prove a $(1/2)\eps^2$-concentrated differential privacy~\cite{bun2016concentrated} guarantee given a per-client noise of scale $\log b / (\eps \sqrt{n})$ and a quantile error of $\log^2 b / (\eps n)$ up to constants and log factors. 

\myparagraph{Experiments}
We perform numerical experiments using 
neural networks and linear models on tasks including image classification and sentiment analysis based on public datasets. 
The experiments demonstrate the superior performance of \newfl over 
state-of-the-art baselines on the upper quantiles of the error on test clients, with particular improvements on data-poor clients, while being competitive on the mean error.
A deeper analysis reveals that \newfl helps improve performance on data-poor clients. 

We numerically study the privacy-utility tradeoff of the differentially private quantile estimation algorithm described above and the \newfl algorithm with end-to-end differential privacy guarantees. We find that \newfl outperforms FedAvg on the tail error across a wide range of privacy budgets while exhibiting a comparable privacy-utility tradeoff to FedAvg on the mean error.

\subsection{Outline}
We start with \Cref{sec:sfl:related} to describe the related work.
\Cref{sec:sfl:setup} describes the general setup, recalls the FedAvg algorithm, and formally defines the superquantile as a tail summary of a random variable. 
\Cref{sec:sfl:sfl} presents a federated optimization algorithm for \newfl. We analyze its convergence in the convex and non-convex cases, as well as its differential privacy properties in \Cref{sec:sfl:theory}. 
We discuss an extension to other risk measures and relations to fair allocation in \Cref{sec:sfl:discussion}.  
\Cref{sec:sfl:expt} presents experimental results, comparing the proposed approach to existing ones.
Detailed proofs and additional details can be found in the supplement, while the code and the scripts to reproduce the experiments can be found at~\url{https://github.com/krishnap25/simplicial-fl}.

An early version of this work was presented at IEEE CISS~\cite{laguel2021superquantile}. This paper extends and improves upon it in several respects. First, we give an improved and tighter convergence analysis in both the convex and general nonconvex cases. Second, we augment our algorithm with differential privacy and analyze its privacy and utility. Finally, we conduct an expanded numerical study, including 
(a) comparing with baselines such as \term~\cite{li2020tilted} that were published after our paper~\cite{laguel2021superquantile}, 
(b) an empirical comparison to model personalization,
and,
(c) a study of the privacy-utility tradeoff of \newfl under differential privacy.  
 
\myparagraph{Notation}
The norm $\norm{\cdot}$ denote the Euclidean norm $\norm{\cdot}_2$ in $\reals^d$. We use $\Delta^{n-1} = \left\{ \pi \in \reals^n_+ \, :\, \sum_{i=1}^n \pi_i = 1\right\}$ to denote the probability simplex in $\reals^n$.

\section{Related Work} \label{sec:sfl:related}

Federated learning was introduced by~\citep{mcmahan2017communication}
to handle distributed on-client learning~\citep{kairouz2019flsurvey,li2020federated,gafni2021federated}.
A plethora of recent extensions have also been proposed~\cite{yurochin2019bayesian,sattler2020clustered,mills2020communication,wei2020federated,amiri2020machine,shlezinger2021uveqfed,jhunjhunwala2021adaptive,sery2021over,collins2021expoiting}.
Our approach to addressing the statistical heterogeneity by proposing a new objective is broadly applicable in these settings.

Distributionally robust optimization~\citep{bental2013robust},
which aims to train models that perform
uniformly well across all subgroups instead of just on average,
has witnessed a flurry of recent research~\citep{lee2018minimax,duchi2019variance,kuhn2019wasserstein}. This approach is closely related to the risk measures studied in economics and finance~\citep{artzner1999coherent,rockafellar2000optimization,ben2007old,zbMATH06621946}.
The recent works \cite{laguel2020first,levy2020large,curi2020adaptive} study optimization algorithms for risk measures. 
More broadly, risk measures have been successfully utilized in problems ranging from bandits~\cite{sani2012risk,cassel2018general}, 
reinforcement learning~\cite{chow2015risk,tamar2015policy,chow2017risk},
and fairness in machine learning~\cite{willaimson2019fairness,rezaei2021robust}. 
The federated learning method here is based on the superquantile~\cite{rockafellar2002conditional}, a popular risk measure.
We propose a stochastic optimization algorithm adapted to the federated setting and prove its convergence.

Addressing statistical heterogeneity in federated learning has led to two lines of work. 
The first includes algorithmic advances to alleviate the effect of heterogeneity on convergence rates while still minimizing the classical expectation-based objective function of empirical risk minimization. These techniques include the use of proximal terms~\cite{li2020fedprox}, control variates~\cite{karimireddy2020scaffold} or augmenting the server updates~\cite{wang2020tackling,reddi2021adaptive}; we refer to the recent survey~\cite{wang2021field} for details.
More generally, the framework of local SGD has been used to study federated optimization algorithms~\citep{stich2018local,zhou2018convergence,haddadpour2019local,dieuleveut2019communication,li2020convergence_fedavg,khaled2020tighter,koloskova2020unified}.
Compared to these works, which study federated optimization algorithms in the smooth case, we tackle in our analysis the added challenge of nonsmoothness of the superquantile-based objective in both the general nonconvex and strongly convex cases.

The second line of work addressing heterogeneity involves designing new objective functions
by modeling statistical heterogeneity and designing optimization algorithms. The \afl framework to minimize the worst-case error across all training clients and associated generalization bounds were given in~\cite{mohri2019agnostic}. The concurrent work of \cite{li2020fair} proposes the \qffl framework whose objective is inspired by fair resource allocation to minimize the $L^p$ norm of the per-client losses. Several related works were also published following the initial presentation of this work~\cite{laguel2020device}.
A federated optimization algorithm for \afl was proposed and its convergence was analyzed in \cite{deng2020distributionally}. 
Distributional robustness to affine shifts in the data was considered in \cite{reisizadeh2020robust} along with convergence guarantees. Finally, a classical risk measure, namely the entropic risk measure, was considered in \cite{li2020tilted}. We note that no convergence guarantees are currently known for the stochastic optimization algorithms of \cite{li2020fair}. Furthermore, it is unclear if any of these algorithms can be implemented with differential privacy. 

Differential privacy was introduced in \cite{dwork2006calibrating,dwork2006data} to 
formalize the loss of privacy of an individual user in releasing population-level aggregates. 
DP-FedAvg~\cite{mcmahan2018learning}, a differentially private variant of FedAvg, is also implemented in industrial systems~\cite{ramaswamy2020training}. Recent contributions in this direction include differential privacy mechanism compatible with secure aggregation~\cite{kairouz2021distributed,agarwal2021skellam} and improving privacy-utility tradeoffs of federated learning with personalization~\cite{jain2021differentially,bietti2022personalization}.
 
\section{Problem Setup} \label{sec:sfl:setup}

We begin this section by recalling the standard setup of federated learning in \Cref{sec:sfl:setup:fl}. We then describe the standard approach to federated learning and its associated optimization, \fedavg~\cite{mcmahan2017communication} in \Cref{sec:sfl:setup:fedavg}. We then define the superquantile in \Cref{sec:sfl:setup:heterogeneity}.

\subsection{Federated Learning Setup} \label{sec:sfl:setup:fl}

Federated learning consists of heterogeneous 
 clients who collaboratively 
train a machine learning model under the orchestration of a central server. %
The model is then deployed to all clients, including 
those not seen during training. 

Let the vector $w \in \reals^d$ denote the $d$ model parameters.
We assume that each client has a distribution $q$
over some data space such that the data on the client is sampled 
i.i.d. from $q$. 
The loss incurred by the model $w\in\reals^d$ on this client
is $F(w; q) := \expect_{\xi \sim q} [f(w; \xi)]$,
where $f(w;\xi)$ is the chosen loss function, such as the logistic loss, on 
input-output pair $\xi$ under the model $w$. 
The expectation above is assumed to be well-defined and finite.
For a given distribution $q$, smaller values of $F(\cdot; q)$ denote a better fit of the model to the data.  

There are $n$ clients available for training. We number these clients as $1, \ldots, n$ and denote the distribution on training client $i$ by $q_i$. We denote the loss on client $i$ by 
$F_i(w) := F(w ; q_i)$.

The goal of federated learning is to train a model $w$ so that 
it achieves good performance when deployed on {\em each} test client, including those unseen during training. Owing to the statistical heterogeneity of federated learning, the distribution $p$ of a specific test client could be different from the average distribution $(1/n)\sum_{i=1}^n q_i$ that the model is trained on. 

Each federated learning method is characterized by an objective function and the federated optimization algorithm used to minimize it. 
It is not possible to achieve good performance on 
each client {\em simultaneously} with a single model $w$, as it would be a difficult multiobjective optimization problem.
The usual approach is to combine the per-client losses into a scalar and minimize this objective. 
The choice of the objective function and optimization algorithm is primarily determined by the three key aspects of federated learning~\cite{kairouz2019flsurvey,li2020federated}:
\begin{enumerate}[label=(\arabic*),nolistsep,leftmargin=\widthof{ (3) }]
    \item \textit{Communication Bottleneck}: The repeated exchange of massive models between the server and clients over resource-limited wireless networks makes communication a critical bottleneck. Therefore, training algorithms should be able to trade off more local computation for a lower communication cost. 
    \item \textit{Statistical Heterogeneity}: The training distribution $q_i$ and a specific test distribution $p$ are likely to be different from each other. Therefore, a model which works well {\em on average} over all test clients might not work well on {\em each individual} test client.
    \item \textit{Privacy}: The data on each client is highly privacy-sensitive. Federated learning is designed to protect data privacy since no user data is transferred to a data center. This privacy is enhanced by \textit{secure aggregation} of model parameters, which refers to aggregating client updates such that no client update is directly revealed to any other client or the server. This is achieved by cryptographic protocols based on secure multiparty communication~\cite{bonawitz2017practical}.
\end{enumerate}

\subsection{Federated Learning and the~\fedavg algorithm} \label{sec:sfl:setup:fedavg}
Analogous to the classical expectation-based objective function in the empirical risk minimization approach, 
the standard objective in federated learning is to minimize the average loss on the training clients
\begin{align} \label{eq:fl:vanilla-fl-obj}
    \min_{w \in \reals^d} \frac{1}{n}\sum_{i=1}^n F_i(w) + \frac{\lambda}{2} \normsq{w} \,,
\end{align}
where  $\lambda \geq 0$ is a regularization parameter. 
We will call this objective the {\em vanilla FL} objective. 

The de facto standard training algorithm is  \fedavg~\cite{mcmahan2017communication}. Each round of the algorithm consists of the following steps:
\begin{enumerate}[label=(\alph*),nolistsep,leftmargin=\widthof{ (a) }]
    \item The server samples a set $S$ of $m$ clients from $[n]$ and broadcasts the current model $w\pow{t}$ to these clients. 
    \item Staring from $w_{i, 0}\pow{t} = w\pow{t}$, each client $i \in S$ makes $\tau$ local gradient descent steps
    with a learning rate $\gamma$:
    \[
        w_{i, k+1}\pow{t} = w_{i, k}\pow{t} - \gamma \grad F_i(w_{i, k}\pow{t}) \,.
    \]
    \update{In practice, one could also use local stochastic gradient steps, but we restrict ourselves to local full gradient steps for simplicity. }
    \item The models from the selected clients are sent to the server and aggregated to update the server model
    \[
        w\pow{t+1} = \frac{1}{m}\sum_{i \in S}  w_{k, \tau}\pow{t} \,.
    \]
\end{enumerate}

\fedavg addresses the communication bottleneck by using $\tau > 1$ local computation steps as opposed to $\tau = 1$ local steps in minibatch SGD. 
It also securely performs the averaging step (c) to enhance data privacy. However, the vanilla FL objective places a limit on how well statistical heterogeneity can be addressed. By minimizing the average training loss, the resulting model $w$ can sacrifice performance on ``difficult'' clients to perform well on average. 
In other words, it is not guaranteed to perform well on {\em individual} test clients, whose distribution $p$ might be quite different from the average training distribution $(1/n)\sum_{i=1}^n q_i$. 
Our goal in this work is to design an objective function, different from the vanilla FL objective \eqref{eq:fl:vanilla-fl-obj} to better handle statistical heterogeneity and the associated train-test mismatch. We also design a federated optimization algorithm similar to \fedavg to optimize it.

\update{
\subsection{Summarizing the Tail Behavior with the Superquantile}
\label{sec:sfl:setup:heterogeneity}

In this work, we consider clients with  heterogeneous local data distributions $q_1, \ldots, q_n$. 
This data heterogeneity manifests itself as a 
spread over the losses $F_1(w), \ldots, F_n(w)$ for any $w$. In particular, some clients might suffer large losses due to their distributions being far from the average population distribution. 
Our goal is to improve the loss (and hence, predictive performance) on such tail clients whose loss is worse than average. 
In other words, we are concerned with the 
right tail statistics of the empirical distribution over the losses $F_1(w), \ldots, F_n(w)$. 
}

\update{
A natural summary of the right tail of a random variable $Z$ is its high quantiles. 
Recall that the $(1-\theta)$-quantile $Q_\theta(Z)$ of a real-valued random variable $Z$ is defined as 
\[
    Q_\theta(Z) := \inf\left\{\eta \in \reals\,:\, \prob(Z > \eta) \le  \theta \right\}.
\]
Unfortunately, the quantile function of discrete random variables such as the empirical loss distribution is piecewise constant and is not amenable to gradient-based optimization. 
A better-behaved tail summary in this regard is the superquantile, also known as the conditional value at risk (CVaR)~\cite{rockafellar2000optimization,rockafellar2002conditional}.
}

\update{
The superquantile $\supq_\theta(Z)$ of a random variable $Z$ is defined as the average of all quantiles greater than the $(1-\theta)$-quantile:
\begin{equation}\label{eq:supq_formal_def}
    \supq_\theta(Z) = \frac{1}{\theta} \int_{0}^\theta Q_\alpha(Z) \, \D \alpha \,.
\end{equation}
For continuous random variables, we have the equivalence $\supq_\theta(Z) = \expect[Z \, \vert \, Z > Q_\theta(Z)]$ of the superquantile as the \emph{tail mean}, as illustrated in \Cref{fig:sfl:superquantile}. Owing to this interpretation, we refer to the parameter $\theta$ as the \emph{tail threshold}.
}

\update{
Central to our development is the dual expression of the superquantile~\cite{follmer2002convex}:
\begin{align} \label{eq:sq:dual}
\begin{aligned}
    \supq_\theta(a_1, \ldots, a_n)
    &= 
    \max_{\pi \in \Pcal_\theta} \, \pi\T a \,, \\
    \text{where} \quad
    \Pcal_\theta &= \{ \pi \in \Delta^{n-1} \,:\,
        \pi_i \le (\theta n)^{-1} 
        \, \text{ for all } i
        \} \,.
\end{aligned}
\end{align}
Here, 
$\supq_\theta(a_1, \ldots, a_n)$ 
denotes the $(1-\theta)$-superquantile of the 
empirical measure $(1/n)\sum_{i=1}^n \delta_{a_i}$
and $\Delta^{n-1}$ is the probability simplex in $\reals^n$.
The discrete superquantile is thus the support function of the polytope $\Pcal_\theta$, which is illustrated in \Cref{fig:sfl:superquantile}.
Not only is the discrete superquantile a continuous function of its inputs (unlike the quantile function), but it is also convex as  it is 
the maximum of a family of linear functions in 
the expression~\eqref{eq:sq:dual}. 
}

\section{Handling Heterogeneity with \newfl} \label{sec:sfl:sfl}
\begin{figure}[t]
  \begin{center}
    \includegraphics[width=0.5\linewidth]{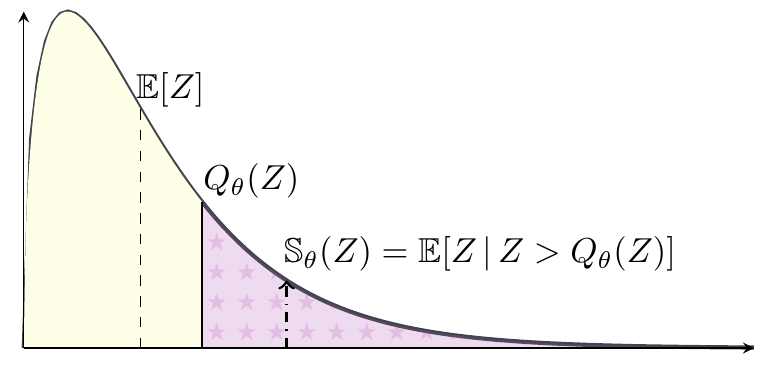}
    \includegraphics[width=0.45\linewidth]{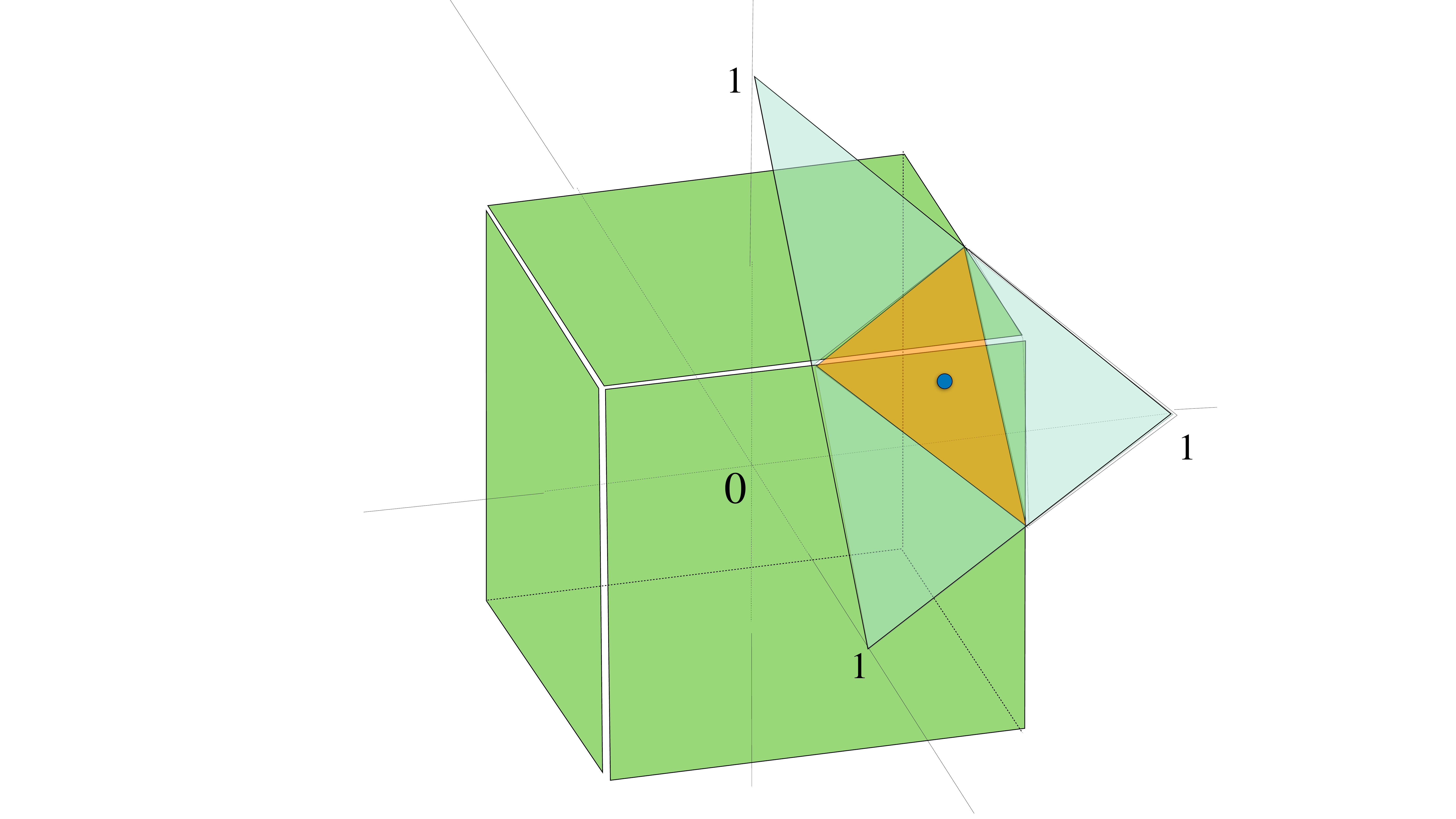}
  \end{center}
  \caption{\small{
  \textbf{Left}: $\!(1\!-\!\theta)$-quantile $Q_\theta(Z)$ and super\-quantile $\supq_\theta(Z)$ of a continuous r.v.\;$Z$.
    \update{\textbf{Right}: The set of feasible mixture weights $\pi=(\pi_1, \pi_2, \pi_3) \in \mathcal{P}_\theta$ in the dual formulation~\eqref{eq:sq:dual} is given by the intersection of the box constraints $0 \le \pi_i \le (3\theta)^{-1}$ for $i=1, 2, 3$, with the simplex constraint $\pi_1 + \pi_2 + \pi_3 = 1$.}
}} \label{fig:sfl:superquantile}
\end{figure}

In this section, we introduce the \newfl framework in \Cref{sec:sfl:framework} and propose an algorithm to optimize in the federated setting in \Cref{sec:sfl:algo}. 

\subsection{The \newfl Framework} \label{sec:sfl:framework}
\update{
The \newfl framework aims to improve the performance of the tail clients by minimizing 
the superquantile of the loss distribution.
Given a discretization
$\{\theta_1, \ldots, \theta_r\}$ of $(0, 1]$, 
\newfl maintains $r$ models $w_1, \ldots, w_r$, one for 
each \update{tail threshold} 
$\theta_j$. 
We allow each test client to select the best model $w \in \{w_1, \ldots, w_r\}$, according to its local data.
Recall the schematic in \Cref{fig:sfl:schematic_of_framework} for an illustration.

For a given tail threshold $\theta$, we propose to minimize the $(1-\theta$)-superquantile of the distributions of losses:
\begin{align} \label{eq:method:obj:supq}
\begin{aligned}
    \min_{w \in \reals^d} \left[
        F_\theta(w) :=  \supq_\theta \big(F_1(w), \ldots, F_n(w) \big) + \frac{\lambda}{2}\normsq{w}
    \right]\,.
\end{aligned}
\end{align}

The objective \eqref{eq:method:obj:supq} focuses on poor-performing clients --- specifically those with performance worse than the $(1-\theta)$-quantile of the distribution of losses $(F_1(w), \ldots, F_n(w))$.
In contrast, the vanilla FL objective optimizes $(1/n) \sum_{i=1}^n F_i(w) + \lambda/2 \normsq{w}$, which is $\lim_{\theta\to 1} F_\theta(w)$; this equally weights all clients involved in training.
At the other extreme $\theta \to 0$, we recover the worst-case loss over all clients. 
}

\update{
\myparagraph{Distributionally Robust Interpretation}
We have the following dual characterization of \newfl as a distributionally robust learning objective, as a consequence of the dual representation \eqref{eq:sq:dual} of the superquantile.
\begin{property}\label{property:main:duality}
The \newfl objective \eqref{eq:method:obj:supq} 
can also be written as 
\begin{align} \label{eq:method:obj:minmax}
\begin{aligned}
        F_\theta(w) &= \max_{\pi \in \mathcal{P}_\theta} \,\, \sum_{i=1}^n \pi_i F_i(w)\,, \\
    \text{where},\quad 
    \mathcal{P}_\theta &:= \left\{\pi \in\Delta^{n-1}\,: \, 
        \pi_i \le (n\theta)^{-1}\, \text{for all } i \in [n]\,\right\}.
\end{aligned}
\end{align}
\end{property}
This reformulation shows that \newfl can be interpreted as a distributionally robust variant of the vanilla FL objective: 
since $\sum_{i=1}^n \pi_i F_i(w) = F(w; p_\pi)$ is loss of $w$ on the mixture $p_\pi = \sum_{i=1}^n \pi_i q_i$ of the training distributions $q_1, \ldots, q_n$, we 
get that \newfl aims to minimize the 
worst-case loss over all mixtures $p_\pi $ subject to the constraint that $\pi_i \le (n\theta)^{-1}$.

This formulation also reveals two important properties of the \newfl objective. First, we note that the objective $F_\theta$, as a max function. is convex whenever the losses $F_i$ are convex. 
Second, it is a non-smooth function, with the non-smoothness stemming from the maximum over the polytope $\mathcal{P}_\theta$ (cf. \Cref{fig:sfl:superquantile}). These two properties will play important role in the convergence analysis of our federated algorithm in \Cref{sec:sfl:convergence}.
}

\begin{algorithm}[t]
	\caption{The \newfl Algorithm}
	\label{algo:sfl-private}
\begin{algorithmic}[1]
		\Require Initial iterate $w\pow{0}$,
		    number of communication rounds $T$, 
		    number of clients per round $m$, 
		    number of local updates $\tau$,
		    local step size $\gamma$
	    \For{$t=0, 1, \ldots, T-1$}
	        \State Sample $m$ clients from $[n]$ without replacement in $S$ \label{line:sfl-private:sample}
	        \State Estimate the $(1-\theta)$-quantile of 
	            $F_i(w\pow{t})$ for $i \in S$ with distributed differential privacy (\Cref{algo:sfl:quantile-dp:new}); call this $Q\pow{t}$ 
	        \label{line:sfl-private:quantile}
	        \For{each selected client $i \in S$ in parallel}
	            \State Set $\tilde \pi_i\pow{t} = \, \mathbb{I}\left(F_i(w\pow{t}) \ge Q\pow{t}\right)$
	            \label{line:sfl-private:reweigh}
	            \State Initialize $w_{k, 0}\pow{t} = w\pow{t}$
	    	    \For{$k=0, \ldots, \tau-1$} \label{line:sfl-private:local}
	    	        \State $w_{i, k+1}\pow{t} = (1-\gamma\lambda)w_{i, k}\pow{t} - \gamma \grad F_i(w_{i, k}\pow{t})$
	    	    \EndFor
	    	\EndFor
	    	\State $w\pow{t+1} = \sum_{i \in S} \tilde\pi\pow{t}_i w_{i, \tau}\pow{t} /  \sum_{i \in S} \tilde\pi_i\pow{t}$ \label{line:sfl-private:aggregation}
	    \EndFor
	    \State \Return $w_T$
\end{algorithmic}
\end{algorithm}

\subsection{Federated Optimization for \newfl } \label{sec:sfl:algo}
We now propose a federated optimization algorithm for the \newfl objective \eqref{eq:method:obj:supq}. While there could be many approaches to optimizing \eqref{eq:method:obj:supq}, 
we consider algorithms similar to \fedavg for their ability to avoid communication bottlenecks and preserve the privacy of user data. 
Owing to the tail mean interpretation of the superquantile (\Cref{fig:sfl:superquantile}), a natural algorithm to minimize it first evaluates the loss on all the clients and only performs gradient updates on those clients in the tail above the $(1-\theta)$-quantile.
However, since a practical algorithm cannot assume that all the clients are available at a given time, we perform the same operation on a subsample of clients. 

The optimization algorithm for the \newfl objective~\eqref{eq:method:obj:supq} is given in  \Cref{algo:sfl-private}. It has the following four steps:
\begin{enumerate}[label=(\alph*),nolistsep,leftmargin=\widthof{ (a) }]
    \item \textit{Model Broadcast} (\cref{line:sfl-private:sample}): The server samples a set $S$ of $m$ clients from $[n]$ and sends the current model $w\pow{t}$.
    
    \item \textit{Quantile Computation and Reweighting} (lines~\ref{line:sfl-private:quantile} and~\ref{line:sfl-private:reweigh}): 
    Selected clients $i \in S$ 
    and the server collaborate to estimate the $(1-\theta)$-quantile of the losses $F_i(w\pow{t})$ with differential privacy.
    The clients then update their weights to be zero if their loss is smaller than the estimated quantile and leave them unchanged otherwise. 
    This ensures that model updates are only aggregated from the tail clients; cf. \Cref{fig:sfl:superquantile}.
    
    \item \textit{Local Updates} (loop of line~\ref{line:sfl-private:local}): Staring from $w_{k, 0}\pow{t} = w\pow{t}$, each client $i \in S$ makes $\tau$ local gradient or stochastic gradient descent steps
    with a learning rate $\gamma$.
    
    \item \textit{Update Aggregation} (line~\ref{line:sfl-private:aggregation}): The models from the selected clients are sent to the server and aggregated to update the server model, with weights from  line~\ref{line:sfl-private:reweigh}).
\end{enumerate}
Compared to \fedavg, \newfl has the additional step of computing the quantile and new weights $\tilde \pi\pow{t}_i$ for each selected client $i \in S$ in lines~\ref{line:sfl-private:quantile} and~\ref{line:sfl-private:reweigh}. 
Let us consider \newfl in relation to the three key aspects of federated learning we introduced in \Cref{sec:sfl:setup:fl}. 
\begin{enumerate}[label=(\arabic*),nolistsep,leftmargin=\widthof{ (3) }]
    \item \textit{Communication Bottleneck}: Identical to \fedavg, \newfl algorithm performs multiple computation rounds per communication round.
    \item \textit{Statistical Heterogeneity}: 
    \update{The \newfl objective is designed to optimize the tail mean of the per-client loss distribution as formalized by the superquantile. The vanilla FL objective, in contrast, is oblivious to performance disparities across clients.}
    \item \textit{Privacy}: Identical to \fedavg, \newfl does not require any data transfer, and the aggregation of line~\ref{line:sfl-private:aggregation} can be securely performed using secure multiparty communication. 
    The extra step of quantile computation is also performed with distributed differential privacy, as we describe next. 
\end{enumerate}

\myparagraph{Quantile Estimation with Distributed Differential Privacy}
The na\"ive way to compute the quantile of the per-client losses in line~\ref{line:sfl-private:quantile} of \Cref{algo:sfl-private} is to have the clients send their losses to the server. To avoid the privacy risk of leakage of information about the clients to the server, we compute the quantile with distributed differential privacy~\cite{kairouz2021distributed} using the discrete Gaussian mechanism~\cite{canonne2020discrete}. 
The key idea behind differential privacy~\cite{dwork2006data,dwork2016calibrating} is 
to ensure that the addition or removal of the data from one client does not lead to a substantial change in the output of an algorithm. A significant difference in the output would give a privacy adversary enough signal to learn about the client who was added or removed.

\begin{algorithm}[t]
    \caption{Quantile Computation with Distributed Differential Privacy}
    \label{algo:sfl:quantile-dp:new}
\begin{algorithmic}[1]
        \Require Ring size $M$, set $S$ of $m = \vert S \vert$ clients where each client $i$ has a scalar $\ell_i \in [0, B]$, 
        target quantile $1-\theta \in (0, 1)$, 
        discretization $l_0,l_1, \ldots, l_b$ of $[0, B]$,
        variance proxy $\sigma^2$, scaling factor $c \in \ZZ_+$

        \State Each client $i$ computes a hierarchical histogram
         $x_i(r, j) = \mathbb{I}\left(  
            l_{2^r(j-1) + 1} \le F_i(w) <  l_{2^r j}
         \right)$ for $j=1, \ldots, b/2^r$ and $r = 0, \ldots, \log_2 b -1$
        
        \State Each client $i$ samples $\xi_i(r, j) \sim \Ncal_{\ZZ}(0, \sigma^2)$ i.i.d. and 
        sets $\tilde x_i(r, j) = \big(c x_i(r, j) + \xi_i(r, j) \big) \mod M$ for each $r, j$
        
        \State Compute $s = (\sum_{i \in S} \tilde x_i)\mod M$ securely
        
        \State Set hierarchical histogram $\hat h = s/c$ and define for $j \in [b]$ its cumulative sum $\hat H(j) = \sum_{(r, o) \in P_j} \hat h(r, o)$ using a maximal dyadic partition $P_j$ of $[1, j]$
        \State \Return Quantile estimate $l_{j_\theta^*(\hat h)}$ 
            corresponding to index $j_\theta^*(\hat h)$; cf. Eq.~\eqref{eq:sfl-privacy:quantile}
\end{algorithmic}
\end{algorithm}

Distributed differential privacy simulates a trusted central aggregator by using a secure summation oracle~\cite{bonawitz2017practical}, 
which enables the computation of summations $\sum_{i \in S} v_i$ where $v_i \in \reals^d$ is a privacy-sensitive vector residing with client $i$.
Practical implementations of such algorithms are based on cryptographic techniques such as secure multiparty computation~\cite{evans2018pragmatic}, which requires each component of the vectors $v_i$ to be discretized to the ring $\ZZ_M$ of integers modulo $M$.
We abstract out the details of the secure summation oracle and only require that it returns the sum $\left( \sum_{i \in S} x_v \right) \mod M$ without revealing any further information to a privacy adversary.

We assume that the losses are bounded as $F_i(w) \in [0, B]$ for each $i \in S$, and that we are given 
$b$ bin edges $0 \le l_0 < l_1 < \cdots < l_b = B$.
We aim to construct a hierarchical histogram $h$ 
that maintains the number of clients not only in every single bin but also in groups of bins organized as a binary tree.
Concretely, $h(r, j)$ maintains the number of clients whose losses lie between the bin edges $l_{2^r(j-1) + 1}$ and $l_{2^r j}$
for index $j=1, \ldots, b/2^r$ and level $r = 0, \ldots, \log_2 b -1$.%
\footnote{
We assume for simplicity that $b$ is a power of $2$ so that $\log_2 b$ is an integer.
}
The lower levels $r=0$ and $r=1$ correspond respectively to individual bins and pairs of bins, while the topmost level $r= \log_2 b - 1$ refers to two groups: the first $b/2$ bins and the last $b/2$ bins. 
We skip the topmost level in the tree because the count at this node is the publicly known number $m = \vert S \vert$ of clients. 
The hierarchical histogram method, also known as tree aggregation, is a classical technique to answer range queries and in cumulative distribution estimation~\cite{hay2010boosting,dwork2010continual,chan2011private,smith2017interaction}.

Our algorithm is given in \Cref{algo:sfl:quantile-dp:new}. 
Each client $i$ first computes its local hierarchical histogram $x_i$ as
\[
    x_i(r, j) = \mathbb{I}\left(  
        l_{2^r(j-1) + 1} \le F_i(w) <  l_{2^r j}
    \right) \,,
\]
such that the overall hierarchical histogram can be obtained as $h = \sum_{i \in S} x_i$. 
To enforce differential privacy, each client then adds random discrete Gaussian\footnote{
    See \Cref{sec:a:sfl:privacy} for a formal definition.
} noise $\xi_i \sim \Ncal_{\ZZ}(0, \sigma^2 I)$ with scale parameter $\sigma^2$ and of appropriate dimension.
These noisy $\tilde x_i$'s are summed up 
using a secure summation oracle so that
the server receives an approximate hierarchical histogram $\hat h$ which approximates the true histogram $h = \sum_{i \in S} x_i$. 
With slight abuse of notation, we still refer to $\hat h$ as a hierarchical histogram, although it could have negative entries and
could be inconsistent, i.e., 
the count $\hat h(r, j)$ at a node might not equal 
the sum $\hat h(r-1, 2j-1) + \hat h(r-1, 2j)$ of counts at its children nodes.

The final step is to define and return an appropriate notion of a $(1-\theta)$-quantile of the approximate histogram $\hat h$. 
A non-negative hierarchical histogram $h$ can be viewed as a random variable $Z$ with 
(scaled) cumulative distribution function $H(j) = m\, \prob\big( Z \le l_{j} \big) = h(0, 1) + \ldots + h(0, j)$, from which we can estimate the quantile. 
We can obtain a greater utility under differential privacy by expressing the cumulative distribution function
$H(j)$ of this random variable $Z$ by using nodes higher up in the tree. 
Concretely, using a maximal dyadic partition $P_j$ of the range $[1, j]$, we have $H(j) = \sum_{(r, o) \in P_j} h(r, o)$ from summing up $\vert P_j \vert \le \log_2 b$ terms. 
For instance, the dyadic partition for $j=15$ is $P_{15} = [1, 8] \cup [9, 12] \cup [13, 14] \cup [15]$, where the counts of each range on the right side can be obtained from an intermediate node in the hierarchical histogram $h$. 

With this definition of the cumulative mass $H(j)$, we define $(1-\theta)$-quantile of the hierarchical histogram $h$ as the quantile function of this induced random variable $Z$:
\[
    Q_\theta(H) := Q_\theta(Z) = 
    \min_{j \in [b]} \Big\{ l_{j} \,: \,
       H(j) > (1-\theta) m
    \Big\} \,.
\]

\noindent 
Similarly, for approximate hierarchical histograms $\hat h$ that are inconsistent and allow for negative values, 
we define the cumulative function $\hat H(j) = \sum_{(r, o) \in P_j} \hat h(r, o)$ from a maximal dyadic partition $P_j$ of $[1, j]$. 
As an estimate of the quantile, 
we return the bin edge $l_j$ such that the estimated cumulative mass $\hat H(j)$ is as close to $1-\theta$ as possible: 
\begin{align} \label{eq:sfl-privacy:quantile}
\begin{aligned}
    Q_\theta(\hat h) &:= l_{j^*_\theta(\hat h)}
    \quad \text{where} \quad
    j_\theta^*(\hat h) &= \argmin_{j \in [b]}
    \big\vert 
       \hat H(j) - (1-\theta) m 
    \big\vert \,.
\end{aligned}
\end{align}

\section{Theoretical Analysis} \label{sec:sfl:theory}
In this section, we analyze the convergence analysis of 
\newfl (\Cref{sec:sfl:convergence})
and study the differential privacy properties of the quantile computation (\Cref{sec:sfl:privacy}).

\subsection{Convergence Analysis} \label{sec:sfl:convergence}
We study the convergence of \Cref{algo:sfl-private} with respect to the objective~\eqref{eq:method:obj:supq}
in two cases: (i) the general non-convex case, and (ii) when each $F_i(w)$ is convex. 

\myparagraph{Assumptions}
We make some assumptions on the per-client losses $F_i$, which are assumed to hold throughout this section. For each client $i \in [n]$, the objective $F_i$ is 
\begin{enumerate}[label=(\alph*),nolistsep,leftmargin=\widthof{(a) }]
\item $B$-bounded, i.e., $0 \le F_i(w) \le B$ for all $w \in \reals^d$,
\item $G$-Lipschitz, i.e., $\vert F_i(w) - F_i(w')\vert  \le G\norm{w-w'}$ for all $w, w' \in \reals^d$, and, 
\item $L$-smooth, i.e., $F_i$ is continuously differentiable and its gradient $\grad F_i$ is $L$-Lipschitz.
\end{enumerate}

\begin{algorithm}[t]
	\caption{The \newfl Algorithm with Exact Reweighting}
	\label{algo:sfl-new}
\begin{algorithmic}[1]
		\Require Same as \Cref{algo:sfl-private}
	    \For{$t=0, 1, \ldots, T-1$}
	        \State Sample $m$ clients from $[n]$ without replacement in $S$\label{line:sfl-new:sample}
	        \State Compute $\pi\pow{t} = \argmax_{\pi \in \mathcal{P}_{\theta, S}} \sum_{i \in S} \pi_i F_i(w\pow{t})$
	        \label{line:sfl-new:reweigh}
	        \For{each selected client $i \in S$ in parallel}
	        \label{line:sfl-new:local}
	            \State Initialize $w_{i, 0}\pow{t} = w\pow{t}$
	    	    \For{$k=0, \ldots, \tau-1$}
	    	        \State $w_{i, k+1}\pow{t} = (1-\gamma\lambda)w_{i, k}\pow{t} - \gamma \grad F_i(w_{i, k}\pow{t})$
	    	    \EndFor
	    	\EndFor
	    	\State $w\pow{t+1} = \sum_{i \in S} \pi_i\pow{t} w_{i, \tau}\pow{t}$ \label{line:sfl-new:aggregation}
	    \EndFor
	    \State \Return $w_T$
\end{algorithmic}
\end{algorithm}

\myparagraph{Equivalent Algorithm}
\Cref{algo:sfl-private} is not amenable to theoretical analysis as it is stated because the quantile function of discrete random variables computed in line~\ref{line:sfl-private:quantile} is piecewise constant and discontinuous. To overcome this obstacle, we introduce a near-equivalent algorithm in \Cref{algo:sfl-new}, 
which replaces the reweighting step of \Cref{algo:sfl-private} (lines~\ref{line:sfl-private:quantile} and~\ref{line:sfl-private:reweigh}) with the ideal reweighting suggested by the dual representation of~\eqref{eq:method:obj:minmax}. 

Let us start with the case of $S = [n]$. 
Our first observation shows that 
the weights $\pi\pow{t}$ that attain the maximum over $\pi$ in the objective \eqref{eq:method:obj:minmax} can be used to construct a subgradient of $F_\theta$ in the general nonconvex case --- this will eventually allow us to derive convergence guarantees. 
\begin{property} \label{prop:sfl:subdiff-property}
    Fix a $w \in \reals^d$ and let $\pi^\star \in \argmax_{\pi \in \Pcal_\theta} \sum_{i=1}^n \pi_i F_i(w)$. Then, we have, 
    \[
        \sum_{i=1}^n \pi_i^\star F_i(w) + \lambda w \in \partial F_\theta(w) \,,
    \]
    where $\partial F_\theta(w)$ denotes the regular subdifferential of $F_\theta$. 
\end{property}
\begin{proof}
    Let $h_\theta(a) := \max_{\pi \in \Pcal_\theta} \pi\T a$ denote the support function of the polytope $\Pcal_\theta$, 
    and let $g_n(w) = (F_1(w), \ldots, F_n(w))$ denote the concatenation of the losses into a vector.
    Then, $F_\theta(w) = h_\theta \circ g_n (w) + (\lambda/2)\normsq{w}$. 
    Since $h_\theta$ is convex, we get that its (convex) subdifferential~\cite[e.g.,][Cor.\;4.4.4]{hiriart1996convex} is 
    \[
        \partial h_\theta(a) = \argmax_{\pi \in \Pcal_\theta} \pi\T a \,. 
    \]
    Since $g_n$ is smooth and $h_\theta$ is convex with full domain, we
    obtain the regular subdifferential of $h_\theta \circ g_n$ by
    the chain rule~\cite[][Thm. 10.6]{rockafellar2009variational} as
    \[
        \partial (h_\theta \circ g_n) = \grad g_n(w) \partial h_\theta\big( g_n(w) \big) \,,
    \]
    where $\grad g_n(w) \in \reals^{d \times n}$ is the transpose of the Jacobian matrix of $g_n$. 
    We can handle the regularization by absorbing it into the superquantile by defining $\tilde F_i(w) = F_i(w) + (\lambda/2)\normsq{w}$.
\end{proof}

\Cref{algo:sfl-new} extends this intuition to the setting where only a subsample $S \subset [n]$ of clients are available in each round. 
We define the counterpart of the constraint set $\mathcal{P}_\theta$ from \eqref{eq:method:obj:minmax} 
defined on a subset $S \subset [n]$ of $m$ clients as:
\begin{align} \label{eq:sfl:constraint-subsample}
    \mathcal{P}_{\theta, S} = \left\{\pi \in \Delta^{\vert S\vert -1} \, :\, 
    \pi_i \le \frac{1}{\theta m},\text{ for } i \in S
    \right\} \,,
\end{align}
where 
we denote $(\pi_i)_{i \in S} \in \reals^{\vert S\vert }$ by $\pi$ with slight abuse of notation.
With this notation, \Cref{algo:sfl-new} computes the new weights of the clients as 
\[
    \pi\pow{t} = \argmax_{\pi \in \mathcal{P}_{\theta, S}} \sum_{i \in S} \pi_i F_i(w\pow{t}) \,.
\]

We now analyze how close \Cref{algo:sfl-new} is to 
\Cref{algo:sfl-private}. 
Let $Z(w)$ be a discrete random variable which takes the value $F_i(w)$ with probability $1/n$ for $i = 1,\ldots, n$, 
and let $Q_\theta(Z(w))$ denote its $(1-\theta)$-quantile. 
The weights $\hat \pi \in \Delta^{n-1}$ considered in \Cref{algo:sfl-private} (assuming that $Q\pow{t}$ is the exact quantile of $\{F_i(w\pow{t}) \, :\, i \in S\}$) are given by a
hard-thresholding based on whether $F_i(w)$ is larger than its $(1-\theta)$-quantile:
\begin{align} \label{eq:sfl-reweight-via-quantile}
    \tilde \pi_i =
        \mathbb{I}\big(F_i(w) \ge Q_\theta(Z(w))\big)\,,
    \quad
    \text{and}, \quad
    \hat \pi_i = \frac{\tilde \pi_i}{\sum_{i'=1}^n \tilde \pi_{i'}}.
\end{align}
The objective defined by these weights is $\hat F_\theta(w) = \sum_{i=1}^n \hat \pi_i F_i(w) + (\lambda/2) \normsq{w}$.
The next proposition shows that $\hat F_\theta(w) = F_\theta(w)$ under certain conditions, or is a close approximation, in general.  
\begin{proposition} \label{prop:sfl-quantile}
    Assume  $F_1(w) < \cdots < F_n(w)$
    and let $i^\star = \lceil \theta n \rceil$. 
    Then, we have,
    \begin{enumerate}[label=(\alph*),nolistsep,leftmargin=\widthof{(a) }]
        \item $\pi^\star = \argmax_{\pi \in \Pcal_\theta} \sum_{i=1}^n \pi_i F_i(w)$ is unique,
        \item $Q_\theta(Z(w)) = F_{i^\star}(w)$,
        \item if $\theta n$ is an integer, then
            $\hat \pi = \pi^\star$ so that $\hat F_\theta(w) = F_\theta(w)$, and,
        \item if $\theta n$ is not an integer, then
            \[
                0 \le F_\theta(w) - \hat F_\theta(w) 
                \le \frac{B}{\theta n}  \,.
            \]
    \end{enumerate}
\end{proposition}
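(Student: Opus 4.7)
My plan is to prove all four parts by giving an explicit, closed-form description of both the dual maximizer $\pi^\star$ and the hard-thresholded weights $\hat\pi$, and then comparing them coordinate-by-coordinate. Since $\mathcal{P}_\theta$ is the intersection of the simplex with the uniform box constraint $\pi_i \le (\theta n)^{-1}$, the linear maximization in \eqref{eq:sq:dual} admits a ``water-filling'' solution that I can read directly off the ordering of the $F_i(w)$'s.

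The core structural lemma I would prove is the following: with $k := \lfloor \theta n\rfloor$ and $\alpha := \theta n - k \in [0,1)$, the maximizer $\pi^\star$ saturates the box constraint $\pi^\star_i = (\theta n)^{-1}$ on the $k$ indices with the largest losses, places the residual mass $\alpha/(\theta n)$ on the straddling index (the next one in the ordering), and is zero elsewhere. Feasibility is immediate since $\sum_i \pi^\star_i = (k+\alpha)/(\theta n) = 1$ and the box constraint is respected; uniqueness (part (a)) follows from the strict ordering $F_1(w) < \cdots < F_n(w)$, because any feasible perturbation must transfer positive mass from a larger-loss index to a strictly smaller-loss one, strictly decreasing the objective. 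Part (b) is then a one-line evaluation of $\prob(Z(w) > F_j(w)) = (n-j)/n$ in the definition $Q_\theta = \inf\{\eta : \prob(Z(w) > \eta) \le \theta\}$, identifying the threshold bin that matches $F_{i^\star}$.

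For (c), when $\theta n$ is an integer ($\alpha = 0$), the residual term in $\pi^\star$ vanishes and $\pi^\star$ is uniform $(\theta n)^{-1}$ on exactly the top $\theta n$ indices; the hard-thresholded $\hat\pi$ assigns the same uniform weight on the same set, so $\hat\pi = \pi^\star$ and therefore $\hat F_\theta(w) = F_\theta(w)$. For (d) with $\alpha > 0$, both $\pi^\star$ and $\hat\pi$ are supported on the top $k+1$ indices, but $\pi^\star$ gives the top $k$ weight $(\theta n)^{-1}$ and the straddling index weight $\alpha/(\theta n)$, whereas $\hat\pi$ is uniform at $1/(k+1)$. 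Subtracting and collecting terms, the differences cancel into
\[
    F_\theta(w) - \hat F_\theta(w) = \frac{1-\alpha}{(k+\alpha)(k+1)} \sum_{i=n-k+1}^n \bigl(F_i(w) - F_{n-k}(w)\bigr),
\]
which is nonnegative by the strict ordering, and using $0 \le F_i(w) - F_{n-k}(w) \le B$ together with $k(1-\alpha)/(k+1) \le 1$ is bounded above by $B/(k+\alpha) = B/(\theta n)$.

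The main obstacle will be the careful bookkeeping around the straddling index $n-k$: one must reconcile, in both the integer and non-integer cases, the ``water-filling'' support and weights of $\pi^\star$ with the thresholded support of $\hat\pi$ under the convention for $Q_\theta$ and the indicator $\mathbb{I}(F_i(w) \ge Q_\theta)$. Once this support-and-weight matching is pinned down, both the identity $\hat\pi = \pi^\star$ in (c) and the telescoping cancellation leading to the $B/(\theta n)$ bound in (d) fall out from elementary estimates.
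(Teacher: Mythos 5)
Your proposal is correct and takes essentially the same route as the paper: both rest on the explicit form of the maximizer (weight $1/(\theta n)$ on the top $\lfloor \theta n\rfloor$ losses plus the residual mass on the straddling index) — which the paper obtains by citing the discrete tail-mean formula of Rockafellar--Uryasev while you derive it directly as the water-filling solution of the linear program over $\Pcal_\theta$ — followed by a coordinate-wise comparison with the uniform thresholded weights $\hat\pi$. Your closed-form identity for $F_\theta(w) - \hat F_\theta(w)$ in part (d) checks out and simply makes explicit the ``elementary manipulations'' the paper leaves to the reader, yielding the same $B/(\theta n)$ bound.
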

\begin{proof}
    We assume w.l.o.g. that $\lambda = 0$.
    We apply the property that the superquantile is a tail mean (cf. \Cref{fig:sfl:superquantile}) for discrete random variables~\citep[][Proposition 8]{rockafellar2002conditional} to get
    \[
        F_\theta(w) =  \frac{1}{\theta n} \sum_{i=i^\star+1}^n  F_i(w)  + \left( 1- \frac{\lfloor \theta n \rfloor }{\theta n} \right) F_{i^\star}(w) \,.
    \]
    Comparing with dual representation \eqref{eq:method:obj:minmax}, this gives a closed-form expression for $\pi^\star$, which is unique because 
    $F_{i^\star - 1}(w) < F_{i^\star}(w) < F_{i^\star + 1}(w)$.
    For (b), note that 
    $Q_\theta(Z(w)) = \inf\{\eta \in \reals \, : \, \prob(Z(w) > \eta) \le  \theta \}$ equals
    $F_{i^\star}(w)$ by definition of $i^\star$.
    Therefore, if $\theta n$ is an integer,
    $\pi^\star$ coincides exactly with $\hat \pi$.
    When $\theta n$ is not an integer, we have 
    \[
        \hat F_\theta(w) = \frac{1}{n - i^\star + 1} \sum_{i=i^\star}^n F_i(w)  \,.
    \]
    The bound on $\hat F_\theta(w) - F_\theta(w)$ follows from
    elementary manipulations together with 
    $0 \le F_i(w) \le B$. 
\end{proof}

In our context where we sample $m$ clients per round,
\Cref{prop:sfl-quantile} holds for each round.
In particular, part (c) of \Cref{prop:sfl-quantile} states that when $\theta m$ is an integer, the weights $\pi^\star$ computed as an exact argmax in \Cref{algo:sfl-new} are identical to the weights $\hat \pi$ in
\Cref{algo:sfl-private} where line~\ref{line:sfl-private:reweigh} exactly computes the quantile of the per-client losses. 
We record another consequence of \Cref{prop:sfl-quantile}, namely, that the reweighting $\pi\pow{t}$ is sparse. 
\begin{remark} \label{remark:sfl-client-filtering}
\Cref{prop:sfl-quantile} shows that \newfl's reweighting $\pi\pow{t}$ (line~\ref{line:sfl-new:reweigh} of \Cref{algo:sfl-new}) is sparse. That is, $\pi_i\pow{t}$ is non-zero only for exactly $\lceil \theta m\rceil$ clients with the largest losses. 
\end{remark}

\myparagraph{Bias due to Partial Participation}
Note that the dual representation \eqref{eq:method:obj:minmax} is the maximum over all distributions in $\Pcal_\theta$, but \Cref{algo:sfl-new} and \Cref{algo:sfl-private} only maximize the weights over a set $S$ of $m$ clients in each round (line~\ref{line:sfl-new:reweigh}). Therefore, the updates performed by \Cref{algo:sfl-new} are not unbiased. 
To formalize this, define the objective 
\begin{align*}
    \overline F_\theta(w) := \expect_{S \sim U_m}\left[ F_{\theta, S}(w) \right],
    \quad \text{where }
    F_{\theta, S}(w) = \max_{\pi \in \Pcal_{\theta, S}} \sum_{i \in S} \pi_i F_i(w) + \frac{\lambda}{2}\normsq{w}
\end{align*}
is the analogue of  \eqref{eq:method:obj:minmax} defined on a sample $S \subset [n]$ of clients, and $U_m$ is the uniform distribution over subsets of $[n]$ of size $m$. 
In each step, \Cref{algo:sfl-new} approximates the subgradients of $F_{\theta, S}$. Indeed, \Cref{prop:sfl:subdiff-property} gives 
\begin{align} \label{eq:sfl-ncvx:subdiff-partial}
    \sum_{i\in S} \pi_i\pow{t} F_i(w\pow{t}) + \lambda w\pow{t} \in \partial F_{\theta, S}(w\pow{t}) \,.
\end{align}
In expectation, \Cref{algo:sfl-new} therefore takes subgradient steps for $\overline F_\theta$ --- this introduces a bias when compared to the original $F_\theta$ that we would like to optimize.
Fortunately, this bias can be bounded as~\cite[Prop. 1]{levy2020largescale}
\begin{align} \label{eq:sfl-main:bias-bound:no-smooth}
    \sup_{w \in \reals^d} \left\vert\overline F_{\theta}  (w)
    - F_{\theta}  (w)\right\vert \le \frac{B}{\sqrt{\theta m}} \,.
\end{align}
Our analysis strategy will be to study the convergence (near-stationarity or near-optimality) in terms of the objective $\overline F_\theta$ which \Cref{algo:sfl-new} actually minimizes, and then translate that to a convergence result on the original objective $F_\theta$ using the bound \eqref{eq:sfl-main:bias-bound:no-smooth}. 

\myparagraph{Convergence: Nonconvex Case}
We start with the convergence analysis in the nonconvex case with no regularization (i.e., $\lambda = 0$). 
Since $\overline F_\theta$ is nonsmooth and nonconvex, we 
state the convergence guarantee in terms of the Moreau envelope of $\overline F_\theta$~\cite{hiriart1996convex} following the idea of~\cite{drusvyatskiy2019efficiency,davis2019stochastic}.
Given a parameter $\mu > 0$, we define the Moreau envelope of $\overline F_\theta$ as 
\begin{align} \label{eq:sfl:moreau-env}
    \overline \Phi_\theta^\mu(w) = \inf_{z \in \reals^d} \left\{
    \overline F_\theta(z) + \frac{\mu}{2} \normsq*{w-z} 
    \right\} \,.
\end{align}
The Moreau envelope satisfies several remarkable properties for $\mu > L$~\cite[][Lemma 4.3]{drusvyatskiy2019efficiency}.  
First, it is well-defined, and the infimum on the right-hand side admits a unique minimizer, called the proximal point of $w$, and denoted $\prox_{\overline F_\theta/\mu}(w)$.
Second, the Moreau envelope is continuously differentiable
with $\grad \overline \Phi_\theta^\mu(w) = \mu(w - \prox_{\overline F_\theta/\mu}(w))$.
Finally,
the stationary points of $\overline \Phi_\theta^\mu$ and $\overline F_\theta$ coincide. 
Interestingly, 
the bound $\norm{\grad \overline \Phi^\mu_\theta(w)} \le \epsilon$ directly implies a near-stationarity on $\overline F_\theta$, and hence the original $F_\theta$, in the following variational sense: the proximal point $z=  \prox_{\overline F_\theta/\mu}(w)$
satisfies~\cite[Sec. 4.1]{drusvyatskiy2019efficiency}:
\begin{enumerate}[label=(\alph*),nolistsep,leftmargin=\widthof{(a) }]
    \item $z$ is close to $w$; that is, $\norm{z - w} \le \eps / \mu$,
    \item $z$ is nearly stationary on $\overline F_\theta$; that is $\mathrm{dist}\left(0, \partial \overline F_\theta(z)\right) \le \epsilon$, where $\partial\overline F_\theta$ refers
    to the regular subdifferential, and, 
    \item $\overline F_\theta$ is uniformly close to $F_\theta$ as per \eqref{eq:sfl-main:bias-bound:no-smooth}.
\end{enumerate}
Thus, we state the convergence guarantee of our algorithm in the nonsmooth nonconvex case in terms of $\overline \Phi^\mu_\theta$ (although it never appears in the algorithm).

\begin{theorem} \label{thm:sfl:nonsmooth-nonconvex:main}
    Let the number of rounds $T$ be fixed and set $\mu = 2L$.
    Denote $\Delta F_0 = F_\theta(w\pow{0}) - \inf \overline F_\theta$.
    Let $\hat w$ denote a uniformly random sample from the sequence $\big(w\pow{0}, \ldots, w\pow{T-1}\big)$ produced by \Cref{algo:sfl-new}.
    Then, there exists a learning rate $\gamma$ depending on the number of rounds $T$ and problem parameters $\tau, L, G, \Delta F_0$ such that 
    \[
        \expect\normsq*{\grad \overline \Phi^\mu_\theta(\hat w)} 
        \le 
        \sqrt{\frac{\Delta_0 LG^2}{T}}
        + (1-\tau^{-1})^{1/3} \left(\frac{\Delta_0 L G }{T}\right)^{2/3}
        + \frac{\Delta_0 L}{T}\,.
    \]
\end{theorem}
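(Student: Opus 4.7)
The plan is to treat the Moreau envelope $\overline \Phi_\theta^\mu$ as a Lyapunov function, following the Davis--Drusvyatskiy recipe for stochastic subgradient methods on weakly convex objectives. The starting observation is that each per-client loss $F_i$ being $L$-smooth implies $F_i + (L/2)\normsq{\cdot}$ is convex, and since the constraint $\sum_{i \in S} \pi_i = 1$ is preserved in the maximum, $F_{\theta,S} + (L/2)\normsq{\cdot}$ is a maximum of convex functions and hence convex. Thus $F_{\theta,S}$ (and by averaging, $\overline F_\theta$) is $L$-weakly convex, so with $\mu = 2L > L$ the envelope is well-defined, continuously differentiable, and admits the identity $\norm{\grad \overline \Phi_\theta^\mu(w)} = \mu\,\norm{w - \prox_{\overline F_\theta/\mu}(w)}$.

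The core is a one-step descent inequality. For $z\pow{t} := \prox_{\overline F_\theta/\mu}(w\pow{t})$, the envelope's definition gives $\overline \Phi_\theta^\mu(w\pow{t+1}) \le \overline F_\theta(z\pow{t}) + (\mu/2)\normsq{w\pow{t+1} - z\pow{t}}$. I expand the square around $w\pow{t}$, and decompose the algorithmic update as $w\pow{t+1} - w\pow{t} = -\gamma\tau\, g\pow{t} - \gamma\, e\pow{t}$, where $g\pow{t} := \sum_{i \in S}\pi_i\pow{t}\grad F_i(w\pow{t})$ is the valid subgradient of $F_{\theta,S}$ supplied by the inclusion~\eqref{eq:sfl-ncvx:subdiff-partial}, and
\[
e\pow{t} := \sum_{i \in S}\pi_i\pow{t}\sum_{k=1}^{\tau-1}\bigl(\grad F_i(w_{i,k}\pow{t}) - \grad F_i(w\pow{t})\bigr)
\]
is the drift from the $\tau > 1$ local steps. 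Using weak convexity of $F_{\theta,S}$ along the segment $[w\pow{t}, z\pow{t}]$ to bound $\overline F_\theta(z\pow{t}) - \overline F_\theta(w\pow{t})$, and taking expectation over the sampling $S \sim U_m$ conditional on $w\pow{t}$, produces a recursion of the form
\[
\expect\overline \Phi_\theta^\mu(w\pow{t+1}) \le \expect\overline \Phi_\theta^\mu(w\pow{t}) - c_1\,\gamma\tau\,\expect\normsq{\grad \overline \Phi_\theta^\mu(w\pow{t})} + c_2\,\gamma^2\tau^2 L G^2 + c_3\,\gamma^2\,\expect\normsq{e\pow{t}}.
\]

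To control the drift, I use that $\norm{\grad F_i} \le G$, hence $\norm{w_{i,k}\pow{t} - w\pow{t}} \le \gamma k G$, and then apply $L$-smoothness to obtain $\normsq{e\pow{t}} \lesssim \gamma^2 L^2 G^2\, \tau(\tau-1)(2\tau-1)$, which yields a term of the order $\gamma^4 L^2 G^2\, \tau^3(1-\tau^{-1})$ per step (vanishing for $\tau=1$). Telescoping over $t=0,\ldots,T-1$, dividing by $T$, and using $\overline \Phi_\theta^\mu(w\pow{0}) - \inf \overline \Phi_\theta^\mu \le \Delta F_0$ together with the $G^2$-bound on $\normsq{g\pow{t}}$ gives
\[
\frac{1}{T}\sum_{t=0}^{T-1}\expect\normsq{\grad \overline \Phi_\theta^\mu(w\pow{t})} \;\le\; O\!\left(\frac{\Delta F_0}{\gamma\tau T} + \gamma\tau L G^2 + \gamma^3\tau^3 L^3 G^2 (1-\tau^{-1})\right).
\]
Choosing $\gamma$ so as to balance the dominant pair of error terms in each regime of $T$ yields the three summands of the claimed rate (the $\sqrt{LG^2/T}$ term from balancing the first two, the $(1-\tau^{-1})^{1/3}(LG/T)^{2/3}$ term from balancing the first and third, and $\Delta_0 L/T$ from the short-horizon regime where the learning rate is capped by $1/L$).

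The main obstacle is the careful bookkeeping of the conditioning in the weakly convex setting: $g\pow{t}$ is a subgradient of the \emph{random} function $F_{\theta,S}$, not of $\overline F_\theta$, so one must verify that the conditional expectation $\expect[g\pow{t} \mid w\pow{t}]$ enters the descent inequality correctly via weak convexity of $F_{\theta, S}$ applied pathwise (rather than by interchanging expectations and subdifferentials). A secondary technicality is ensuring the drift bound on $e\pow{t}$ holds uniformly over realizations of $S$ so that it can be plugged in before taking expectations. Once these points are handled, the last step is to translate near-stationarity of $\overline \Phi_\theta^\mu$ into near-stationarity of $F_\theta$ via the variational properties recalled before the theorem combined with the uniform bias bound~\eqref{eq:sfl-main:bias-bound:no-smooth}.
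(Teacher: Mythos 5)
Your overall route is the same as the paper's: a Moreau-envelope Lyapunov analysis in the style of Davis--Drusvyatskiy, the same decomposition of the update into a single step of size $\tau\gamma$ at $w\pow{t}$ plus a drift term $e\pow{t}$, the same use of \eqref{eq:sfl-ncvx:subdiff-partial} together with weak convexity of $F_{\theta,S}$ and $\Fcal\pow{t}$-measurability of $z\pow{t}=\prox_{\overline F_\theta/\mu}(w\pow{t})$ to pass to $\overline F_\theta$ under the conditional expectation, and the same prox-optimality argument to generate the negative $\normsq{\grad\overline\Phi_\theta^\mu(w\pow{t})}$ term.

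However, there is a genuine gap in your drift accounting. Expanding $\frac{\mu}{2}\normsq{z\pow{t}-w\pow{t+1}}$ with $w\pow{t+1}-w\pow{t}=-\gamma\tau g\pow{t}-\gamma e\pow{t}$ produces, besides the terms you list, the first-order cross term $\mu\gamma\inp{z\pow{t}-w\pow{t}}{e\pow{t}}$, which does not appear in your recursion: you only charge the drift through $c_3\gamma^2\normsq{e\pow{t}}$, i.e.\ a per-step contribution of order $\gamma^4$. The cross term cannot be ignored: bounding it by Young's inequality and absorbing the $\normsq{z\pow{t}-w\pow{t}}$ part into (half of) the negative gradient term forces a penalty of order $\frac{\gamma}{\tau}\normsq{e\pow{t}}\sim \gamma^3 L^2G^2\tau^2(1-\tau^{-1})$ per step, i.e.\ an $O(\gamma^2)$ term after dividing by $\gamma\tau$ --- this is exactly the paper's $\frac{\mu\gamma L^2}{2(\mu-L)}d\pow{t}$ term with $d\pow{t}\lesssim \tau^2(\tau-1)\gamma^2G^2$, and it is what yields the middle term $(1-\tau^{-1})^{1/3}(\Delta_0 LG/T)^{2/3}$ when balanced against $\Delta F_0/(\gamma\tau T)$. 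With your understated $\gamma^3$-order (normalized) drift term, the balancing you invoke would instead give a $T^{-3/4}$-type term, so your final step is internally inconsistent with your own bound: the claimed rate is correct, but the derivation as written does not produce it. The fix is precisely the paper's Cauchy--Schwarz/Young splitting of $\mu\gamma\inp{z\pow{t}-w\pow{t}}{e\pow{t}}$, after which one recovers $\expect\normsq{\grad\overline\Phi_\theta^\mu(\hat w)}\lesssim \frac{\Delta F_0}{\tau\gamma T}+\tau\gamma LG^2+\tau(\tau-1)\gamma^2L^2G^2$ and the stated choice of $\gamma$ gives the three terms of the theorem. (A final small point: the translation to near-stationarity of $F_\theta$ via \eqref{eq:sfl-main:bias-bound:no-smooth} that you mention is not needed for the statement itself, which is already phrased in terms of $\overline\Phi_\theta^\mu$.)
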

\begin{proof}[Proof Sketch]
Let $z\pow{t} = \prox_{\overline F_\theta/\mu}(w\pow{t})$ be the proximal point of $w\pow{t}$.
We expand out the recursion $w\pow{t+1} = w\pow{t} - \gamma \sum_{i\in S} \pi_i\pow{t} \sum_{k=0}^{\tau-1} \grad F_i(w_{i, k}\pow{t})$ to get 
\begin{align*} \nonumber
    \overline \Phi_\theta^\mu&(w\pow{t+1})
    \le \overline F_\theta(z\pow{t}) + \frac{\mu}{2} \normsq*{z\pow{t} - w\pow{t+1}} \\ \nonumber
    &
    \begin{aligned}
    =  \, \overline F_\theta(z\pow{t}) + \frac{\mu}{2} \normsq*{z\pow{t} - w\pow{t}} 
        &+ \mu \gamma\inp*{z\pow{t} - w\pow{t}}{\sum_{i \in S}\pi_i\pow{t} \sum_{k=0}^{\tau-1} \grad F_i(w_{i, k}\pow{t})} 
        \\&+ \frac{\mu\gamma^2}{2} \normsq*{\sum_{i \in S} \pi_i\pow{t} \sum_{k=0}^{\tau-1} \grad F_i(w_{i, k}\pow{t})} 
    \end{aligned}
    \\
    &= \overline \Phi_\theta^\mu(w\pow{t}) + \Tcal_1 + \Tcal_2 \,.
\end{align*}
The term $\Tcal_1$ which carries a $O(\gamma)$-coefficient controls the convergence rate while $\Tcal_2$ carries a $O(\gamma^2)$-coefficient and is a noise term. The latter can be controlled 
by making the learning rate small.
We can handle the first term $\Tcal_1$ by leveraging a property of $F_{\theta, S}$ known as \emph{weak convexity}, meaning that adding a quadratic makes it convex. In particular, $F_{\theta, S} + (L/2) \normsq{\cdot}$ is convex, so that 
\begin{align*}
    \Tcal_1' &:= \mu\tau\gamma \inp*{z\pow{t} - w\pow{t}}{\sum_{i \in S}\pi_i\pow{t}  \grad F_i(w\pow{t})}
    \\ &\le
    \mu\tau\gamma\left(F_{\theta, S}(z\pow{t}) - F_{\theta, S}(w\pow{t}) + \frac{L}{2} \normsq*{z\pow{t}-w\pow{t}}\right) \,,
\end{align*}
where we used \eqref{eq:sfl-ncvx:subdiff-partial} to construct a subgradient of $F_{\theta, S}$. 
This term $\Tcal_1'$ is the result of a single step with learning rate $\tau\gamma$ rather than $\tau$ local steps with learning rate $\gamma$. The difference $\Tcal_1' - \Tcal_1$ is the effect of the drift induced by multiple local steps, which we will handle later. 
We take an expectation with respect to the sampling $S$ of clients (i.e., conditioned on $\Fcal\pow{t} = \sigma(w\pow{t})$, the $\sigma$-algebra generated by $w\pow{t}$). 
Since $z\pow{t}$ is independent of $S$ (i.e., $z\pow{t}$ is $\Fcal\pow{t}$-measurable), 
we get $\overline F_\theta$ on the right-hand side.
Next, we use that $z\pow{t}$ minimizes the strongly convex right hand side of \eqref{eq:sfl:moreau-env} to get
\begin{align*}
    \expect_t[\Tcal_1']
    &\le
    -\mu\tau\gamma(\mu-L) \normsq*{z\pow{t} - w\pow{t}}
    = -\frac{\tau\gamma(\mu-L)}{\mu}\normsq*{\grad \overline \Phi_\theta^\mu(w\pow{t})} \,.
\end{align*}
Next, we bound the effect of the drift using the Cauchy-Schwarz inequality and the smoothness of $F_i$'s as 
\begin{align*}
    \expect_t\left\vert 
    \Tcal_1 - \Tcal_1'
    \right\vert
    &=
    \mu\gamma \, \expect_t \left\vert 
    \inp*{z\pow{t}-w\pow{t}}{\sum_{i \in S} \pi_i\pow{t} \sum_{k=0}^{\tau-1} \left(  
        \grad F_i(w_{i, k}\pow{t}) - \grad F_i(w\pow{t})
    \right)}
    \right\vert \\
    &\le \frac{\mu\tau\gamma(\mu-L)}{2}\normsq{z\pow{t}-w\pow{t}}
        + \frac{\mu\gamma L^2}{2(\mu - L)} \,
        \expect_t \left[ 
        \sum_{i\in S}\pi_i\pow{t} \sum_{k=0}^{\tau-1} \normsq{w_{i, k}\pow{t} - w\pow{t}}  \right]
        \\
    &\le \frac{\tau \gamma(\mu - L)}{2\mu} \normsq*{\grad \overline \Phi_\theta^\mu(w\pow{t})} + O(\gamma^3) \,,
\end{align*}
where we bound the client drift $d\pow{t} =  \expect_t\left[ \sum_{i\in S}\pi_i\pow{t} \sum_{k=0}^{\tau-1} \normsq{w_{i, k}\pow{t} - w\pow{t}} \right] = O(\gamma^2)$ using standard techniques. 
We plug in $\mu = 2L$ to get a bound on $\Tcal_1'$ in terms of $\normsq*{\grad \overline \Phi_\theta^\mu(w\pow{t})}$. 
A standard argument to handle the noise term $\Tcal_2 \le O(\gamma^2)$ and telescoping the resulting inequality over 
$t=0, \ldots, T-1$ completes the proof. 
The full details are given in \Cref{sec:a:sfl:expt:ncvx}.
\end{proof}

\myparagraph{Convergence: Convex Case}
We consider the convergence of function values in the case where each $F_i$ is convex. Owing to the non-smoothness of $F_\theta$ and $\overline F_\theta$, we consider the following smoothed version of the objective in \eqref{eq:method:obj:minmax} and the corresponding modification to \Cref{algo:sfl-new}. 
First, define the Kullback-Leibler (KL) divergence between $\pi \in \Delta^{\vert S\vert -1}$ and the uniform distribution $(1/\vert S\vert , \ldots, 1/\vert S\vert )$ over $S \subset [n]$ as 
\[
    D_S(\pi) = \sum_{i\in S} \pi_i \log (\pi_i \, \vert S\vert ) \,.
\]
We simply write $D(\pi)$ when $S = [n]$.
Inspired by \cite{nesterov2005smooth,beck2012smoothing,devolder2014firstorder}, we define the smooth counterpart to \eqref{eq:method:obj:minmax} as
\begin{align} \label{eq:sfl-entropic-smoothing}
    F_\theta^\nu(w) = \max_{\pi \in \mathcal{P}_\theta} 
        \left\{ \sum_{i=1}^n \pi_i F_i(w) - \nu D(\pi) \right\} + \frac{\lambda}{2}\normsq{w} \,,
\end{align}
where $\nu > 0$ is a fixed smoothing parameter. 
We have that $\vert F_\theta^\nu(w) - F_\theta(w)\vert \le 2 \nu \log n$.
Finally, we modify line~\ref{line:sfl-new:reweigh} of \Cref{algo:sfl-new} to handle $F_\theta^\nu$ rather than $F_\theta$ as
\begin{align} \label{eq:sfl:smooth-argmax}
    \pi\pow{t} = \argmax_{\pi \in \mathcal{P}_{\theta, S}} \left\{ \sum_{i \in S} \pi_i F_i(w\pow{t}) - \nu D_S(\pi)  \right\} \,.
\end{align}

\begin{theorem} \label{thm:sfl-main:cvx}
    Suppose each function $F_i$ is convex
    and $0 < \lambda < L$. Define the condition number $\kappa = (L + \lambda)/\lambda$ and fix a time horizon $T \ge 16 \kappa^{3/2}$. Consider the sequence  $(w\pow{t})_{t=0}^T$ of iterates produced by the \Cref{algo:sfl-new} with line~\ref{line:sfl-new:reweigh} replaced by \eqref{eq:sfl:smooth-argmax}.
    Define the averaged iterate
    \[
        \overline w\pow{t} = \frac{\sum_{j=0}^t \beta_j w\pow{i}}{\sum_{i=0}^t \beta_j}, \quad \text{where} \quad
        \beta_j = \left(1 - \frac{\gamma\lambda \tau}{2}\right)^{-(1 + j)} \,,
    \]
    and $w^\star = \argmin_{w\in\reals^d} F_\theta(w)$. 
    Then, there exist learning rate $\gamma$ and smoothing parameter $\nu$ depending on the number of communication rounds $T$ as well as problem parameters $\tau, G, \lambda, L, \normsq{w\pow{0} - w^\star}, \theta, m$, 
    such that the iterate $\overline w\pow{T}$ satisfies the bound
    \begin{align*}
        \expect F_\theta(\overline w\pow{T})
        - F_\theta(w^\star) \le 
         \lambda \normsq{w\pow{0} - w^\star} \exp\left(- \frac{T} {16 \kappa^{3/2}}\right)
    + \frac{G^2}{\lambda T}
    + \frac{G^2 \kappa^2}{\lambda T^2} + \frac{B}{\sqrt{\theta m}}  \,,
    \end{align*}
    where we hide absolute constants and factors polylogarithmic in $T$ and problem parameters. 
\end{theorem}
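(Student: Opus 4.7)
The plan is to analyze \Cref{algo:sfl-new} with the entropically smoothed reweighting \eqref{eq:sfl:smooth-argmax} as a local-SGD procedure on the smooth surrogate $F_\theta^\nu$ of \eqref{eq:sfl-entropic-smoothing}, and then translate the guarantee back to the original objective $F_\theta$. The two translation bounds available are the entropic smoothing error $|F_\theta^\nu(w) - F_\theta(w)| \le 2\nu \log n$ and the partial-participation bias $\sup_w |\overline F_\theta^\nu(w) - F_\theta^\nu(w)| \le B/\sqrt{\theta m}$, the latter being the strongly-convex counterpart of \eqref{eq:sfl-main:bias-bound:no-smooth}. A preliminary lemma extends \Cref{prop:sfl:subdiff-property} to the smoothed setting: $F_\theta^\nu$ is $\lambda$-strongly convex (inherited from the Tikhonov term), is everywhere differentiable with $\grad F_\theta^\nu(w) = \sum_{i=1}^n \pi_i^\star(w) \grad F_i(w) + \lambda w$ where $\pi^\star(w)$ is the unique argmax in \eqref{eq:sfl-entropic-smoothing}, and is $L_\nu$-smooth with $L_\nu \lesssim L + G^2/(\theta \nu)$ by combining the $L$-smoothness of each $F_i$ with the $(1/\nu)$-smoothness of the KL-regularized support function of $\mathcal{P}_\theta$. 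In particular, the weights $\pi^{(t)}$ from \eqref{eq:sfl:smooth-argmax} produce $\grad F_{\theta,S}^\nu(w^{(t)})$ whose $U_m$-expectation equals $\grad \overline F_\theta^\nu(w^{(t)})$.

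I would then run the standard strongly-convex local-SGD analysis on the recursion
\[
    w^{(t+1)} = w^{(t)} - \gamma \sum_{i \in S} \pi_i^{(t)} \sum_{k=0}^{\tau-1} \grad F_i(w_{i,k}^{(t)}).
\]
Expanding $\expect\|w^{(t+1)} - w^\star\|^2$, applying $\lambda$-strong convexity of $\overline F_\theta^\nu$, and using the $G$-Lipschitz bound to control the client-drift sum $\expect \sum_{i \in S} \pi_i^{(t)} \sum_{k=0}^{\tau-1} \|w_{i,k}^{(t)} - w^{(t)}\|^2 \le O(\gamma^2 \tau^3 G^2)$, yields a one-step inequality of the form
\[
    \expect \|w^{(t+1)} - w^\star\|^2 \le \bigl(1 - \tfrac{\gamma\lambda\tau}{2}\bigr) \expect \|w^{(t)} - w^\star\|^2 - \gamma\tau \, \expect\bigl[\overline F_\theta^\nu(w^{(t)}) - \overline F_\theta^\nu(w^\star)\bigr] + O(\gamma^2 \tau G^2) + O(\gamma^4 \tau^3 L_\nu^2 G^2).
\]
Multiplying by the weights $\beta_j = (1 - \gamma\lambda\tau/2)^{-(1+j)}$ and telescoping over $t = 0, \ldots, T-1$ converts the geometric contraction into exponential decay of the initial-distance term $\lambda \|w^{(0)} - w^\star\|^2 \exp(-\gamma\lambda\tau T/2)$, while Jensen's inequality on $\overline w^{(T)}$ produces suboptimality contributions of order $G^2/(\lambda T)$ from the stochastic noise and of order $L_\nu^2 G^2/(\lambda^3 T^2)$ from the drift, evaluated on $\overline F_\theta^\nu$.

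To reach the stated bound I would tune $\nu$ and $\gamma$. The local-step stability requirement $\gamma L_\nu \tau \lesssim 1$ caps $\gamma\lambda\tau$ at $\lambda / L_\nu = 1/\kappa_\nu$ where $\kappa_\nu = L_\nu/\lambda$. Choosing $\nu$ so that $L_\nu \sim L\sqrt{\kappa}$ gives $\kappa_\nu \sim \kappa^{3/2}$, matching the claimed exponential rate $\exp(-T/(16\kappa^{3/2}))$ and explaining the requirement $T \ge 16\kappa^{3/2}$. With this choice, the entropic smoothing bias $2\nu \log n$ and the drift contribution $L_\nu^2 G^2/(\lambda^3 T^2)$ respectively collapse (up to polylogarithmic factors) into the $G^2/(\lambda T)$ and $G^2 \kappa^2/(\lambda T^2)$ terms. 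Translating from $\overline F_\theta^\nu$ back to $F_\theta$ via the entropic smoothing bound and the partial-participation bound then adds the $B/\sqrt{\theta m}$ term, completing the stated inequality.

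The main obstacle is the joint tuning of the three parameters $\nu$, $\gamma$, and the weighted-averaging schedule $\beta_j$ so as to recover the $\kappa^{3/2}$-style exponent rather than a naive $\kappa^2$ or worse bound; this requires carefully balancing the smoothing-induced smoothness constant $L_\nu$ against the smoothing-bias penalty $\nu \log n$. A secondary technical point is that the drift analysis relies on the bounded-gradient bound $G$ rather than a bounded-variance assumption, so standard local-SGD drift estimates must be reworked to obtain the $O(\gamma^2\tau^3 G^2)$ control without auxiliary noise terms.
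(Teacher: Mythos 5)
Your overall route is the paper's own: analyze the algorithm as a (biased) local-SGD method on the entropically smoothed, partial-participation surrogate $\overline F_\theta^\nu$, use $\lambda$-strong convexity for the one-step contraction, the variance bound $8G^2/(\theta m)$ for the second-order term, weighted telescoping with the $\beta_j$ weights, translate back to $F_\theta$ via the smoothing error and the bias $B/\sqrt{\theta m}$, and finally tune $\gamma,\nu$ so that the contraction exponent is $T/\kappa^{3/2}$. So structurally there is nothing new or missing at the level of the outline.

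There is, however, a genuine gap in your drift control. With $\lambda>0$ the local steps use $\grad \tilde F_i(w) = \grad F_i(w) + \lambda w$, and these gradients are \emph{not} bounded by $G$; your claimed bound $\expect\sum_{i\in S}\pi_i^{(t)}\sum_k \normsq{w_{i,k}^{(t)}-w^{(t)}} \le O(\gamma^2\tau^3 G^2)$ "using the $G$-Lipschitz bound" only holds when $\lambda=0$ (this is exactly the caveat in \Cref{prop:sfl:client-drift}). The paper instead bounds the drift by $O\big(\gamma^2\tau^3 (G^2 + \normsq{\grad \overline F_\theta^\nu(w^{(t)})})\big)$ via the gradient-dissimilarity bound of \Cref{lem:adversarial_dissimilarity} (which itself uses the variance bound), converts the gradient-norm term to a function gap by smoothness at $\overline w^\star = \argmin \overline F_\theta^\nu$, and absorbs it into the descent term through the additional step-size restriction $\gamma \lesssim 1/\big(\tau\kappa\sqrt{\lambda L'}\big)$ --- which is precisely where the $\kappa^{3/2}$ exponent and the requirement $T\gtrsim\kappa^{3/2}$ come from, rather than from forcing $L_\nu\sim L\sqrt\kappa$ as you propose. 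Two related bookkeeping issues follow: (i) the drift term should carry the per-client smoothness constant $(L+\lambda)^2$, not $L_\nu^2$, since it controls $\grad\tilde F_i(w_{i,k}^{(t)})-\grad\tilde F_i(w^{(t)})$; with your $L_\nu^2$ and $L_\nu\sim L\sqrt\kappa$ you would get a $G^2\kappa^3/(\lambda T^2)$ term, which overshoots the claimed $G^2\kappa^2/(\lambda T^2)$; (ii) because the smoothness step requires centering at $\overline w^\star$, you also need the extra strong-convexity argument translating $\normsq{w^{(0)}-\overline w^\star}$ into $\normsq{w^{(0)}-w^\star}$ plus bias terms, which your sketch (centered at $w^\star$ throughout) omits.
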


\begin{remark}[About the Rate] \label{remark:sfl-cvx:rate}
As soon as 
$T \gtrsim \kappa^{3/2}$ (ignoring constants and polylog factors), 
we achieve the optimal rate of $1/(\lambda T)$ rate of strongly convex stochastic optimization
up to the bias $B/\sqrt{\theta m}$.  
Further,
the bias $B/\sqrt{\theta m}$ due to partial participation is larger at small $\theta$ and can be controlled by choosing the cohort size $m$ large enough. 
In the experiments of \Cref{sec:sfl:expt}, we obtain meaningful numerical results when $m$ is around $50$ or $100$ and $\theta$ around $1/2$, indicating that the worst-case bound \eqref{eq:sfl-main:bias-bound:no-smooth} can be pessimistic. 

\end{remark}

\begin{proof}[Proof Sketch of \Cref{thm:sfl-main:cvx}]
We start with some additional notation. 
We absorb the regularization into the client losses to define $\tilde F_i(w) = F_i(w) + (\lambda/2) \normsq{w}$.
Now, consider the smoothed counterpart of \eqref{eq:method:obj:minmax} on a subset $S \subset [n]$ with a smoothing parameter $\nu > 0$ as 
\begin{align*}
    F_{\theta, S}^\nu(w) = \max_{\pi \in \mathcal{P}_{\theta, S}} \left\{ \sum_{i \in S} \pi_i \tilde F_i(w) - \nu D_S(\pi)  \right\} \,.
\end{align*}
It follows from the properties of smoothing~\cite{nesterov2005smooth,beck2012smoothing}
and composition rules that $F_{\theta, S}^\nu$ is $L'$-Lipschitz, where $L' = L + \lambda + G^2/\nu$.
Finally, let $\Fcal_t$ denote the sigma-algebra generated by $w\pow{t}$ and let $\expect_t[\cdot] := \expect[ \cdot \vert \Fcal_t]$.

We start the proof with the decomposition
\begin{align*}
    \normsq{w\pow{t+1} - w}
    =& \, \normsq{w\pow{t}-w} 
    - 
    \underbrace{2 \gamma \sum_{i \in S} \pi_i\pow{t} \sum_{k=0}^{\tau - 1} \inp{\grad \tilde F_i(w_{i, k}\pow{t})}{w\pow{t} - w}}_{=: \Tcal_1} \\
    &+ \underbrace{\gamma^2 \normsq*{\sum_{i \in S} \pi_i\pow{t} \sum_{k=0}^{\tau-1}\grad \tilde F_i(w_{i, k}\pow{t})}}_{=: \Tcal_2} \,,
\end{align*}
where $w$ is arbitrary.
For the first order term $\Tcal_1$, we bound using $\lambda$-strong convexity and $L$-smoothness of $\tilde F_i$ as 
\begin{align*}
     \Tcal_1' &:= 2 \tau\gamma \, \sum_{i \in S} \pi_i\pow{t} \inp*{\grad \tilde F_i(w\pow{t}}{w\pow{t} - \overline w^\star}
    = 2 \tau\gamma\, \inp*{\grad F_{\theta, S}^\nu(w\pow{t})}{w\pow{t} - \overline w^\star} \\
    &\ge 2\tau\gamma\, \left(F_{\theta,S}^\nu(w\pow{t}) -  F_{\theta,S}^\nu (\overline w^\star) 
            + \frac{\lambda}{2} \normsq{w\pow{t} - \overline w^\star} \right) \,.
\end{align*}
where we used $\sum_{i \in S} \pi_i\pow{t} \grad F_i(w\pow{t}) = \grad F_{\theta, S}^\nu(w\pow{t})$ holds with smoothing, analogous to \eqref{eq:sfl-ncvx:subdiff-partial}, 
and strong convexity. 

The gap $\Tcal_1 - \Tcal_1'$ is due to the effect of the drift from multiple local steps. We bound this term similar to the non-convex case of \Cref{thm:sfl:nonsmooth-nonconvex:main}. 
For the second order term $\Tcal_2$, we rely
on the variance bound~\cite[Prop. 2]{levy2020largescale}
\[
    \expect_{S \sim U_m} \normsq*{\sum_{i\in S} \pi_i\pow{t} \grad \tilde F_i(w\pow{t})
    - \grad \overline F_\theta^\nu(w\pow{t})}
    \le \frac{8G^2}{\theta m} \,,
\]
where $U_m$ is the uniform distribution over subsets $S \subset [n]$ of size $m$, and 
$\overline F_\theta^\nu(w)  := \expect_{S \sim U_m} F_{\theta, S}^\nu(w)$ as the expectation of 
$F_{\theta, S}^\nu$ over random subsets $S \sim U_m$. 
Putting these together and taking $w = \overline w^\star := \argmin \overline F_\theta^\nu$ gives the inequality, 
\begin{align}
\label{eq:s-fl:new-main:proof-1}
    \overline F_\theta^\nu  (w\pow{t}) - \overline F_\theta^\nu(\overline w^\star) 
    \le\,\,  & \,
    \gamma A + \gamma^2 B +  \\
    & \frac{1}{\gamma \tau}\left(1 - \frac{\lambda \gamma \tau}{2}\right) \normsq{w\pow{t} - \overline w^\star} 
    - \frac{1}{\gamma \tau} \expect_t \normsq{w\pow{t+1} - \overline w^\star} 
     \,, \nonumber
\end{align}
where $A, B$ are problem-dependent constants. 
We sum this up with the averaging weights $\beta_t$ given in the statement of the theorem to get
\[
 \overline F_\theta^\nu  (\overline w\pow{T}) - \overline F_\theta^\nu(\overline w^\star) 
    \le 
    \frac{\lambda \normsq{w\pow{0} - w^\star}}{\exp(\lambda \tau \gamma T) - 1}
    + A \gamma + B \gamma^2 \,.
\]
The final missing piece is a bound 
which allows us to translate statements about the convergence of
$\overline F_{\theta}^\nu $ in terms of the convergence of 
$F_{\theta}$.
We achieve this using the bias bound of \eqref{eq:sfl-main:bias-bound:no-smooth} together with the approximation error of smoothing.
Finally, we optimize the choice of the learning rate and smoothing coefficient to give the final statement of the theorem. The details are provided in \Cref{sec:a:sfl:expt:cvx}.
\end{proof}

\subsection{Privacy and Utility Analysis} \label{sec:sfl:privacy}

We now analyze the privacy and utility of \Cref{algo:sfl:quantile-dp:new}.
In this section, we assume without loss of generality that $S = [n]$ so that $m = \vert S \vert = n$. 

First, we recall the definition of concentrated differential privacy~\cite{bun2016concentrated}.
A randomized algorithm $\Acal$ satisfies $(1/2)\epsilon^2$-concentrated differential privacy if
the R\'enyi $\alpha$-divergence
$D_\alpha(\Acal(X) \Vert \Acal(X')) \le  \alpha \epsilon^2 /2 $
for all $\alpha \in (0, \infty)$ and all sequences $X, X'$ of inputs that differ by the addition or removal of one client's data.
Intuitively, the addition or removal of the data contributed by one client should not change the output distribution of the randomized algorithm by much, as measured by the R\'enyi divergence. A smaller value of $\epsilon$ implies a stronger privacy guarantee. This notion of differential privacy can be translated back and forth with the usual one, cf.~\cite{canonne2020discrete}.

\myparagraph{Error Criterion}
We approximate the $(1-\theta)$-quantile of the $n$ per-client losses $\ell_i = F_i(w)$ for $i=1,\ldots,n$ by the quantile of a hierarchical histogram $h$ with entries
$h(r, j) = \sum_{i=1}^n \mathbb{I}\left(  l_{2^r(j-1) + 1} \le F_i(w) <  l_{2^r j} \right)$
where $0=l_0 < l_1 < \cdots < l_b=B$ are 
the bin edges. 
The edge $l_j$ corresponding to index $j \in [b]$ approximates the $(1-\theta)$-quantile well if the cumulative mass $H(j) \approx (1-\theta)n$. We measure this error of approximation by the difference between the two sides. 
Formally, we define the error $R_\theta(H, j)$ of approximating the $(1-\theta)$-quantile of the cumulative function $H$ of a 
hierarchical histogram with index $j \in [b]$ by
\begin{align}
    R_\theta(H, j) = \left\vert \frac{H(j)}{n} - (1-\theta) \right\vert \,.
\end{align}
We define the best achievable error $R^*_\theta(H)$ for estimating the $(1-\theta)$-quantile of the cumulative function $H$ and 
the best approximating index $j^*(H)$ as
\begin{align}
    R_\theta^*(H) = \min_{j \in [b]} R_\theta(H, j) \,,
    \quad \text{and} \quad
    j_\theta^*(H) = \argmin_{j \in [b]} R_\theta(H, j) \,,
\end{align}
where we assume ties are broken in an arbitrary but deterministic manner --- note that $j_\theta^*(H)$ is defined here identically to \eqref{eq:sfl-privacy:quantile}.
Lastly, we define the \emph{quantile error} 
$\Delta_\theta(H, \hat H)$ of estimating the quantile of the cumulative function $H$ from that of $\hat H$ as
\begin{align} \label{eq:sfl:quantile-error}
    \Delta_\theta(\hat H, H) = R_\theta\big(H, j^*_\theta(\hat H) \big) \,.  
\end{align}
Essentially, if the index $j^*_\theta(\hat H)$ computed from the estimate $\hat H$ corresponds to the $(1-\theta')$-quantile of $H$, the quantile error satisfies $\Delta_\theta(\hat H,  H) = \vert\theta-\theta'\vert$.

\myparagraph{Privacy and Utility Analysis}
We now analyze the differential privacy bound of \Cref{algo:sfl:quantile-dp:new} and the error in the quantile computation.

\begin{theorem} \label{thm:sfl:quantile:new}
Fix a $\delta > 0$. Suppose that $\sigma \ge 1/2$
and $c > 0$ are given, and the modular arithmetic is performed on the base
$M \ge 2 + 2c n + 2n \sqrt{2\sigma^2 \log(16nb/\delta)}$.
Then, we have:
\begin{enumerate}[label=(\alph*),itemsep=0em, topsep=0em, leftmargin=1.6em]
    \item  \Cref{algo:sfl:quantile-dp:new} satisfies $(1/2)\epsilon^2$-concentrated DP with 
    \[
        \epsilon = \min\left\{ 
            \sqrt{\frac{c^2 \log_2^2 b}{n\sigma^2} + \psi b}, 
            \frac{c \log_2 b}{\sqrt{n}\sigma} + \psi \sqrt{2b}
        \right\} \,,
    \]
    where $\psi = 10 \sum_{i=1}^{n-1} \exp\big(-2\pi^2\sigma^2 i / (i+1)\big) \le 10 (n-1) \exp(-2\pi^2\sigma^2)$.
    \item With probability at least $1-\delta$, the quantile error of cumulative function $\hat H$ returned by \Cref{algo:sfl:quantile-dp:new} is at most
    \[
        \Delta_\theta(\hat H, H) \le 
        R_\theta^*(\hat H) 
        + \sqrt{\frac{4 \sigma^2}{c^2 n} \log_2 b \, \log \frac{4 b}{\delta}}
    \]
    where $R_\theta^*(\hat H)$ is the error in the estimation of $(1-\theta)$-quantile of the cumulative function $\hat H$.
\end{enumerate}
\end{theorem}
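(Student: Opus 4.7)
The plan is to handle the two claims separately, starting with the privacy bound and then the utility bound, using a common analysis of the aggregated discrete Gaussian noise.

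For part (a), I would first characterize the sensitivity of the hierarchical histogram under adding or removing a single client. Such a change modifies exactly one leaf of $x_i$ and propagates to exactly one ancestor per level, so only $\log_2 b$ coordinates of $\sum_{i\in S} x_i$ are affected, each by $1$. After scaling by $c$, this yields $\ell_\infty$-sensitivity $c$ and $\ell_2$-sensitivity $c\sqrt{\log_2 b}$ of the scaled aggregated vector. Next, I would verify that the chosen $M$ avoids modular wraparound: a standard tail bound for $\Ncal_{\ZZ}(0, \sigma^2)$ with a union bound over the $n$ clients and $b$ histogram entries shows that $\max_i \Vert \xi_i \Vert_\infty \le \sqrt{2\sigma^2 \log(16nb/\delta)}$ with high probability, so the sum $\sum_i (c x_i + \xi_i)$ remains within $[-M/2, M/2]$ and the mod operation loses no information. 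The R\'enyi divergence bound then follows from the distributed discrete Gaussian mechanism of~\cite{kairouz2021distributed}, whose key ingredient is that the sum of $n$ independent $\Ncal_{\ZZ}(0, \sigma^2)$ samples is close in R\'enyi divergence to a single $\Ncal_{\ZZ}(0, n\sigma^2)$, with a residual bounded by $\psi$ per coordinate. The two terms in the minimum arise respectively from an $\ell_2$-sensitivity analysis of the entire tree (giving the $c^2 \log_2^2 b / (n\sigma^2)$ contribution) and a per-level composition of the discrete Gaussian mechanism at $\ell_\infty$-sensitivity $c$ (giving the $c \log_2 b / (\sqrt{n}\sigma)$ contribution), with the $\psi$ terms aggregating the non-Gaussianity residuals across the $b$ histogram coordinates.

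For part (b), I would first characterize the noise in the estimated cumulative function as $\hat H(j) - H(j) = (1/c) \sum_{(r, o) \in P_j} \sum_{i \in S} \xi_i(r, o)$, a sum of at most $n \log_2 b$ independent $\Ncal_{\ZZ}(0, \sigma^2)$ variables scaled by $1/c$. Since discrete Gaussians are sub-Gaussian with variance proxy $\sigma^2$, the total variance proxy is $n \sigma^2 \log_2 b / c^2$. A sub-Gaussian tail bound combined with a union bound over $j \in [b]$ yields, with probability at least $1-\delta$,
\[
    \eta := \max_{j \in [b]} \frac{1}{n}\vert \hat H(j) - H(j) \vert \le \sqrt{\frac{4\sigma^2 \log_2 b}{c^2 n} \log \frac{4b}{\delta}} \,.
\]
Setting $\hat j := j^*_\theta(\hat H)$, the definitions of $R_\theta^*$ and $\Delta_\theta$ combined with the triangle inequality give
\[
    \Delta_\theta(\hat H, H) = R_\theta(H, \hat j) \le R_\theta(\hat H, \hat j) + \frac{1}{n} \vert \hat H(\hat j) - H(\hat j) \vert \le R_\theta^*(\hat H) + \eta \,,
\]
which is the claimed bound.

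The main obstacle lies in part (a): the R\'enyi divergence analysis must simultaneously manage the non-Gaussianity of the summed discrete noise (captured by $\psi$), the modular wraparound condition that dictates the lower bound on $M$, and the two alternative sensitivity reductions of the tree structure that give rise to the minimum in the final expression. Part (b) is comparatively straightforward once the sub-Gaussian structure of the aggregated noise is recognized and the triangle-inequality reduction to $R_\theta^*(\hat H)$ is observed.
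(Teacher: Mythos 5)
Your proposal follows essentially the same route as the paper: bound the sensitivity of the tree-structured sum query, rule out modular wraparound with discrete-Gaussian tail bounds and a union bound over the $n$ clients and $2b-2$ tree nodes under the stated choice of $M$, invoke the sum-of-discrete-Gaussians R\'enyi bound of \cite{kairouz2021distributed} (the paper's \Cref{lem:techn:sum-of-disc-gauss}), and prove part (b) via the triangle inequality $\Delta_\theta(\hat H, H) \le \max_{j\in[b]} \frac{1}{n}\vert \hat H(j) - H(j)\vert + R_\theta^*(\hat H)$ plus sub-Gaussian concentration of the at most $\log_2 b$ dyadic noise terms and a union bound over bins; your part (b) matches the paper's argument essentially verbatim (the paper additionally carries out the explicit union bound combining the wraparound and concentration events to reach total failure probability $\delta$). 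The one place your account diverges is the origin of the two terms in the minimum in part (a). In the paper both terms come from a \emph{single} application of \Cref{lem:techn:sum-of-disc-gauss} to the full $2(b-2)$-dimensional query, with $\ell_2$-sensitivity taken as $S = c\log_2 b$ and $d \approx 2b$; the min is internal to that lemma, not the result of two different sensitivity analyses. As written, your attribution is internally inconsistent: with your (correct, tighter) $\ell_2$-sensitivity $c\sqrt{\log_2 b}$, the ``$\ell_2$ route'' gives $c^2\log_2 b/(n\sigma^2)$ rather than $c^2\log_2^2 b/(n\sigma^2)$, and per-level composition at $\ell_\infty$-sensitivity $c$ composes in the concentrated-DP parameter to give $c\sqrt{\log_2 b}/(\sqrt{n}\sigma)$, while its per-level residuals sum to roughly $\psi\sqrt{b}\,(1-2^{-1/2})^{-1} > \psi\sqrt{2b}$, so neither route as described reproduces the stated expressions. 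This is not fatal: a single invocation of the lemma with your sensitivity yields an $\epsilon$ no larger than both stated expressions (indeed a slightly stronger theorem), so the claim follows, but the cleaner path --- and the paper's --- is the one application to the whole vector followed by post-processing. Relatedly, note that the no-wraparound event is only needed for the utility bound; for privacy the modular reduction is post-processing of the noised sum, so the guarantee in part (a) holds unconditionally, whereas your sketch folds the wraparound argument into the privacy step.
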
  

Let us interpret the result. 
    The effective noise scale is $\sigma / c$.
        Since the dominant term of the privacy error is $\epsilon \approx c \log_2 b /(\sigma \sqrt{n})$, we choose ${\sigma}/{c} \approx \log_2 b / ({\epsilon \sqrt{n}})$, 
        so that the algorithm satisfies $(1/2)\epsilon^2$-concentrated DP.
        The role of $c$ is to avoid the degeneracy of the discrete Gaussian as $\sigma \to 0$. In particular, the theorem requires $\sigma \ge 1/2$.
    The error resulting quantile error $\Delta_\theta(\hat H, H)$ is (ignoring constants and log factors)
        \[
            \Delta_\theta(\hat H, H) \lesssim
            R_\theta^*(\hat H)
            + \frac{\log^2 b}{\epsilon n} \,.
        \]
    The quantile error scales as $1/(\epsilon n)$.
    The total communication cost is $O(bn \log M)$ bits since the dimension of each hierarchical histogram is $2(b-2)$. 
    If we take $\sigma = O(1)$ and $c = O(\epsilon \sqrt{n})$, we require $M \gtrsim n^{3/2}$, so that the total communication cost is $O(bn\log n)$.

\begin{proof}[Proof of \Cref{thm:sfl:quantile:new}]
    We can show that no modular wraparound occurs anywhere in the algorithm with high probability. We assume that it holds for the proof sketch. Thus, for all valid levels $r$ and indices $j$, 
    we have $\tilde x_i(r, j) = c  x_i(r, j) + \xi_i(r, j)$ and
    \[
        \hat h(r, j) = \sum_{i=1}^n \frac{\tilde x_i(r, j)}{c }
            = \sum_{i=1}^n \left(x_i(r, j) + \frac{\xi_i(r, j)}{c } \right) \,.
    \] 
    
    The privacy analysis follows from the sensitivity of the sum query. Namely, let $X = (x_1, \ldots, x_n)$ be a sequence and define $A(X) = \sum_{i=1}^n c x_i$ as the (rescaled) sum query. In our case, each $x_i$ is a hierarchical histogram with $\log_2 b$ ones being the only non-zeros, one for each level of the tree. 
    \Cref{algo:sfl:quantile-dp:new} adds discrete Gaussian noise to the sum query to make it differentially private. That is, 
    we get the randomized algorithm $\Acal(X) = A(X) + \sum_{i=1}^n \xi_i$.  
    It was shown in \cite[Corollary 12]{kairouz2021distributed} that 
    $\Acal(X)$ is approximately distributed as  $\Ncal_{\ZZ}(A(X), n\sigma^2)$, 
    so the desired privacy guarantee follows from that of the discrete Gaussian mechanism~\cite{canonne2020discrete}. 
    In particular, for two sequences $X$ and $X'$ differing by the addition or removal of a single basis vector $x'$, we have that 
    \[
        D_\alpha(\Acal(X) \Vert \Acal(X'))
        \approx D_\alpha(\Ncal_{\ZZ}(A(X), n\sigma^2) \Vert \Ncal_{\ZZ}(A(X'), n\sigma^2))
        = \frac{\alpha c^2}{2n\sigma^2} \,.
    \]
    A rigorous analysis of the error, following the recipe of \cite{kairouz2021distributed}, leads to the first part of the theorem; the details can be found in \Cref{sec:a:sfl:privacy}.
    
    \myparagraph{Utility Analysis}
    The triangle inequality gives
    \begin{align*}
        \Delta_\theta(\hat H, H) 
        &\le \frac{1}{n}\left\vert H\big(j^*_\theta(\hat H)\big) -\hat H\big(j^*_\theta(\hat H)\big) \right\vert
        + \left\vert \frac{1}{n} \hat H\big(j^*_\theta(\hat H)\big)
            - (1-\theta) \right\vert \\
        &\le 
        \max_{j \in [b]}  \left\{ \frac{1}{n}\left\vert H(j) - \hat H(j) \right\vert \right\}
        + R_\theta^*(\hat H) \,.
    \end{align*}
    Using standard concentration arguments, we show that 
    the first term is, at most 
    $\sqrt{2 \sigma^2 n \log_2(b) \log(4b/\delta)}$, completing the proof.
\end{proof} 
\section{Discussion} \label{sec:sfl:discussion}
We discuss connections of \newfl to risk measures, fair resource allocation, and model personalization.

\myparagraph{Connection to Risk Measures}
The framework of risk measures in economics and finance formalizes the notion of minimizing the worst-case cost over a set of distributions~\cite{follmer2002convex, rockafellar2013fundamental,zbMATH06621946}. 
The superquantile $\supq_\theta(\cdot)$ is a special case of a risk measure. The \newfl framework, which minimizes the superquantile of the per-client losses, can be extended to other risk measures $\riskM$ by minimizing the objective
\[
    F_M(w) := \riskM(Z(w)) + \frac{\lambda}{2}\normsq{w}\,,
\]
where $Z(w)$ is a discrete random variable which takes value $F_i(w)$ with probability $1/n$ for $i \in [n]$. 
Another example of a risk measure is the {\em entropic risk measure}, 
which is defined as 
$\riskM_{\mathrm{ent}}^\nu(Z) = \expect[\exp(\nu Z)]/\nu$
where $\nu \in \reals_+$ is a parameter. 
The entropic risk measure is well defined provided the moment generating function $\expect[\exp(\nu Z)]$ exists, for instance, for sub-Gaussian $Z$.
The analog of \newfl with the entropic risk minimizes
\[
    F_{\mathrm{ent}}^\nu(w) = \frac{1}{\nu} \log\left(\frac{1}{n}\sum_{i=1}^n \exp\big(\nu F_i(w)\big) \right) +\frac{\lambda}{2}\normsq{w}\,.
\]
This objective $F_{\mathrm{ent}}^\nu(w)$ coincides with the one studied recently in~\cite{li2020tilted} under the name \term subsequent to the first presentation of this work~\cite{laguel2020device}. Finally, we note that 
$F_{\mathrm{ent}}^\nu$ is also related to the smoothed objective 
$F_\theta^\nu$ from~\eqref{eq:sfl-entropic-smoothing} as the limit
\[
    F_{\mathrm{ent}}^\nu(w) = \lim_{\theta \to 0} F_\theta^\nu(w)\,,
\]
\myparagraph{Maximin Strategy for Resource Allocation}
We would like to point out an interesting analogy between distributional robustness 
and proportional fairness. The superquantile-based objective in Eq.~\eqref{eq:method:obj:minmax} is a maximin-type objective
that is reminiscent of maximin objectives used in load balancing 
and network scheduling~\cite{kubiak2008proportional,stanczak2009fundamentals,pantelidou2011scheduling}.

We can draw an analogy between the two worlds, federated learning and resource allocation resp., 
by identifying errors to rates and clients to users. The maximin fair strategy to resource allocation seeks to
treat all users as fairly as possible by making their rates as large and as equal as possible 
so that no rate can be increased without sacrificing other rates that are smaller or equal~\cite{pantelidou2011scheduling}. 

Our superquantile-based \newfl framework builds off the maximin decision-theoretic foundation to frame an objective 
that we \emph{optimize} with respect to \emph{parameters} of models, and this, iteratively, over multiple rounds of client-server communication, 
while preserving the privacy of each client. 

This compositional nature of our problem, where we optimize a composition (in the mathematical sense) of
a maximin-type objective, a loss function, and model predictions differ with resource allocation in communication networks.
Further explorations of the analogy are left for future work. 

\myparagraph{Model Family and Tail Thresholds}
\update{Using a single global value of the tail threshold $\theta$ for all clients could fail to balance supporting tail clients with fitting the population average. 
To circumvent this issue, we use a similar idea to the one of~\cite{li2020fair} where a family of models is trained simultaneously for various levels, and each test client can tune its tail threshold.}

\myparagraph{\newfl vs. Model Personalization}
Consider a family of distributions $q_i(x, y)$ for $i=1, \ldots, n$ over input-output pairs. From the decomposition 
$q_i(x, y) = q_i(x) q_i(y \vert x)$, it follows that the heterogeneity of the joint distributions can be due to 
(a) heterogeneity of the marginal distributions $q_i(x)$ over the input $x$, or, 
(b) heterogeneity of the conditional distributions $q_i(y \vert x)$, or in other words, the input-output mapping. 

If two clients do not agree on their input-output mapping, a single global model cannot serve both simultaneously. Thus, when training one single global model (as in vanilla FL) or a small number of them (as in \newfl), 
there is an implicit assumption that the heterogeneity of $\{q_i(y \vert x)\,: \, i \in [n]\}$ is small. \newfl was designed to handle the heterogeneity of $q_i(x)$ better than vanilla FL by providing better worst-case performance on tail clients.  

On the other hand, the cases where the heterogeneity of the conditional distributions $q_i(y \vert x)$ is large 
requires a separate model per client, or in other words, model personalization. 
Standard approaches to model personalization still aim to minimize the average error across all clients~\cite{dinh2020moreau,pillutla2022pfl}, similar to the vanilla FL objective. Thus, it can still suffer from disparate performance across clients, including poor performance on some tail clients or data-poor clients. One solution to reduce this disparity is to combine personalization with the \newfl objective.
We refer to~\Cref{sec:sfl:expt:privacy} for numerical experiments.

\update{
\myparagraph{Quantile-based Filtering and Client Availability}
We note that the quantile-based filtering of \Cref{algo:sfl-new} implies that only $\theta m$ tail clients contribute their updates to the global model in the absence of noise (that is, the weight $\pi\pow{t}$ in line~\ref{line:sfl-new:reweigh} of \Cref{algo:sfl-new} is sparse; see also \Cref{prop:sfl-quantile}). 
In order to include the updates of $m'$ clients after filtering, \newfl would require initially sampling an initial cohort of $m = m' / \theta$ clients. On the other hand, 
clients in cross-device federated learning are typically available in a diurnal pattern~\cite{eichner2019semi,kairouz2019flsurvey}, where a large enough number of clients might not be available at certain times of the day.
This issue might be exacerbated by \newfl's requirement of $m'/\theta$ clients per round as compared to FedAvg's $m'$. 
Devising strategies to dynamically vary the tail threshold $\theta$ based on the number of available clients to overcome this issue is an interesting venue for future work. 
}

\section{Experiments} \label{sec:sfl:expt}
\begin{table}[t]
    \caption{\small{Dataset description and statistics.}}
\label{table:sfl:expt:dataset:descr}
\begin{center}
\begin{adjustbox}{max width=\linewidth}
\begin{tabular}{lccccccc}
\toprule
Task & Dataset & \#Classes & Devices & \multicolumn{2}{c}{\#Data per client} \\
\cmidrule{5-6}
& & & &  Median  & Max  \\
\midrule
Image Recognition & EMNIST &   62 & 1730 & 179 & 447 \\
Sentiment Analysis & Sent140 &  2 & 877 & 69 & 549 \\
\bottomrule
\end{tabular}
\end{adjustbox}
\end{center}
\end{table}

In this section, we demonstrate the effectiveness of \newfl in handling heterogeneity in federated learning. Our experiments were implemented in Python using automatic differentiation provided by PyTorch while the data was preprocessed using LEAF~\cite{caldas2018leaf}.  
The code to reproduce our experiments can be found online.\footnote{
\url{https://github.com/krishnap25/simplicial-fl}
}
We start by describing the datasets, tasks, and models in
\Cref{sec:sfl:expt:datasets}. 
We present numerical comparisons to several recent works -- we list them in \Cref{sec:sfl:expt:algos} and show the experimental results in \Cref{sec:sfl:expt:results}.
We demonstrate that \newfl provides the most favorable tradeoff between average error and the error on tail clients in \Cref{sec:sfl:expt:heterogeniety}.
Next, we compare \newfl with model personalization in \Cref{sec:sfl:expt:pers}.
Finally, we numerically study the privacy-utility tradeoff of the differentially private quantile computation (in \Cref{sec:sfl:expt:privacy}), and of \newfl with end-to-end differential privacy (in \Cref{sec:sfl:expt:end-to-end-dp}). 
 
Full details regarding the experiments, as well as additional results, are provided in the supplementary material.

\subsection{Datasets, Tasks and Models} \label{sec:sfl:expt:datasets}

We consider two learning tasks. The dataset and task statistics are summarized in \Cref{table:sfl:expt:dataset:descr}.

\begin{enumerate}[label=(\alph*),nolistsep,topsep=0em, leftmargin=1.6em]
\item{\textit{Character Recognition}:}
We use the EMNIST dataset~\cite{cohen2017emnist}, 
where the input $x$ is a $28\times 28$ grayscale image of a handwritten character, and the output $y$ is its label (0-9, a-z, A-Z). Each client is a writer of the character $x$. The weight $\alpha_i$ assigned to author $i$ is the number of characters written by this author. We train both a linear model and a convolutional neural network architecture (ConvNet). The ConvNet consists of two $5\!\times \! 5$ convolutional layers with max-pooling followed by one fully connected layer. Outputs are vectors of scores for each of the $62$ classes. The multinomial logistic loss is used to train both models.

\item{\textit{Sentiment Analysis}:}
We use the Sent140 dataset~\cite{go2009twitter} where the input $x$ is a tweet, and the output $y=\pm1$ is its sentiment.
Each client is a distinct Twitter user. The weight $\alpha_i$ assigned to user $i$ is the number of tweets published by this user. We train a logistic regression and a Long-Short Term Memory neural network architecture (LSTM). The LSTM is built on the GloVe embeddings of the words of the tweet~\cite{hochreiter1997long}. The hidden dimension of the LSTM is the same as the embedding dimension, i.e., $50$. We refer to the latter as ``RNN''. The loss used to train both models is the binary logistic loss.
\end{enumerate}

\subsection{Algorithms and Hyperparameters}
\label{sec:sfl:expt:algos}

We list here the competing approaches we benchmark and discuss their hyperparameters.

\myparagraph{Algorithms}
As discussed in \Cref{sec:sfl:setup}, 
a federated learning method is characterized by 
the objective function, as well as the federated optimization algorithm. We compare \newfl with the following baselines:
\begin{enumerate}[label=(\alph*),nolistsep,leftmargin=\widthof{ (a) }]
\item Vanilla FL objective: We consider two methods that attempt to minimize the vanilla FL objective: \fedavg~\cite{mcmahan2017communication} and \fedprox~\cite{li2020fedprox}. The latter augments \fedavg with a proximal term for more stable optimization.
\item Heterogeneity-aware objectives: We consider \term~\cite{li2020tilted}, which is the analogue of \newfl with the entropic risk measure (cf. \Cref{sec:sfl:discussion}) and \afl~\cite{mohri2019agnostic}, whose objective is obtained as the limit $\lim_{\theta\to 0} F_\theta(w)$ of the \newfl objective. We also consider \qffl~\cite{li2020fair}, which raises the per-client loss $F_i$ to the $(q+1)$\textsuperscript{th} power, for some $q > 0$.
We optimize \qffl and \term with the federated optimization algorithms proposed in their respective papers. We use \qffl with $q=10$ in place of \afl, as it was found to have more stable convergence with similar performance. 
\end{enumerate}
We compare to one more baseline for the vanilla FL objective. Note that \newfl the weight $\pi\pow{t}$ (see line~\ref{line:sfl-new:reweigh} of \Cref{algo:sfl-new}) is sparse, i.e., it is non-zero for only some of the $m$ selected clients, cf. \Cref{prop:sfl-quantile}. This is equivalent to a fewer number of effective clients per round, which is $\theta m$ on average. We use as baseline \fedavg with $\theta m$ clients per round, where $m$ is the number of clients per round in \newfl; we call it \fedavgsub.

Similar to \cite{mcmahan2017communication}, we consider a weighted version of the vanilla FL objective where each client's loss is weighted by $\alpha_i = N_i / N$, where $N_i$ is the number of data points on client $i$ and $N = \sum_i N_i$. Similarly, we also consider a weighted version of the \newfl objective as a superquantile of a random variable that takes value $F_i(w)$ with probability $\alpha_i$.
For a fair comparison, we run all algorithms, including \newfl, without differential privacy. We postpone a study of \newfl with differential privacy to \Cref{sec:sfl:expt:end-to-end-dp}. 

\myparagraph{Hyperparameters}
We fix the number of clients per round to be $m=100$ for each dataset-model pair except for Sent140-RNN, for which we use $m=50$.
We fixed an iteration budget for each dataset 
during which \fedavg{} converged.
We tuned a learning rate schedule using grid search 
to find the smallest terminal loss averaged over training clients for \fedavg. 
The same iteration budget and learning rate schedule were used for {\em all} other methods, including \newfl.
Each method, except \fedavgsub, 
selected $m$ clients per round for training, as specified earlier.
The regularization parameter $\lambda$, 
and the proximal weight of {FedProx}
were tuned to minimize the 
$90$\textsuperscript{th} percentile of the misclassification error
on a held-out subset of training clients. 
We run \qffl{} for $q \in \{10^{-3}, 10^{-2},\ldots, 10\}$ and
report $q$ with the 
smallest $90$\textsuperscript{th} percentile of misclassification error on {\em test} clients.
We run \term with a temperature parameter $\nu \in \{0.1, 0.5, 1, 5, 10, 50, 100, 200\}$ and also report $\nu$ with the 
smallest $90$\textsuperscript{th} percentile of misclassification error on {\em test} clients. We optimize \newfl with \Cref{algo:sfl-new}
for threshold levels $\theta \in  \{0.8, 0.5, 0.1\}$.

\begin{table*}[t]
\small
\caption{{
\textbf{$\mathbf{90}$\textsuperscript{th} percentile} 
of the distribution 
of misclassification error (in $\%$) on the test devices. Each entry is the mean over five random seeds while the standard deviation is reported in the subscript. The boldfaced/highlighted entries denote the smallest value for each dataset-model pair.
}}
\label{table:sfl:dataset:results_90}
\begin{center}
\begin{adjustbox}{max width=\linewidth}
\setlength{\tabcolsep}{8pt}
{\renewcommand{\arraystretch}{1.2}%
\begin{tabular}{lccccc}
\toprule
 		& \multicolumn{2}{c}{\textbf{EMNIST}} 				& \multicolumn{2}{c}{\textbf{Sent140}} 		\\
 		  \cmidrule(lr){2-3} \cmidrule(lr){4-5}
			& 	\textbf{Linear} 			& 	\textbf{ConvNet} 			& 	\textbf{Linear}			& 		\textbf{RNN}						\\
\midrule

\fedavg 	& $ 49.66_{0.67}$ 	& $28.46_{1.07}$ 		& $46.83_{0.54}$ 	& $49.67_{3.95}$ 	\\

\fedavgsub 	& $50.28_{0.77}$ 	& $27.57_{0.81}$ 		& $46.60_{0.38}$ 	& $46.94_{3.84}$ 	 \\

\fedprox 	& $49.15_{0.74}$ 	& $27.01_{1.86}$ 		& $46.83_{0.54}$ 	& $49.86_{4.07}$ 	\\

\qffl 		& $49.90_{0.58}$ 	& $28.02_{0.80}$ 		&\tabemph{} $\mathbf{46.39}_{0.40}$ & $48.66_{4.68}$		\\

\term 		& $48.59_{0.62}$ 	& $25.46_{1.49}$ 		&$46.69_{0.49}$& $46.54_{3.27}$		\\

\afl 		& $51.62_{0.28}$ 	& $45.08_{1.00}$ 		& $47.52_{0.32}$ 	& $57.78_{1.19}$ 		\\
\midrule

\newfl,
$\theta=0.8$& $49.10_{0.24}$ 	& $26.23_{1.15}$ 		& ${46.44}_{0.38}$ 	&\tabemph{} $\mathbf{46.46}_{4.39}$	\\

\newfl, 
$\theta=0.5$& \tabemph{} $\mathbf{48.44_{0.38}}$ & 
			\tabemph{} $\mathbf{23.69_{0.94}}$ & $46.64_{0.41}$ 	& $50.48_{8.24}$ 	 	\\

\newfl, 
$\theta=0.1$& $50.34_{0.95}$ & $25.46_{2.77}$ 			& $51.39_{1.07}$ 	& $86.45_{10.95}$ 	 \\

\bottomrule

\end{tabular}} %
\end{adjustbox}
\end{center}
\end{table*}

\begin{table*}[t]
\small
\caption{{
\textbf{Mean} of the distribution of misclassification error (in $\%$) on the test devices.  Each entry is the mean over five random seeds while the standard deviation is reported in the subscript. The boldfaced/highlighted entries denote the smallest value for each dataset-model pair.
}}
\label{table:sfl:dataset:results_mean}
\begin{center}
\begin{adjustbox}{max width=\linewidth}
\setlength{\tabcolsep}{8pt}
{\renewcommand{\arraystretch}{1.2}%
\begin{tabular}{lccccc}
\toprule
 		& \multicolumn{2}{c}{\textbf{EMNIST}} 				& \multicolumn{2}{c}{\textbf{Sent140}} 		\\
 		  \cmidrule(lr){2-3} \cmidrule(lr){4-5}
			& 	\textbf{Linear} 			& 	\textbf{ConvNet} 			& 	\textbf{Linear}			& 		\textbf{RNN}						\\
\midrule

\fedavg 	& ${34.38}_{0.38}$ 		& $16.64_{0.50}$ 			& $34.75_{0.31}$ 		& ${30.16}_{0.44}$ 	\\

\fedavgsub 	& $34.51_{0.47}$ 		& $16.23_{0.23}$ 			& $34.47_{0.03}$ 		&\tabemph{}  $\mathbf{29.86}_{0.46}$ 	\\

\fedprox 	&\tabemph{} $\mathbf{33.82_{0.30}}$& $16.02_{0.54}$ 			& $34.74_{0.31}$ 		& $30.20_{0.48}$ 	\\

\qffl 	& $34.34_{0.33}$ 		& $16.59_{0.30}$ 			& $34.48_{0.06}$ 		& $29.96_{0.56}$ 	\\

\term 		& $34.02_{0.30}$ 	& $15.68_{0.38}$ 	& $34.70_{0.31}$	&$30.04_{0.25}$ 		\\
\afl 		& $39.33_{0.27}$ 		& $33.01_{0.37}$ 			& $35.98_{0.08}$ 		& $37.74_{0.65}$  	\\
\midrule
\newfl, 
$\theta=0.8$& $34.49_{0.26}$ 		& $16.09_{0.40}$ 			& \tabemph{} $\mathbf{34.41}_{0.22}$& $30.31_{0.33}$ 	\\

\newfl, 
$\theta=0.5$& $35.02_{0.20}$ 		& \tabemph{}  $\mathbf{15.49_{0.30}}$ 	& $35.29_{0.25}$ 		& $33.59_{2.44}$ 	\\

\newfl, 
$\theta=0.1$& $38.33_{0.48}$ 		& $16.37_{1.03}$ 			& $37.79_{0.89}$ 		& $51.98_{11.81}$ 	\\

\bottomrule

\end{tabular}} %
\end{adjustbox}
\end{center}
\end{table*}

\subsection{Experimental Results} 
\label{sec:sfl:expt:results}

We measure in \Cref{table:sfl:dataset:results_90} the $90$\textsuperscript{th} 
percentile of the misclassification error across the test clients as a measure of the right tail of the per-client performance.
We also measure in \Cref{table:sfl:dataset:results_mean} the mean error, which measures the average test performance.
Our main findings are summarized below. 

\myparagraph{\newfl consistently achieves the smallest $90$\textsuperscript{th} percentile error}
\newfl achieves a $3.3\%$ absolute ($12\%$ relative) improvement over any vanilla FL objective on EMNIST-ConvNet. Among the heterogeneity-aware objectives, \newfl achieves $1.8\%$ improvement over the next best objective, which is \term. 
We note that \qffl marginally outperforms \newfl on Sent140-Linear, but the difference $0.05\%$ is much smaller than the standard deviation across runs. 

\myparagraph{\newfl is competitive at multiple values of $\theta$}
For EMNIST-ConvNet, \newfl with $\theta \in \{0.5, 0.8\}$ is better in $90$\textsuperscript{th} percentile error than {\em all} other methods we compare to, and 
\newfl with $\theta = 0.1$ is tied with \term, the next best method. 
We also empirically confirm that \newfl interpolates between \fedavg ($\theta \to 1$) and \afl ($\theta \to 0$).

\myparagraph{\newfl works best for larger threshold levels}
We observe that \newfl with $\theta=0.1$ is unstable for Sent140-RNN. 
This is consistent with \Cref{thm:sfl-main:cvx}, which requires $m$ to be much larger than $1/\theta$ (cf. \Cref{remark:sfl-cvx:rate}).
Indeed, this can be explained by
\newfl's sparse re-weighting, which only gives 
non-zero weights to $\theta m = 5$ clients on average in each round (cf. \Cref{remark:sfl-client-filtering}). 

\myparagraph{Yet, \newfl is competitive in terms of average error}
Perhaps surprisingly, \newfl gets the best test error performance on EMNIST-ConvNet and Sent140-Linear. This suggests that the average test distribution is shifted relative to the average training distribution $p_\alpha$. 
In the other cases, we find that the reduction in mean error is small relative to the gains in the $90$\textsuperscript{th} percentile error compared to Vanilla FL methods. 

\myparagraph{Minimizing superquantile loss over all clients performs better than minimizing worst error over all clients} 
Specifically, \afl which aims to minimize the worst error among all clients, as well as other
objectives which approximate it (\newfl with $\theta\to 0$, \qffl with $q\to \infty$, \term with $\nu \to 0$) tend to achieve poor performance. We find that \afl achieves the highest error both in terms of $90$\textsuperscript{th} percentile and the mean.
\newfl offers a more nuanced and more effective approach through an averaging of the tail performances rather than the straightforward pessimistic approach minimizing the worst error among all clients.

\begin{figure*}[t!]
  \centering
  {\includegraphics[width=0.6\linewidth,trim={0 5pt 0 30pt},clip=true]{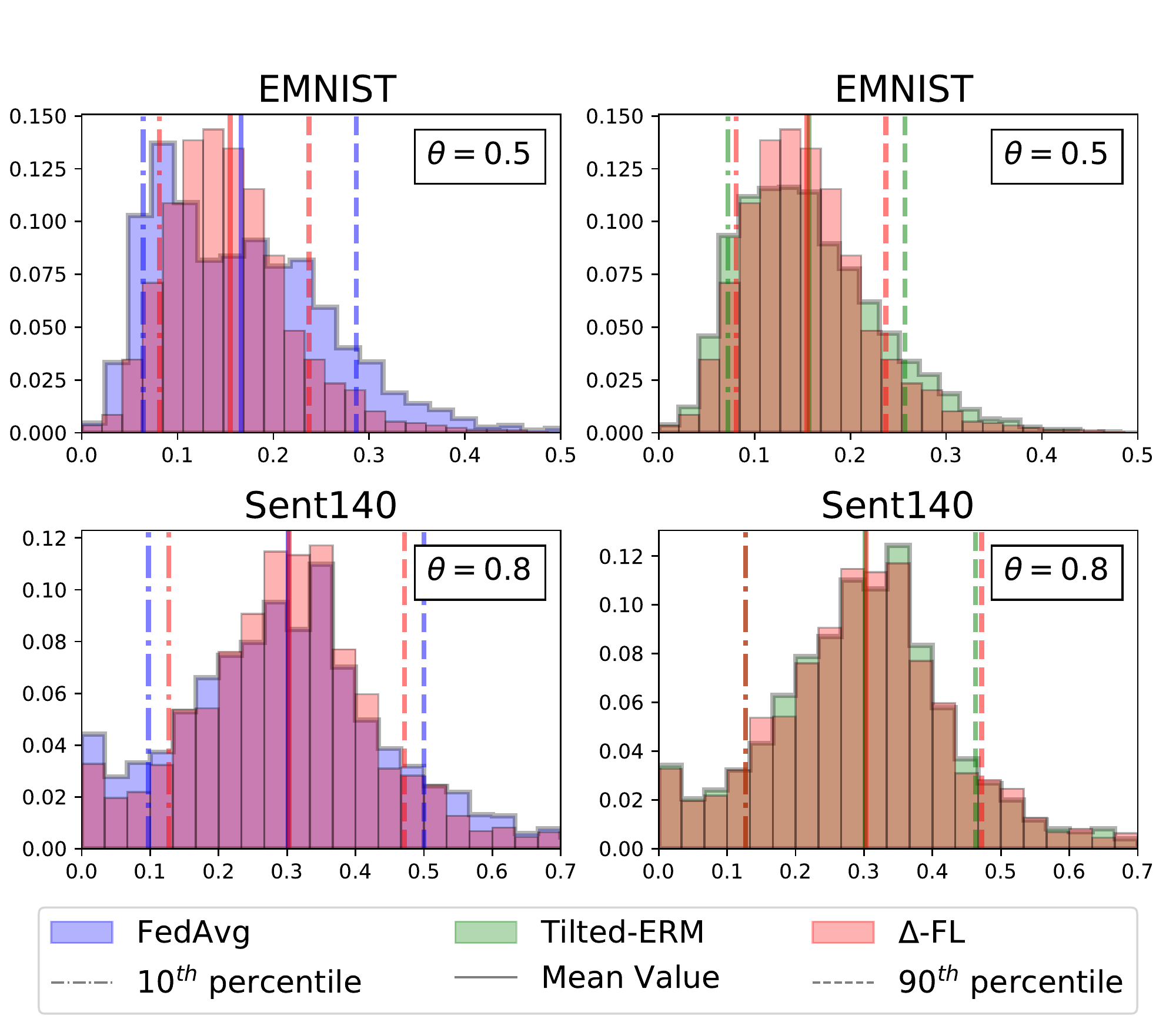}}
\caption{\small{
     Histogram of misclassification error on test clients for the EMNIST-ConvNet and Sent140-RNN.
     }}
\label{fig:main:expt:hist}
\end{figure*}

\begin{figure}[t]
\begin{minipage}[b]{0.99\linewidth}
  \centering
  \vspace{-5mm}
  {\adjincludegraphics[width=0.6\linewidth,trim={0 40pt 0 40pt},clip=true]{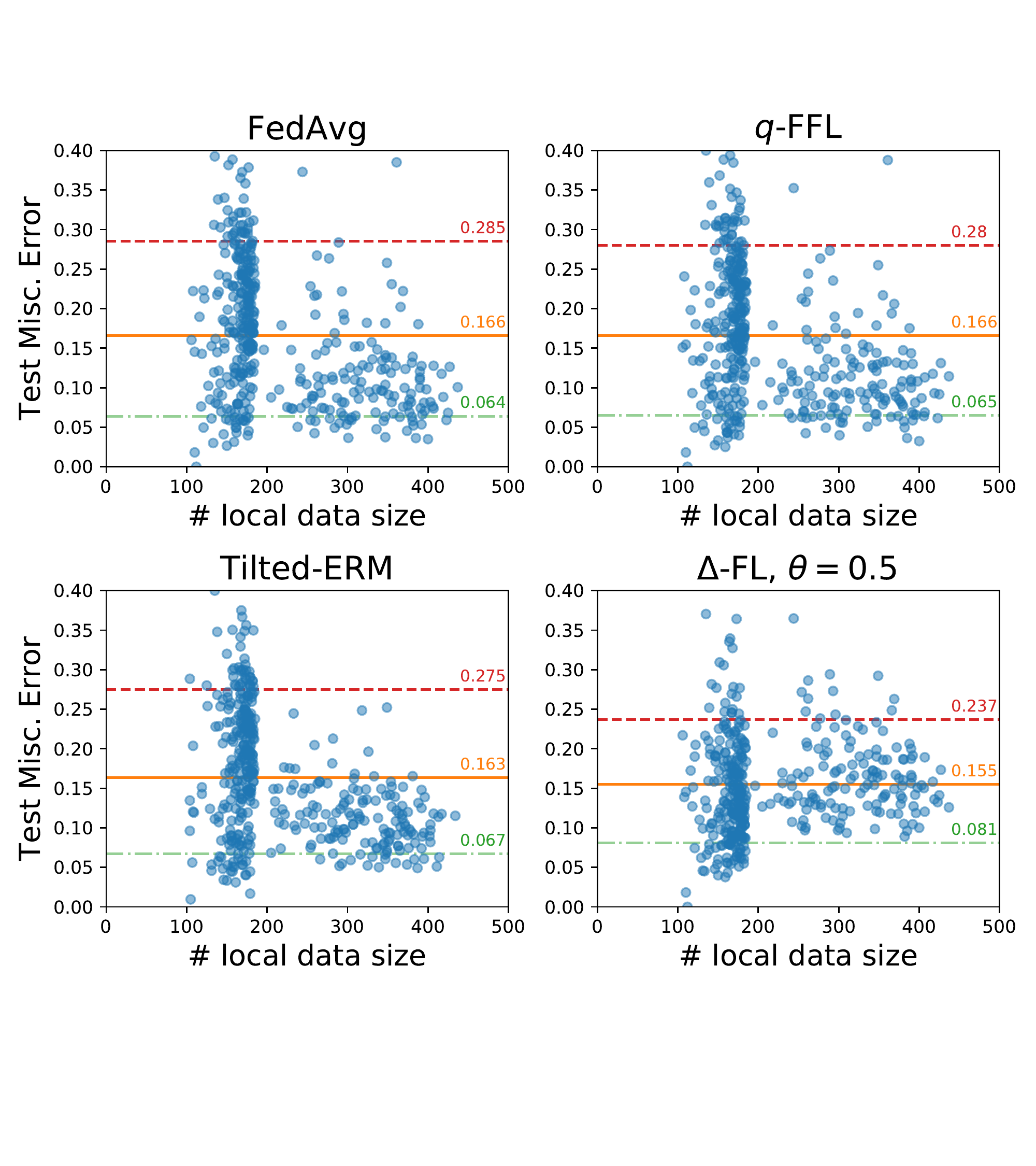}}
\end{minipage}
\vspace*{-9mm}
\caption{\small{
     Scatter plots of misclassification error on test clients against its data size for the EMNIST-ConvNet.
     }}
\label{fig:main:expt:scatter}
\end{figure}

\subsection{Exploring the Trade-off Between Average and Tail Error}
\label{sec:sfl:expt:heterogeniety}
We visualize in Figures~\ref{fig:main:expt:hist} and~\ref{fig:main:expt:scatter} the distribution of test errors to explore the trade-off various methods provide between the average error and the error on tail clients.

\myparagraph{\newfl yields improved prediction on tail clients}
This can be observed from the histogram of \newfl in \Cref{fig:main:expt:hist}, which
exhibits thinner tails than \fedavg or \term. 
We see that the vanilla FL objective of \fedavg sacrifices performance on the tail clients. \term does improve over \fedavg in this regard, but \newfl has a thinner right tail than \term, showing better handling of heterogeneity.

\myparagraph{\newfl yields improved prediction on data-poor clients}
We observe in \Cref{fig:main:expt:scatter} that
\term and \qffl mainly improve the performance on data-rich clients, that is clients with lots of data. On the other hand, 
\newfl gives a more significant reduction in misclassification error on data-poor clients, that is clients with little data ($< 200$ examples per client).

\subsection{\newfl and Model Personalization} \label{sec:sfl:expt:pers}

We now repeat the experiment of \Cref{sec:sfl:expt:results} with model personalization for the EMNIST ConvNet model. 

\myparagraph{Setup}
We personalize a model to a test client by finetuning a model trained either via FedAvg or \newfl on the particular test client's data at the end of federated training. This simple baseline is competitive with more sophisticated personalization algorithms~\cite{pillutla2022pfl}. 
Towards this end, we split the data on each test client into a training set used for the finetuning and a test set used to report the evaluation metrics. 
We finetune the model for $10$ epochs with the same local learning rate as at the end of federated training.

\begin{table*}[tb]
\small
\caption{{
Misclassification error \% of FedAvg and \newfl with model personalization on the EMNIST ConvNet model. 
Each table entry is the average
 over $5$ random seeds, while the subscript denotes the standard deviation. The boldfaced entries indicate the smallest error in each column.
}}
\label{table:sfl:pers}
\begin{center}
\begin{adjustbox}{max width=\linewidth}
\setlength{\tabcolsep}{8pt}
{\renewcommand{\arraystretch}{1.2}%
\begin{tabular}{lccccc}
\toprule
 		& \multicolumn{2}{c}{\textbf{Mean error}} 				& \multicolumn{2}{c}{\textbf{$\mathbf{90}$\textsuperscript{th} percentile error}} 		\\
 		  \cmidrule(lr){2-3} \cmidrule(lr){4-5}
			& 	\textbf{Before pers.} 			& 	\textbf{After pers.} 			& 	\textbf{Before pers.}			& 		\textbf{After pers.}						\\
\midrule

\fedavg 	& ${16.68}_{0.50}$ 		& \tabemph{} $\mathbf{5.43}_{0.12}$ 			& $28.44_{1.15}$ 		& ${8.71}_{0.19}$ 	\\

\newfl, 
$\theta=0.8$& $16.00_{0.44}$ 		& $5.44_{0.08}$ 			&  ${26.26}_{1.28}$& 
\tabemph{} $\mathbf{8.69}_{0.12}$ 	\\

\newfl, 
$\theta=0.5$& 
\tabemph{} $\mathbf{15.50_{0.31}}$ 		&  ${5.58_{0.07}}$ 	& \tabemph{} $\mathbf{23.61_{1.02}}$ 		& $8.76_{0.15}$ 	\\

\newfl, 
$\theta=0.1$& $16.05_{0.78}$ 		& $6.17_{0.11}$ 			& $24.58_{1.96}$ 		& $9.38_{0.06}$ 	\\

\bottomrule

\end{tabular}} %
\end{adjustbox}
\end{center}
\end{table*}

\myparagraph{Results}
The numerical results are given in \Cref{table:sfl:pers}. We observe that after model personalization, both FedAvg and \newfl models perform similarly, often within one standard deviation of each other. The mean error is marginally smaller for FedAvg while the $90$\textsuperscript{th} percentile error is marginally smaller for \newfl with $\theta=0.8$. The gap between these,  $0.01$ or $0.02$ percentage points, is smaller than the standard deviation, $0.1$ percentage points. 

\subsection{Differentially Private Quantile Estimation}
\label{sec:sfl:expt:privacy}

We study the privacy-utility tradeoff of \Cref{algo:sfl:quantile-dp:new}. 

\myparagraph{Setup}
We sample $n=256$ numbers from a uniform distribution over $[0, B]$ or a $\chi^2(4)$ distribution clipped to $[0, B]$ with $B=10$. We consider the performance of \Cref{algo:sfl:quantile-dp:new} by varying the number $b$ of bins and the ring size $M$. Since the communication cost of the protocol scales as the bit width $\log_2 M$, we display it instead in the plots. Recall that if our algorithm returns the $(1-\theta')$-quantile when we aim to find the $(1-\theta)$-quantile, then its quantile error is $\vert\theta - \theta'\vert$, cf. \eqref{eq:sfl:quantile-error}.  We plot the quantile error averaged over $\theta = 0.1, 0.2, \ldots, 0.9$, and the standard deviations are obtained from $10$ random runs. 

\begin{figure}[t]
  \adjincludegraphics[width=0.24\linewidth,trim={0 0pt 0 0pt},clip=true]{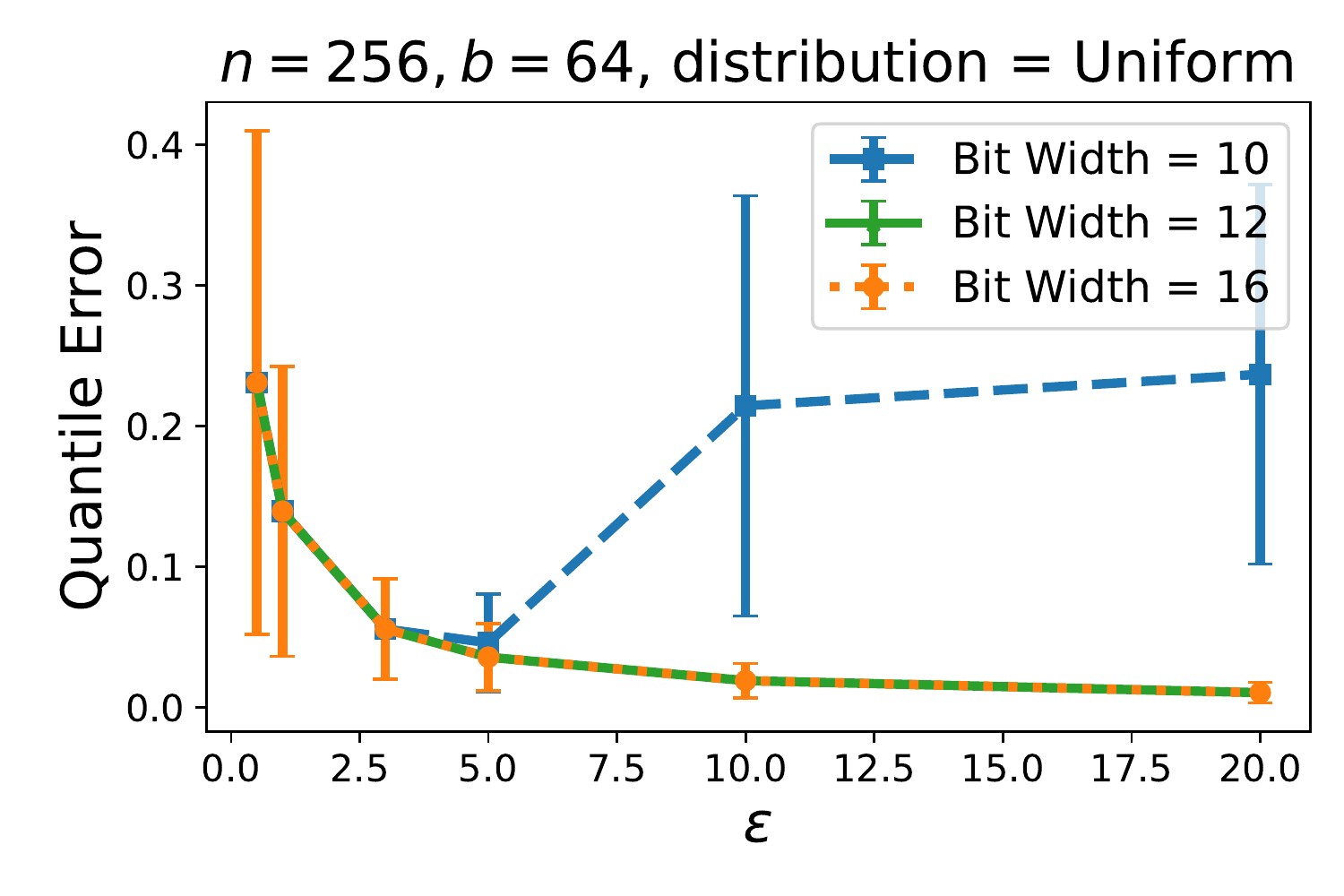} 
  \adjincludegraphics[width=0.24\linewidth,trim={0 0pt 0 0pt},clip=true]{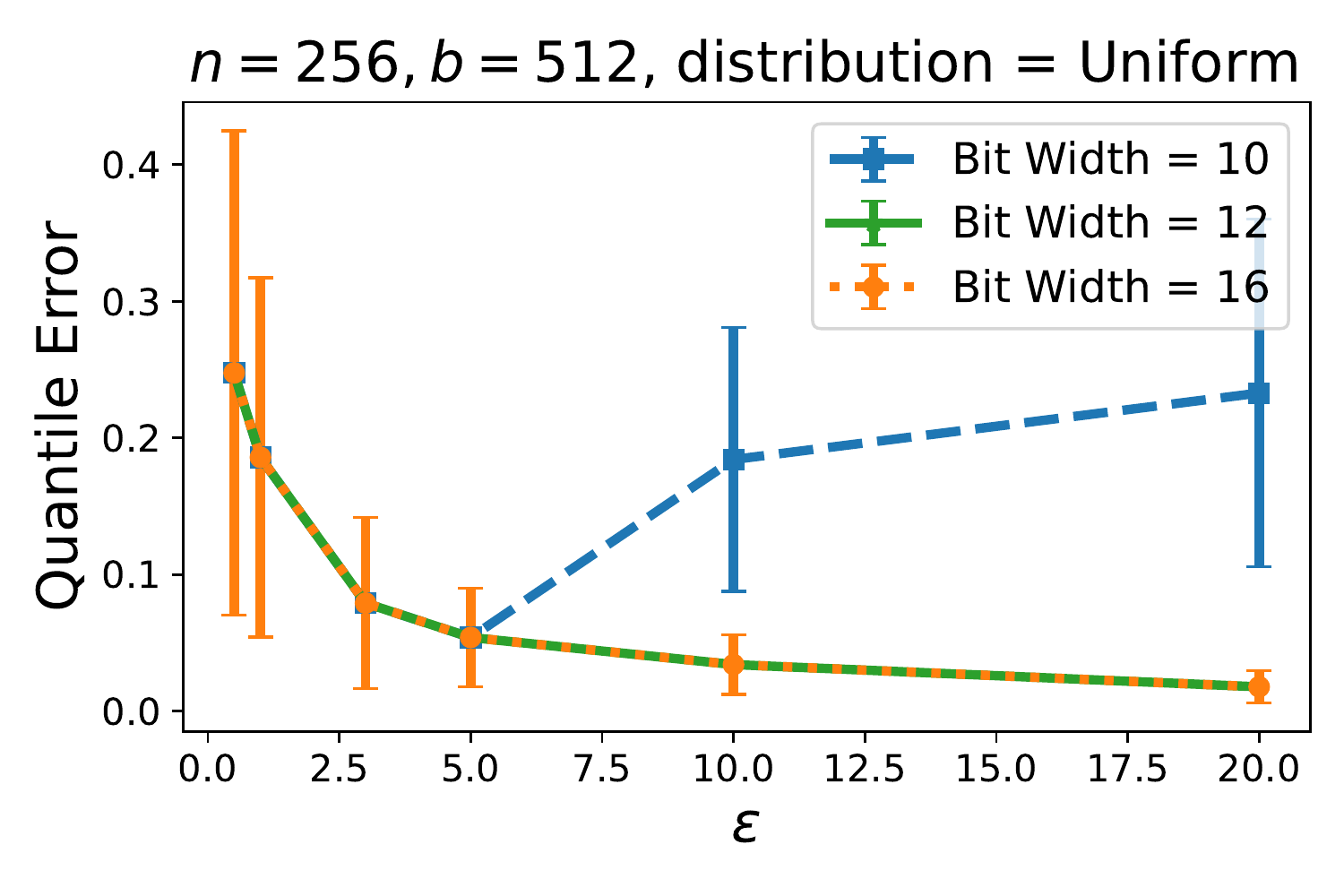}
   \adjincludegraphics[width=0.24\linewidth,trim={0 0pt 0 0pt},clip=true]{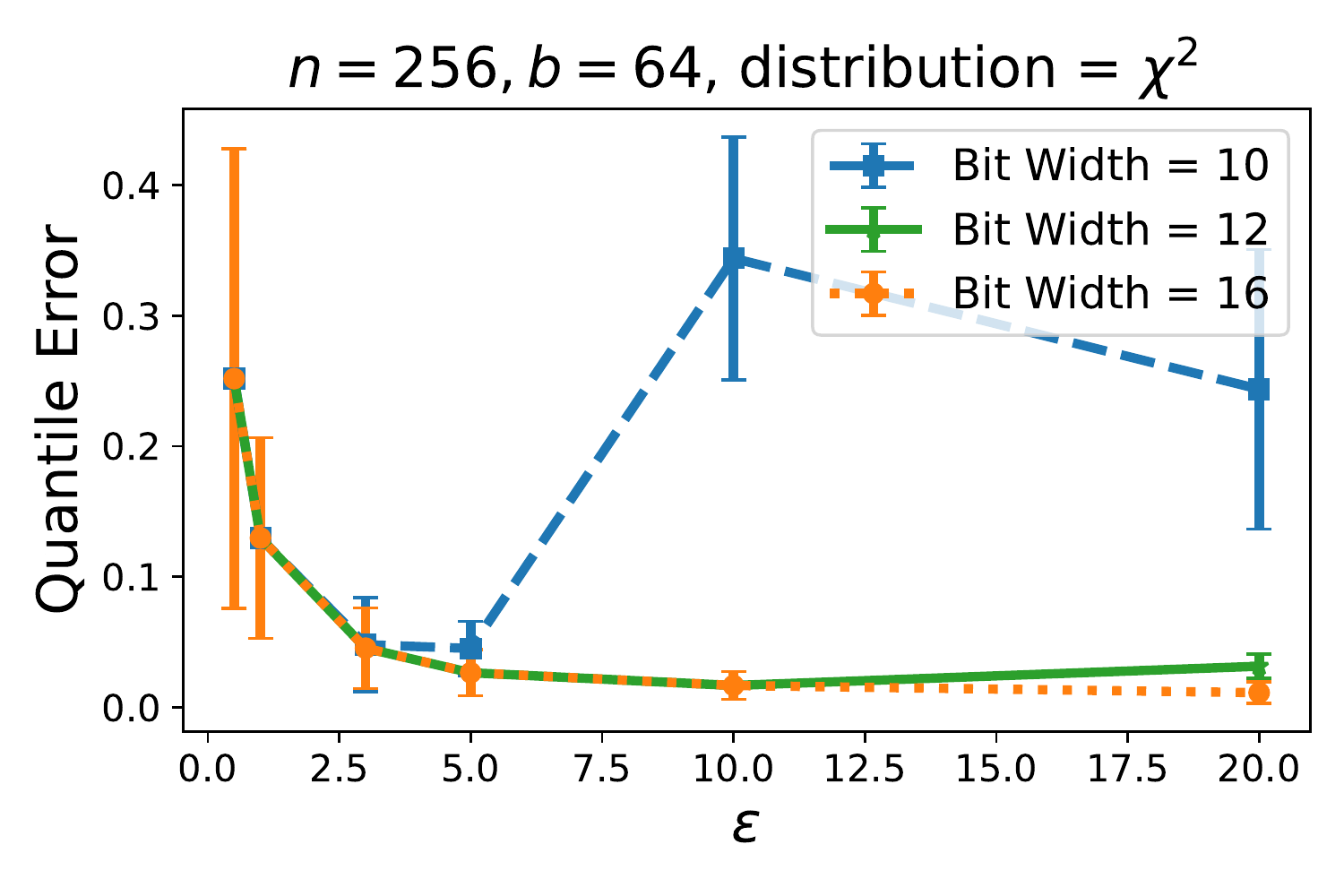} 
  \adjincludegraphics[width=0.24\linewidth,trim={0 0pt 0 0pt},clip=true]{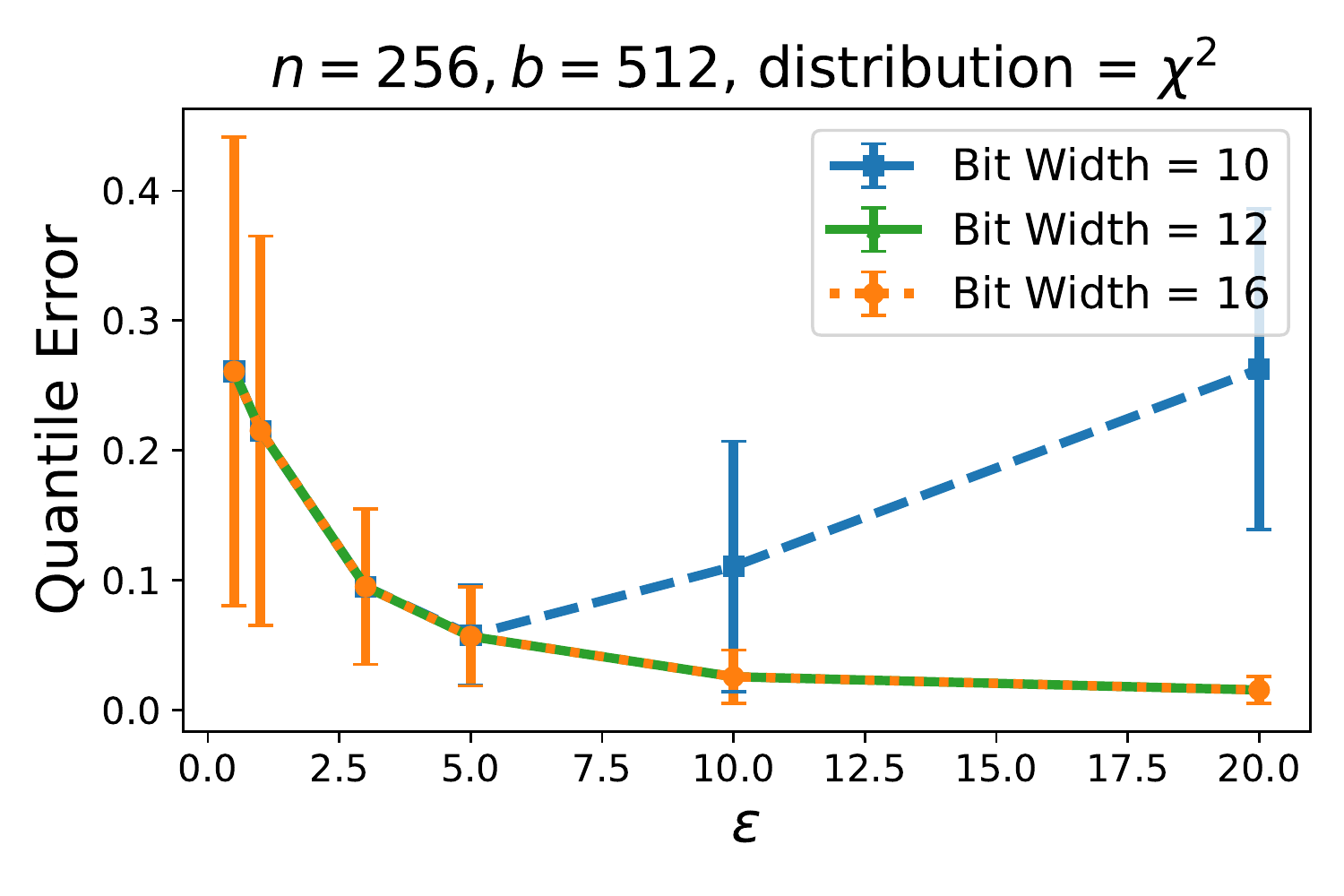}
\caption{\small{
     The quantile error (defined in \eqref{eq:sfl:quantile-error})
     incurred by \Cref{algo:sfl:quantile-dp:new} to estimate the quantile of $n=256$ numbers drawn from a uniform or $\chi^2(4)$ distribution with  $(\eps, 10^{-5})$-differential  privacy. 
     }}
\label{fig:sfl:expt:dpq}
\end{figure} 

\myparagraph{Results}
The results are given in \Cref{fig:sfl:expt:dpq}. 
For $n=256$ and $b=64$, we find that the quantile error is $0.14$ for the uniform distribution at 
$\epsilon \approx 1$; this means we might find the $36$\textsuperscript{rd} percentile or the $64$\textsuperscript{th} percentile instead of the median. 
This error quickly falls to $0.03$ at $\epsilon\approx 5$ at large enough bit widths. At a bit width of $10$, we incur errors due to the modular wraparound at $\eps \ge 5$.
The results are also qualitatively similar for other settings, although the quantile error is unsurprisingly higher at $b > n$. 

\subsection{End-to-end Differential Privacy with \newfl} 
\label{sec:sfl:expt:end-to-end-dp}
We now compare \newfl with \fedavg with end-to-end differential privacy on a synthetic classification dataset. 

\myparagraph{Dataset and Models}
The synthetic dataset contains $k=10$ classes in $d=20$ dimensions and $n=2500$ training clients. The class-conditional distribution $q(x \vert y=k) = \Ncal(\mu_k, I_d)$ is a Gaussian and is the same across all the clients while there is a label shift, i.e., $q_i(y)$ varies across clients. For the training clients, we have $q_i(y) = \text{Dir}(0.5)$ is a Dirichlet distribution with parameter $0.5$, while for validation and test clients, we have $q_i(y) = \text{Dir}(0.01)$. For each client, we sample $100$ examples from its data distribution. We refer to Appendix~\ref{sec:a:sfl:dp-expt} for details.

\myparagraph{Algorithms and Privacy Budgeting}
For the FedAvg baseline, we clip the model updates to an $\ell_2$ norm bound of $C$, which is a tunable hyperparameter. We add Gaussian noise $\Ncal(0, \sigma_w^2 I)$ --- thus, each update satisfies $\sigma_w^2/(2 C^2)$-concentrated differential privacy. To get a privacy bound across all the rounds, we use the generic bounds of \cite{zhu2019poisson} for privacy amplification by subsampling
and composing the privacy loss across the number of rounds of the algorithm. Given a fixed norm bound $C$, we select the noise scale $\sigma_w$ to get $(\eps, 1/n)$-differential privacy over the entire algorithm, where $\eps$ is provided as an input, and $n$ is the number of clients. 

Each round of \newfl involves quantile computation and weight aggregation: we use \Cref{algo:sfl:quantile-dp:new} to compute the quantile of the losses clipped to a tuned bound $B$ using a hierarchical histogram with $b$ bins.
We clip the weight updates to a norm bound $C$ and add Gaussian noise, similar to FedAvg. The total privacy loss is calculated by composing the privacy loss across both the quantile and weight updates, and the number of rounds together with amplification by subsampling using the bounds of \cite{zhu2019poisson}. 

We calculate the noise scales $\sigma_q$ of the quantile and $\sigma_w$ of the weight update so that 
(a)  the privacy budget for the quantile computation to be $r$ times the privacy budget of the weight update, where $r$ is a hyperparameter, and
(b) the overall algorithm satisfies $(\eps, 1/n)$-differential privacy.
We tune the loss bound $B$, norm bound $C$, the number of bins $b$, and the quantile privacy ratio $r$ to attain the best $90$\textsuperscript{th} percentile misclassification error across validation clients. For all experiments, we train for 1000 rounds with 100 clients per round and a fixed learning rate of 0.1. For further details on the algorithms, privacy budgeting, and hyperparameters, we refer to Appendix~\ref{sec:a:sfl:dp-expt}.

\begin{figure}[t]
\centering
\adjincludegraphics[width=0.75\linewidth,trim={0 0pt 0 0pt},clip=true]{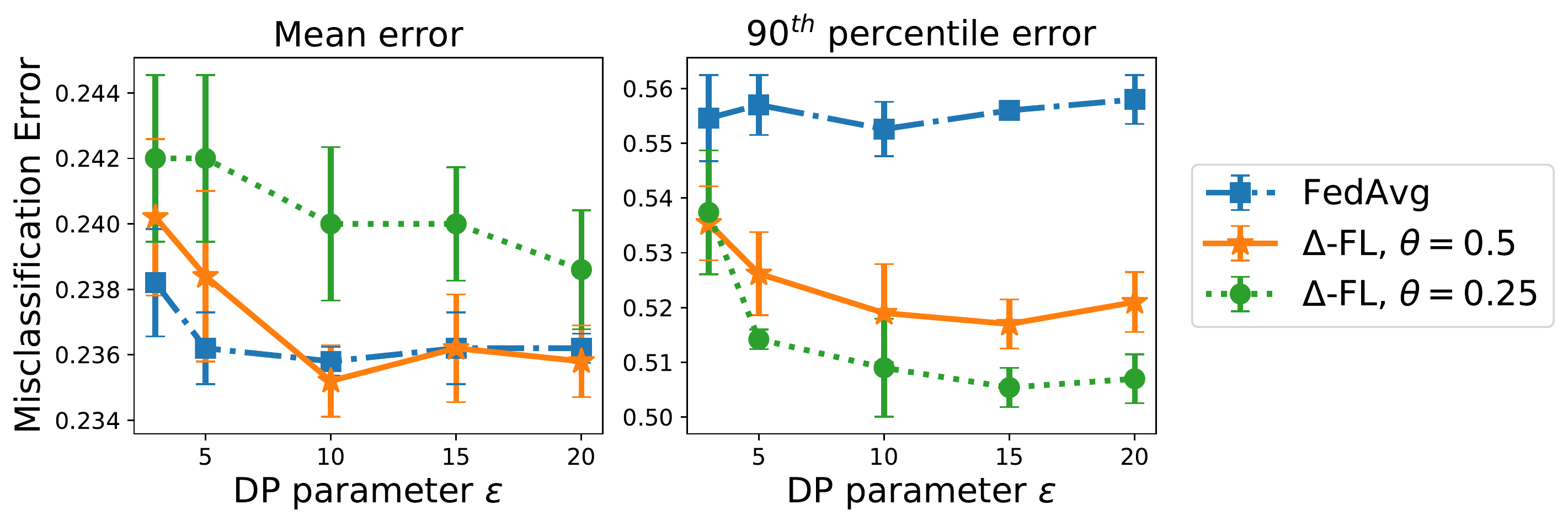}
\caption{\small{
     \newfl vs. FedAvg with $(\varepsilon, 1/n)$-differential privacy on a synthetic classification task in $\reals^{20}$ with $10$ classes and $n=2500$ clients. The error bars denote the standard deviation across $5$ random runs. 
     }}
\label{fig:sfl:expt:pri-ut-sfl}
\end{figure}

\myparagraph{Results: \newfl gives better tail performance under the same privacy budget}
The privacy-utility tradeoff of \newfl and FedAvg are shown in \Cref{fig:sfl:expt:pri-ut-sfl}. 
We see that \newfl with threshold level $\theta=0.5$ has a privacy-utility tradeoff within one standard deviation of FedAvg on the mean misclassification error while being $3.1$ percentage points better on the tail misclassification error as measured by its $90$\textsuperscript{th} percentile: $55.7\%$ for FedAvg versus $52.6\%$ for \newfl at $\eps = 5$. Smaller values of $\theta$, such as $\theta=0.25$ are $0.6$ percentage points worse on the mean error while being $1.2$ and $4.3$ percentage points better than $\theta=0.5$ and FedAvg respectively on the tail error. We note that the utility of \newfl degrades more at smaller $\eps$ when compared to FedAvg: $1.64$ percentage points for $\theta=0.5$ versus $0.2$ percentage points for FedAvg from $\eps=10$ to $\eps=3$ for the tail error. Despite this effect, the tail error for \newfl is smaller than FedAvg even at $\eps=3$.

\section{Conclusion} \label{sec:sfl:conclusion}

We present the \newfl framework that operates with heterogeneous clients while guaranteeing a minimal predictive performance to each client. 
\update{\newfl relies on a superquantile-based objective, parameterized by a tail threshold level, to optimize the tail statistics of the prediction errors on the client data distributions.}
We present a federated optimization algorithm that combines differentially private quantile estimation to filter out clients to run federated averaging steps. We derive finite time convergence guarantees 
of $O(1/\sqrt{T})$ in $T$ communication rounds in the nonconvex case and $O(\exp(-T/\kappa^{3/2}) + \kappa/T)$ in the strongly convex case with local condition number $\kappa$. We establish a utility bound of $O(\log^2 b /(\eps n))$ for $(\eps, \delta)$-differentially private quantile computation. 
Experimental results on federated learning benchmarks demonstrate the superior performance of \newfl over state-of-the-art baselines on the upper quantiles of the error on test clients, with particular improvements on data-poor clients, while being competitive on the mean error with and without differential privacy.

\subsection*{Acknowledgements}
The authors thank Peter Kairouz, Sewoong Oh, and Lun Wang for fruitful discussions.
The authors acknowledge support from NSF DMS 2023166,
DMS 1839371, CCF 2019844, the CIFAR program “Learning
in Machines and Brains”, faculty research awards, and a JP
Morgan Ph.D. fellowship. This work has been partially supported
by MIAI – Grenoble Alpes, (ANR-19-P3IA-0003).
The work was mainly performed while Krishna Pillutla was at the University of Washington, and Yassine Laguel was at the Universit\'{e} Grenoble Alpes.

\appendix

\section*{Appendix}

The outline of the appendix is as follows: 
\begin{itemize}
  \item \Cref{sec:a:sfl:convergence}: Convergence analyses and proofs of \Cref{thm:sfl:nonsmooth-nonconvex:main,thm:sfl-main:cvx}.
  \item \Cref{sec:a:sfl:expt}: Privacy analysis and proof of \Cref{thm:sfl:quantile:new}.
  \item \Cref{sec:a:sfl:expt}: Full experimental details and additional plots.
  \item \Cref{sec:a:sfl:dp-expt}: End-to-end \newfl --- details and numerical results. 
\end{itemize}

\section{Convergence Analysis} \label{sec:a:sfl:convergence}
Below, we restate and prove 
\Cref{thm:sfl:nonsmooth-nonconvex:main} as \Cref{thm:sfl:nonsmooth-nonconvex} in \Cref{sec:a:sfl:expt:ncvx}
and
\Cref{thm:sfl-main:cvx} as \Cref{thm:a:sfl-cvx} in \Cref{sec:a:sfl:expt:cvx},

\subsection{Review of Notation}
Here, we review the notation of the variants of the functions $F_i$ and $F_\theta$ in \Cref{tab:notation}.

\begin{table}[t]
\caption{Review of notation.} 
\label{tab:notation}
\begin{center}
\setlength{\tabcolsep}{8pt}
{\renewcommand{\arraystretch}{1.4}%
\begin{tabular}{p{5em} p{30em}}
     \toprule 
     \textbf{Function} & \textbf{Description} \\
     \midrule  
     $F_i$ & Loss function of client $i$ \\
     $\tilde F_i$ & Loss plus regularization on client $i$: $\tilde F_i(w) = F_i(w) + \frac{\lambda}{2}\normsq{w}$ 
     \\ 
     \midrule
     $F_\theta$ & The main objective of \newfl, defined in \eqref{eq:method:obj:minmax} \\
     $F_{\theta, S}$ & The analogue of $F_\theta$ defined on only on a sample $S$ of clients \\
     $\overline F_\theta$ & 
     Averaged minibatch objective:
     $\overline F_\theta(w) = \expect_S [F_{\theta, S}(w)]$ where the expectation is over uniform subsamples of clients of size $\vert S \vert =m$ \\
     $\overline \Phi_\theta^\mu$ & 
     The Moreau envelope of $\overline F_\theta$; see \eqref{eq:sfl:moreau-env} \\
     $\hat F_\theta$ & The variant of the \newfl objective computed with a tail mean, and used to formalize the connection between \Cref{algo:sfl-private,algo:sfl-new} \\
     $F_\theta^\nu$ & 
     Smoothing of $F_\theta$ using the KL divergence; see \eqref{eq:sfl-entropic-smoothing}\\
     \bottomrule
\end{tabular}} %
\end{center}
\end{table}

\subsection{Convergence Analysis: Non-convex Case}
\label{sec:a:sfl:expt:ncvx}

We review some definitions of subdifferentials and weak convexity before we get to the main theorem.

\myparagraph{Nonconvex Subdifferentials}
We start by recalling the definition of subgradients for nonsmooth functions (in finite dimension), following the terminology of\;\cite{rockafellar2009variational}.
Consider a function $\psi\colon\reals^d\rightarrow\reals\cup\{+\infty\}$
and a point $\bar{w}$ such that $\psi(\bar w) <+\infty$.
The regular (or Fr{\'e}chet) subdifferential of $\psi$ at $\bar w$ is defined by
\[
\partial \psi(\bar{w}) =\left\{s\in \mathbb{R}^d: ~\psi(w)\geq
\psi(\bar{w})+ \inp{s}{w-\bar{w})} + o(\|{w-\bar{w}}\|)\right\}.
\]
The regular subdifferential thus corresponds to the set of gradients of smooth functions that are below $\psi$ and coincide with it at $\bar\x$.
These notions generalize (sub)gradients of both smooth functions and convex functions: it reduces to the singleton $\{\grad \psi(\bar \x)\}$ when $\psi$ is smooth and to the standard subdifferential from convex analysis when $\psi$ is convex.

\myparagraph{Weak Convexity}
We recall the notion of weak convexity, which is one way of characterizing functions that are ``close'' to convex. 
A function $\psi:\reals^d\to \reals$ is said to be $\eta$-weakly convex if the function $w \mapsto \psi(w) + (\eta/2)\normsq*{w}$ is convex~\cite{nurminskii1973quasigradient}. 
The class of weakly convex functions includes all convex functions (with $\eta =0$) and all $L$-smooth functions (with $\eta=L$). 

Weak convexity also admits an equivalent first-order condition: for any $w, z \in \reals^d$ and $s \in \partial \psi(w)$, we have, 
\begin{align} \label{eq:weak-cvx-first-order}
    \psi(z) \ge \psi(w) + \inp{s}{z-w}  - \frac{\eta}{2}\normsq*{z-w} \,.
\end{align}

\noindent Weak convexity will feature in our developments in two ways: 
\begin{itemize}[nolistsep,leftmargin=\widthof{(a) }]
    \item In our case, both $F_\theta$ as well as $F_{\theta, S}$ are $L$-weakly convex, since each can be written as the maximum of a family of $L$-smooth functions~\cite[Lemma~4.2]{drusvyatskiy2019efficiency}.
    \item The prox operator for weakly convex functions is well-defined. Let $\psi$ be a $\eta$-weakly convex function. Its proximal or prox operator, with parameter $\mu > 0$, is defined as
    \[
        \prox_{\psi/\mu}(w) = 
        \argmin_{z} \left\{\psi(z) + \frac{\mu}{2}\normsq*{w-z}\right\} \,.
    \]
    It is well-defined (i.e., the argmin exists and is unique) for $\mu > \eta$, since the function inside the argmin is $(\mu - \eta)$-strongly convex. 
\end{itemize}

\noindent In nonsmooth and nonconvex optimization of weakly convex functions, we are interested in finding stationary points w.r.t. the regular subdifferential, i.e., points $w$ satisfying $0 \in \partial \psi(w)$. A natural measure of near-stationarity is, therefore,
\[
    \mathrm{dist}(0, \partial\psi(w)) = \inf_{s \in \partial  \psi(w)} \norm{s} \,.
\]

\myparagraph{Moreau Envelope}
Given a parameter $\mu > 0$, we define the Moreau envelope of $\overline F_\theta$ as 
\[
    \overline \Phi_\theta^\mu(w) = \inf_{z} \left\{
    \overline F_\theta(z) + \frac{\mu}{2} \normsq*{w-z} 
    \right\} \,.
\]
The Moreau envelope is well-defined since $\overline F_\theta$ is bounded from below by our assumptions. 
We will use two standard properties of the Moreau envelope:
\begin{itemize}[nolistsep]
    \item Since $\overline F_{\theta, S}$ is $L$-weakly convex, we have that its Moreau envelope $\overline \Phi_\theta^\mu(w)$ is continuously differentiable for $\mu > L$ with
    \begin{align} \label{eq:sfl:sudiff:prox-grad}
        \grad \overline \Phi_\theta^\mu(w) = \mu\left(w - \prox_{\overline F_\theta / \mu}(w)\right) \,.
    \end{align}
    \item The stationary points of $\overline \Phi_\theta^\mu$ and $\overline F_\theta$ coincide and $\inf \overline \Phi_\theta^\mu = \inf \overline F_\theta$ for $\mu > L$.
    \item We have for all $\mu > 0$ that $\overline \Phi_\theta^\mu(w) \le \overline F_\theta(w)$.
\end{itemize}

\myparagraph{Notation}
Let $S=S\pow{t}$ denote the random set of clients selected in round $t$ of \Cref{algo:sfl-new}. 
We define 
\begin{align}  \label{eq:sfl:subdiff:defn}
    \tilde \grad F_{\theta, S}(w\pow{t}) = \sum_{i \in S} \pi_i\pow{t} \grad F_i(w\pow{t}) \,,
\end{align}
where $\pi_i\pow{t} \in \argmax_{\pi \in \Pcal_{\theta, S}} \sum_{i \in S} \pi_i F_i(w\pow{t})$ is selected as in line~\ref{line:sfl-new:reweigh} of \Cref{algo:sfl-new}.
 A key consequence of the chain rule~\cite[][Thm. 10.6]{rockafellar2009variational} is
\begin{align} \label{eq:sfl:subdiff:chain}
    \tilde \grad F_{\theta, S}(w\pow{t}) \in \partial F_{\theta, S}(w\pow{t}) \,.
\end{align}

\myparagraph{Convergence Analysis}
We now state and prove the convergence result in the nonconvex case. 

\begin{theorem} \label{thm:sfl:nonsmooth-nonconvex}
    Fix the number of local steps $\tau$ and the number of rounds $T$, fix $\mu = 2L$ and set the learning rate 
    \[
        \gamma = \min \left\{ \frac{1}{4\tau L}, \frac{1}{\tau \sqrt{T}} \sqrt{\frac{\Delta F_0}{L G^2}}, \frac{1}{\tau T^{1/3}} \left( 
            \frac{\Delta F_0}{32 L^2 G^2 (1-\tau^{-1})}
        \right)^{1/3}   \right\} \,,
    \]
    where we denote $\Delta F_0 = \overline \Phi^\mu_\theta(w\pow{0}) - \inf \overline \Phi^\mu_\theta \le \overline F_\theta(w\pow{0}) - \inf \overline F_\theta$.
   Let $\hat w$ be sampled uniformly at random from $\{w\pow{0}, \ldots, w\pow{T-1}\}$.
    Ignoring absolute constants, we have the bound, 
    \[
        \expect\normsq*{\grad \overline \Phi^\mu_\theta(\hat w)} 
        \le 
        \sqrt{\frac{\Delta F_0 LG^2}{T}}
        + \left(\frac{\Delta F_0 L G (1-\tau^{-1})^{1/2} }{T}\right)^{2/3}
        + \frac{\Delta F_0 L}{T}\,.
    \]
\end{theorem}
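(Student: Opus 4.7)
The plan is to follow the Moreau-envelope descent scheme for nonsmooth weakly convex stochastic optimization of Davis--Drusvyatskiy, adapted to the federated setting with $\tau$ local steps and random client subsampling. The starting point is that $F_{\theta,S}$ is $L$-weakly convex, as the pointwise maximum over a polytope of $L$-smooth functions, so its Moreau envelope is smooth and its prox $z\pow{t}=\prox_{\overline F_\theta/\mu}(w\pow{t})$ is well defined for $\mu=2L>L$. Applying the definition of $\overline{\Phi}_\theta^\mu$ to $w\pow{t+1}$ and expanding using $w\pow{t+1}=w\pow{t}-\gamma U\pow{t}$ with $U\pow{t}=\sum_{i\in S}\pi_i\pow{t}\sum_{k=0}^{\tau-1}\grad F_i(w_{i,k}\pow{t})$ gives
\[
    \overline{\Phi}_\theta^\mu(w\pow{t+1}) \le \overline{\Phi}_\theta^\mu(w\pow{t}) + \underbrace{\mu\gamma\inp{z\pow{t}-w\pow{t}}{U\pow{t}}}_{\Tcal_1} + \underbrace{\tfrac{\mu\gamma^2}{2}\normsq{U\pow{t}}}_{\Tcal_2}.
\]

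The heart of the argument is to extract a negative multiple of $\normsq{\grad\overline{\Phi}_\theta^\mu(w\pow{t})}$ from $\Tcal_1$. I split $\Tcal_1=\Tcal_1'+(\Tcal_1-\Tcal_1')$ where $\Tcal_1'=\mu\tau\gamma\inp{z\pow{t}-w\pow{t}}{\tilde\grad F_{\theta,S}(w\pow{t})}$ uses the true subgradient from \eqref{eq:sfl:subdiff:chain}. Then I would apply the first-order condition \eqref{eq:weak-cvx-first-order} for $L$-weak convexity of $F_{\theta,S}$, take expectations over $S\sim U_m$ (noting $z\pow{t}$ is $\Fcal_t$-measurable) to replace $F_{\theta,S}$ by $\overline F_\theta$, and invoke the strong-convexity-based optimality of $z\pow{t}$ in the definition of the Moreau envelope to conclude
\[
    \expect_t[\Tcal_1'] \le -\tau\gamma(\mu-L)\normsq{z\pow{t}-w\pow{t}} = -\tfrac{\tau\gamma(\mu-L)}{\mu}\normsq*{\grad\overline{\Phi}_\theta^\mu(w\pow{t})},
\]
using \eqref{eq:sfl:sudiff:prox-grad}. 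With $\mu=2L$ this yields a $-\tfrac{\tau\gamma}{2}\normsq{\grad\overline{\Phi}_\theta^\mu(w\pow{t})}$ contribution.

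The residual $\Tcal_1-\Tcal_1'$ is the client-drift error. Cauchy--Schwarz plus a Young's inequality with weight $(\mu-L)/2$ absorbs half of the negative term and leaves a piece proportional to $\gamma L^2\expect_t\big[\sum_{i\in S}\pi_i\pow{t}\sum_{k=0}^{\tau-1}\normsq{w_{i,k}\pow{t}-w\pow{t}}\big]$. I would bound this local-drift sum in the standard way by unrolling the inner recursion and using $G$-Lipschitzness of each $F_i$; the bound scales as $\gamma^2\tau^2 G^2$ per inner step for $\gamma\tau L\le 1/4$, and the sum-over-$k$ structure produces the $\sum_{k=0}^{\tau-1} k = \binom{\tau}{2}$ factor, which is what yields the $(1-\tau^{-1})$ coefficient on the drift contribution (vanishing exactly when $\tau=1$). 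The noise term $\Tcal_2$ is bounded by $\tfrac{\mu\gamma^2}{2}\tau^2 G^2$ since $\norm{U\pow{t}}\le\tau G$ by Lipschitzness and $\sum_{i\in S}\pi_i\pow{t}=1$.

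Combining these estimates produces a one-step recursion of the form
\[
    \expect\overline{\Phi}_\theta^\mu(w\pow{t+1}) \le \expect\overline{\Phi}_\theta^\mu(w\pow{t}) - c_1\tau\gamma\,\expect\normsq*{\grad\overline{\Phi}_\theta^\mu(w\pow{t})} + c_2\gamma^2\tau^2 L G^2 + c_3\gamma^3\tau^3(1-\tau^{-1})L^2 G^2.
\]
Telescoping over $t=0,\ldots,T-1$, using $\overline{\Phi}_\theta^\mu(w\pow{0})-\inf\overline{\Phi}_\theta^\mu\le \Delta F_0$, dividing by $T\tau\gamma$, and using the uniform sampling of $\hat w$ gives a bound of the shape $\Delta F_0/(T\tau\gamma)+\gamma\tau L G^2+\gamma^2\tau^2(1-\tau^{-1})L^2 G^2$. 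Optimizing $\gamma$ under the stability constraint $\gamma\le 1/(4\tau L)$ by balancing the $1/(T\tau\gamma)$ term against each of the other two yields the three contributions in the statement: the stability cap gives $\Delta F_0 L/T$, balancing against the $\gamma$-linear term gives $\sqrt{\Delta F_0 L G^2/T}$, and balancing against the $\gamma^2$-drift term gives the $(1-\tau^{-1})^{1/3}(\Delta F_0 L G/T)^{2/3}$ piece. The main obstacle is the Young's-inequality bookkeeping in the drift bound, which must simultaneously (a) preserve a definite fraction of $-\normsq{\grad\overline{\Phi}_\theta^\mu(w\pow{t})}$, (b) produce the correct $(1-\tau^{-1})$ factor that reduces to the smooth noncovnex single-step analysis at $\tau=1$, and (c) not introduce spurious $\tau$-dependence that spoils the exponents $1/2$ and $2/3$ in the final rates.
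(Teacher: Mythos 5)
Your proposal is correct and follows essentially the same route as the paper's proof: the Davis--Drusvyatskiy Moreau-envelope framework with the identical $\Tcal_1'$/drift split, the weak-convexity plus prox-optimality bound giving $-\tfrac{\tau\gamma(\mu-L)}{\mu}\normsq{\grad\overline\Phi_\theta^\mu(w\pow{t})}$, the Young-absorbed client-drift bound of order $\gamma^3\tau^3(1-\tau^{-1})L^2G^2$, the $\tau^2\gamma^2G^2$ noise term, and the same telescoping and three-way learning-rate optimization. The only blemish is a dropped factor of $\mu$ in your intermediate display for $\expect_t[\Tcal_1']$ (it should read $-\mu\tau\gamma(\mu-L)\normsq{z\pow{t}-w\pow{t}}$), but the final expression you state is the correct one, so this is a typo rather than a gap.
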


\begin{proof} %
We start with some notation. 
Throughout, we denote $z\pow{t}$ as the proximal point of $w\pow{t}$: 
\[
    z\pow{t} = \prox_{\overline F_\theta/\mu}(w\pow{t})
    = \argmin_z \left\{ \overline F_\theta(z) + \frac{\mu}{2} \normsq*{z - w\pow{t}}\right\} \,.
\]
Let $\Fcal\pow{t}$ denote the sigma algebra generated by $w\pow{t}$ and define $\expect_t[\cdot] = \expect[\cdot \mid \Fcal\pow{t}]$. By definition, we have that $z\pow{t}$ is also $\Fcal\pow{t}$-measurable. 

We use the update $w\pow{t+1}=w\pow{t}-\gamma \sum_{i\in S} \pi_i\pow{t} \sum_{k=0}^{\tau-1} \grad F_i(w_{i, k}\pow{t})$ to get
\begin{align} \nonumber
    \overline \Phi_\theta^\mu & (w\pow{t+1})
    = 
    \min_z \left\{ \overline F_\theta(z) + \frac{\mu}{2}\normsq*{z - w\pow{t+1}} \right\} \\ \nonumber
    &\le \overline F_\theta(z\pow{t}) + \frac{\mu}{2} \normsq*{z\pow{t} - w\pow{t+1}} \\ \nonumber
    &  \, 
    \begin{aligned} 
    =   \overline F_\theta(z\pow{t}) + \frac{\mu}{2} \normsq*{z\pow{t} - w\pow{t}} 
    &+ \mu \gamma\inp*{z\pow{t} - w\pow{t}}{\sum_{i \in S}\pi_i\pow{t} \sum_{k=0}^{\tau-1} \grad F_i(w_{i, k}\pow{t})} \\  & +  \frac{\mu\gamma^2}{2} \normsq*{\sum_{i \in S} \pi_i\pow{t} \sum_{k=0}^{\tau-1} \grad F_i(w_{i, k}\pow{t})}  
    \end{aligned}\\
    & \, 
    \begin{aligned} 
    = \overline \Phi_\theta^\mu(w\pow{t})
    & + \underbrace{\mu \gamma\inp*{z\pow{t} - w\pow{t}}{\sum_{i \in S}\pi_i\pow{t} \sum_{k=0}^{\tau-1} \grad F_i(w_{i, k}\pow{t})}}_{=: \Tcal_1} 
    \\ & + \underbrace{\frac{\mu\gamma^2}{2} \normsq*{\sum_{i \in S} \pi_i\pow{t} \sum_{k=0}^{\tau-1} \grad F_i(w_{i, k}\pow{t})}  }_{=: \Tcal_2} \,.
    \end{aligned}
    \label{eq:sfl:nonsmooth-pf:main}
\end{align}
For $\Tcal_1$, we consider the effect of a single update with a learning $\tau\gamma$: 
\[
    \Tcal_1' := \mu \tau \gamma\inp*{z\pow{t} - w\pow{t}}{\sum_{i \in S}\pi_i\pow{t} \grad F_i(w\pow{t})} \,,
\]
so that the difference $\Tcal_1 - \Tcal_1'$ is the effect of the drift introduced by taking multiple local steps. We bound the first order term $\Tcal_1'$, the drift term $\Tcal_1 - \Tcal_1'$ and the second order term $\Tcal_2$ separately. 

\myparagraph{Bounding the first order term $\Tcal_1'$}
By definition of the weights $\pi_i\pow{t}$, we have 
$\sum_{i\in S} \pi_i\pow{t} \grad F_i(w\pow{t}) = \tilde \grad F_{\theta, S}(w\pow{t}) \in \partial F_{\theta, S}(w\pow{t})$, see also \eqref{eq:sfl:subdiff:defn}. 
This allows us to invoke the weak convexity of $F_{\theta, S}$, in particular \eqref{eq:weak-cvx-first-order}, to bound
\begin{align*}
    \frac{\Tcal_1'}{\mu\tau\gamma}
    = \inp*{z\pow{t} - w\pow{t}}{\tilde\grad F_{\theta, S}(w\pow{t})}
    \le
    F_{\theta, S}(z\pow{t}) - F_{\theta, S}(w\pow{t}) + \frac{L}{2} \normsq*{z\pow{t}-w\pow{t}} \,.
\end{align*}
Taking an expectation conditioned on $\Fcal\pow{t}$ (i.e., over the randomness in $S$), 
we get $\expect_t[F_{\theta,S}(w\pow{t})] = \overline F_\theta(w\pow{t})$. 
Further, since
 $z\pow{t}$ is $\Fcal\pow{t}$-measurable, 
 we also have $\expect_t[F_{\theta,S}(z\pow{t})] = \overline F_\theta(z\pow{t})$. 
 That gives, 
\begin{align*}
\frac{1}{\mu\tau\gamma} \expect_t[\Tcal_1']
    &\le
    \left(\overline F_\theta(z\pow{t}) + \frac{\mu}{2}\normsq*{z\pow{t}-w\pow{t}}\right)
    - 
    \overline F_\theta(w\pow{t}) 
    - \frac{\mu-L}{2}\normsq*{z\pow{t}-w\pow{t}} \,.
\end{align*}
Note that the function
\[
    h(z) := \overline F_\theta(z) + \frac{\mu}{2} \normsq*{z - w\pow{t}}
\]
is $(\mu-L)$-strongly convex and $z\pow{t}$ is its minimizer. This gives, 
\[
    h(w\pow{t}) - h(z\pow{t}) \ge \frac{\mu - L}{2} \normsq*{z\pow{t} - w\pow{t}} \,,
\]
so that we have the bound
\begin{align} \label{eq:sfl:nonsmooth-pf:A}
    \frac{1}{\mu\tau\gamma}\expect_t[\Tcal_1']
    &\le 
    - (\mu-L)\normsq*{z\pow{t}-w\pow{t}} 
    \stackrel{\eqref{eq:sfl:sudiff:prox-grad}}{=} - \frac{\mu - L}{\mu^2} \normsq*{\grad \overline \Phi_\theta^\mu(w\pow{t})}
    \,.
\end{align}
\myparagraph{Bounding the effect of the drift $\Tcal_1 - \Tcal_1'$}
The contribution of $k$\textsuperscript{th} local step to the drift $\Tcal_1 - \Tcal_1'$ can be bounded as
\begin{align*}
    \Bigg\vert &\inp*{z\pow{t} - w\pow{t}}{\sum_{i \in S}\pi_i\pow{t} \left(\grad F_i(w_{i, k}\pow{t}) - \grad F_i(w\pow{t})\right)}\Bigg\vert  \\
    &\stackrel{\text{(i)}}{\le} \frac{\mu-L}{2} \normsq*{z\pow{t}-w\pow{t}} + \frac{1}{2(\mu-L)} \normsq*{\sum_{i \in S} \pi_i\pow{t}\left(\grad F_i(w_{i, k}\pow{t}) - \grad F_i(w\pow{t}) \right) } \\
    &\stackrel{\text{(ii)}}{\le} \frac{\mu-L}{2} \normsq*{z\pow{t}-w\pow{t}} + \frac{1}{2(\mu-L)} \sum_{i \in S} \pi_i\pow{t} \normsq*{\grad F_i(w_{i, k}\pow{t}) - \grad F_i(w\pow{t}) } \\
    &\stackrel{\text{(iii)}}{\le} \frac{\mu-L}{2} \normsq*{z\pow{t}-w\pow{t}} + \frac{L^2}{2(\mu-L)} \sum_{i \in S} \pi_i\pow{t} \normsq*{w_{i, k}\pow{t} - w\pow{t} } \,.
\end{align*}
Here, we first used (i) the Cauchy-Schwarz inequality, (ii) Jensen's inequality, and (iii) the smoothness of $F_i$. 
Summing this over $k$, we get the bound
\begin{align}\nonumber
    \expect_t \left\vert \Tcal_1 - \Tcal_1'\right\vert
    &\le \frac{\tau \gamma(\mu - L)}{2\mu} \normsq*{\grad \overline \Phi_\theta^\mu(w\pow{t})} + \frac{\mu \gamma L^2}{2(\mu-L)} d\pow{t}  \\
    &\le \frac{\tau \gamma(\mu - L)}{2\mu} \normsq*{\grad \overline \Phi_\theta^\mu(w\pow{t})} + \frac{4\mu \tau^3\gamma^3 G^2}{\mu-L} (1-\tau\inv) \,,
     \label{eq:sfl:nonsmooth-pf:B}
\end{align}
where we bounded 
$d\pow{t} := \expect_t \left[\sum_{i \in S} \sum_{k=0}^{\tau-1} \pi_i\pow{t} \normsq*{w_{i, k}\pow{t} - w\pow{t} } \right]$ 
by \Cref{prop:sfl:client-drift}. 

\myparagraph{Bounding the second order term $\Tcal_2$}
Next, we bound $\Tcal_2$ as 
\begin{align} \label{eq:sfl:nonsmooth-pf:C}
    \Tcal_2 &= \frac{\mu\gamma^2}{2}\normsq*{\sum_{i \in S} \pi_i\pow{t} \sum_{k=0}^{\tau-1} \grad F_i(w_{i, k}\pow{t})} \le \frac{\mu\gamma^2 \tau}{2} \sum_{i \in S} \pi_i\pow{t} \sum_{k=0}^{\tau-1} \normsq*{\grad F_i(w_{i, k}\pow{t})} 
    \nonumber
    \\& \le \frac{\mu\gamma^2 \tau^2 G^2}{2} \,,
\end{align}
where we used Jensen's inequality and $\normsq*{\grad F_i(w_{i, k}\pow{t})} \le G^2$ since $F_i$ is $G$-Lipschitz. 

\myparagraph{One step update and telescoping the bound}
Plugging \eqref{eq:sfl:nonsmooth-pf:A} to \eqref{eq:sfl:nonsmooth-pf:C} into \eqref{eq:sfl:nonsmooth-pf:main}, we have, 
\begin{align*}
    \expect_t\left[\overline \Phi_\theta^\mu(w\pow{t+1})\right]
    \le 
    \overline \Phi_\theta^\mu(w\pow{t})
    &- \frac{\tau\gamma(\mu-L)}{2\mu} \normsq*{\grad \overline \Phi_\theta^\mu(w\pow{t})}
    \\& + \frac{\mu \gamma^2 \tau^2 G^2}{2}\left( 1 + \frac{8 L^2 \gamma}{\mu - L}(\tau-1) \right) \,.
\end{align*}
Finally, 
taking an unconditional expectation,
summing this up over $t=0$ to $T-1$ and rearranging gives us the bound
\[
        \expect\normsq*{\grad \overline \Phi^\mu_\theta(\hat w)} 
        \le \frac{4 \Delta F_0}{\tau \gamma T} + 4 \tau \gamma L G^2\left( 1 + 8L \gamma(\tau-1)\right) \,,
\]
where we plugged in $\mu = 2L$.
Plugging in the choice of $\gamma$ (cf. \Cref{lem:sfl:best-params-2}) completes the proof.
\end{proof}

\subsection{Convergence Analysis: Strongly Convex Case} \label{sec:a:sfl:expt:cvx}
The fully specified version of \Cref{thm:sfl-main:cvx} is the following.  

\begin{theorem}[Convergence rate, Strongly Convex Case] \label{thm:a:sfl-cvx}
Suppose that each $F_i$ is convex and the regularization parameter satisfies $0 < \lambda < L$. Define notation $\kappa = (L + \lambda) / \lambda$, $w^\star = \argmin_w F_\theta(w)$ and $\Delta_0 = \normsq{w\pow{0} - w^\star}$. Assume also that the number of rounds is $T \ge 16 \kappa^{3/2}$. 
Fix a smoothing parameter $\nu > 0$ as 
\[
    \nu = \frac{8 G^2 \delta}{\lambda} \big(1 \vee 32 \kappa^2 \delta\big) \,,
\]
where $\delta > 0$ is given by
\[
    \delta = \min\left\{  
        \frac{1}{16 \kappa^{3/2}}, 
        \frac{1}{T} \left( 1 \vee \log \frac{C T}{\log m}\right), 
        \frac{1}{T} \left( 1 \vee \log \frac{CT^2}{\kappa^2 \log m}\right) 
    \right\} \,,
\]
and $C = \lambda^2 \Delta_0 / G^2$. 
Letting $L' = L + \lambda + G^2/\nu$, fix a learning rate 
\begin{align*}
    \gamma = \min \Bigg\{
        \frac{1}{4\tau L'}, \,
        \frac{1}{8\tau \kappa \sqrt{2 \lambda L'}},  \,
        \frac{1}{\lambda \tau T} \left(1 \vee \log C \theta m T \right), 
        \frac{1}{\lambda \tau T} \left(1 \vee \log \frac{C T^2}{\kappa^2(1-\tau^{-1})} \right)^2
    \Bigg\}\,.
\end{align*}
Consider the sequence $(w\pow{t})_{t=0}^T$ produced by \Cref{algo:sfl-new} run with smoothing parameter $\nu$ and learning rate $\gamma$ chosen as above, and the corresponding averaged iterate
\[
    \overline w\pow{T} := \frac{\sum_{t=0}^T  w\pow{t} \left(1 - \frac{\lambda\tau\gamma}{2}\right)^{-(1 + t)}}{\sum_{r=0}^T \left(1 - \frac{\lambda\tau \gamma}{2}\right)^{-(1 + r)}} \,.
\]
Then, ignoring absolute constants, we have, 
\begin{align*}
    \expectation{F_\theta(\overline w\pow{T}) - F_\theta(w^\star)}{} \leq 
    & \,\,  \lambda \normsq{w\pow{0} - w^\star} \exp\left(- \frac{T}{16 \kappa^{3/2}} \right) + \frac{B}{\sqrt{\theta m}} \\
    &+ \frac{G^2}{\lambda T}\left( 
        \frac{1}{\theta m} + \log m \right)  \, \left(1 \vee \log \frac{\lambda^2 \Delta_0 \theta m T }{G^2} 
    \right) \\
    &+ \frac{G^2 \kappa^2}{\lambda T^2}\left(
        1-\tau^{-1} + \log m  \right)\left(1 \vee \log \frac{\lambda^2 \Delta_0 T^2}{G^2 \kappa^2}
    \right)^2 \,.
\end{align*}
\end{theorem}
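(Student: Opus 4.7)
The plan is to work with the smoothed surrogate objective $\overline F_\theta^\nu(w) = \expect_{S \sim U_m}[F_{\theta, S}^\nu(w)]$, which the algorithm actually minimizes (up to the partial-participation bias), and then translate back to the original $F_\theta$ at the end using two closeness bounds: the smoothing gap $|F_\theta(w) - F_\theta^\nu(w)| \le 2 \nu \log n$ and the partial-participation bias $|F_\theta(w) - \overline F_\theta(w)| \le B/\sqrt{\theta m}$ from \eqref{eq:sfl-main:bias-bound:no-smooth}. Absorbing the regularization into $\tilde F_i(w) = F_i(w) + (\lambda/2)\normsq{w}$, each $\tilde F_i$ becomes $\lambda$-strongly convex and $(L + \lambda)$-smooth, which passes through the max-and-entropic-smoothing operation so that $F_{\theta, S}^\nu$ is $\lambda$-strongly convex and $L' = (L + \lambda + G^2/\nu)$-smooth (by the standard Nesterov smoothing property~\cite{nesterov2005smooth,beck2012smoothing}).

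\textbf{One-step descent inequality.} Following the sketch in the excerpt, I would expand $\normsq{w\pow{t+1} - w}$ for arbitrary $w$ and split into a first-order term $\Tcal_1$ and a noise term $\Tcal_2$. For the first-order term, I would introduce the ``single-step'' proxy $\Tcal_1' = 2\tau\gamma \inp{\grad F_{\theta,S}^\nu(w\pow{t})}{w\pow{t} - w}$ using the identity $\sum_{i \in S} \pi_i\pow{t} \grad \tilde F_i(w\pow{t}) = \grad F_{\theta, S}^\nu(w\pow{t})$ (the smooth analogue of \eqref{eq:sfl-ncvx:subdiff-partial}) and control it via $\lambda$-strong convexity of $F_{\theta, S}^\nu$. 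The drift gap $\Tcal_1 - \Tcal_1'$ will be controlled exactly as in the nonconvex proof of \Cref{thm:sfl:nonsmooth-nonconvex:main}, using Cauchy--Schwarz, $L'$-smoothness of $\tilde F_i$, and the client-drift bound $\expect_t \sum_{i \in S} \pi_i\pow{t} \sum_k \normsq{w_{i,k}\pow{t} - w\pow{t}} = O(\gamma^2)$. For $\Tcal_2$, I would set $w = \overline w^\star := \argmin \overline F_\theta^\nu$, add and subtract the expected gradient, and invoke the key variance bound $\expect_{S \sim U_m}\normsq{\sum_{i\in S} \pi_i\pow{t} \grad \tilde F_i(w\pow{t}) - \grad \overline F_\theta^\nu(w\pow{t})} \le 8 G^2/(\theta m)$ from \cite[Prop. 2]{levy2020largescale}, together with Lipschitzness to absorb the remaining $\normsq{\grad \overline F_\theta^\nu}$ term.

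\textbf{Telescoping with geometric weights.} Taking expectations and rearranging produces an inequality of the form
\begin{equation*}
\expect_t \normsq{w\pow{t+1} - \overline w^\star} \le \Big(1 - \tfrac{\lambda \tau \gamma}{2}\Big)\normsq{w\pow{t} - \overline w^\star} - \tau\gamma \big(\overline F_\theta^\nu(w\pow{t}) - \overline F_\theta^\nu(\overline w^\star)\big) + \tau \gamma^2 A + \tau \gamma^3 C
\end{equation*}
for constants $A = O(G^2/(\theta m))$ and $C = O(G^2 L'^2 (1-\tau^{-1}))$. Multiplying through by the geometric weights $\beta_t = (1 - \lambda\tau\gamma/2)^{-(t+1)}$ and summing telescopes the $\normsq{\cdot}$ terms, while Jensen's inequality applied to the convex $\overline F_\theta^\nu$ produces a function-value bound at the averaged iterate $\overline w\pow{T}$: a contraction term $\exp(-\lambda \tau \gamma T/2) \cdot \lambda \normsq{w\pow{0} - \overline w^\star}$ plus a noise floor of order $\gamma A + \gamma^2 C$.

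\textbf{Translation and parameter tuning.} I would convert the resulting bound on $\overline F_\theta^\nu(\overline w\pow{T}) - \overline F_\theta^\nu(\overline w^\star)$ back to $F_\theta(\overline w\pow{T}) - F_\theta(w^\star)$ by adding the partial-participation bias $B/\sqrt{\theta m}$ and the smoothing error $O(\nu \log n)$. The last --- and delicate --- step is to optimize the two tuning knobs $\gamma$ and $\nu$ to balance four sources of error: (i) contraction rate, which wants $\gamma$ large, capped by $\gamma \lesssim 1/(\tau L')$ and hence by $\nu$; (ii) the $\gamma/(\theta m)$ noise; (iii) the $\gamma^2$ drift noise; and (iv) the smoothing bias $\nu \log n$. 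The main obstacle will be this coupled optimization: because $L'$ and thus the effective condition number depend on $1/\nu$, the usual choice $\gamma \asymp \log T/(\lambda \tau T)$ must be paired with a carefully chosen $\nu$ so that $\kappa' = L'/\lambda \asymp \kappa^{3/2}$, producing the stated $T \gtrsim \kappa^{3/2}$ burn-in and the mixed $1/T + \kappa^2/T^2$ rate structure with the correct logarithmic factors. I expect this tuning, rather than the descent argument itself, to be where most of the technical care resides.
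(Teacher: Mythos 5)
Your proposal follows essentially the same route as the paper's proof: the smoothed surrogate $\overline F_\theta^\nu$, the $\Tcal_1/\Tcal_1'/\Tcal_2$ decomposition with client-drift control, the $8G^2/(\theta m)$ variance bound, geometric-weight telescoping, translation back via the partial-participation and smoothing gaps, and the coupled $\gamma$--$\nu$ tuning that produces the $\kappa^{3/2}$ burn-in. The only substantive deviations are minor: the paper absorbs the residual $\normsq{\grad \overline F_\theta^\nu(w\pow{t})}$ in $\Tcal_2$ via $L'$-smoothness (folding it into the negative function gap under $\gamma \le 1/(4\tau L')$) rather than a Lipschitz bound, and it additionally controls $\normsq{w\pow{0} - \overline w^\star}$ in terms of $\normsq{w\pow{0} - w^\star}$ using strong convexity plus the bias bound (with smoothing error $2\nu\log m$ rather than $2\nu\log n$, since smoothing acts on the sampled cohort) --- details your sketch omits but which fit directly into your outline.
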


We review some notation before giving the proof.
\myparagraph{Notation}
Analogous to the smoothing $F_\theta^\nu$ of $F_\theta$, we define the smoothing of the sample version $F_{\theta, S}$ as
\begin{align} \label{eq:sfl-entropic-smoothing-sample}
    F_{\theta, S}^\nu(w) = \max_{\pi \in \mathcal{P}_{\theta, S}} 
        \left\{ \sum_{i \in S} \pi_i F_i(w) - \nu D_S(\pi) \right\} + \frac{\lambda}{2}\normsq{w} \,,
\end{align}
From Danskin's theorem~\citep[Proposition B.25]{bertsekas1999nonlinear}, we get the expression of its gradient as 
\begin{align} \label{eq:sfl:smooth-grad:sample}
    \grad F_{\theta, S}^\nu(w\pow{t}) = \sum_{i \in S} \pi_i\pow{t} \grad \tilde F_i(w\pow{t}) \,,
\end{align}
where $\pi\pow{t}$ attains the unique argmax in \eqref{eq:sfl-entropic-smoothing-sample} (see also \eqref{eq:sfl:smooth-argmax} for the definition).

We define the averaged superquantile as 
\begin{align} \label{eq:def_avg_superquantile}
    \overline F_\theta^\nu(w) = \expect_{S \sim U_m}[F_{\theta, S}^\nu(w)] \,,
\end{align}
where $U_m$ is the uniform distribution over subsets of $[n]$ of size $m$. 
Finally, let $\overline w^\star = \argmin_w \overline F_{\theta}^\nu(w)$.

We also define the notion of \emph{client drift} as 
\begin{align} \label{eq:sfl:pf:client-drift}
    d\pow{t} := \expect_{S \sim U_m} \left[ \sum_{i \in S} \pi_i\pow{t} \sum_{k=0}^{\tau-1} \normsq{w_{i, k}\pow{t} - w\pow{t}} \, \middle\vert \, \Fcal_t \right] \,.
\end{align}

\begin{proof}[Proof of \Cref{thm:a:sfl-cvx}]
We denote $\expect_t[\cdot] := \expect[\,\cdot \mid \Fcal_t]$.
We expand the update $w\pow{t+1} = w\pow{t} - \gamma \sum_{i\in S} \pi_i\pow{t} \sum_{k=0}^{\tau-1} \grad \tilde F_i(w_{i, k}\pow{t})$
to get 
\begin{align} \label{eq:sfl:cvx-pf:main}
\begin{aligned}
    \|w\pow{t+1} - \overline w^\star\|^2
    =& \,  \normsq{w\pow{t} - \overline w^\star}
    - \underbrace{2 \gamma 
            \sum_{i \in S} \pi_i\pow{t} \sum_{k=0}^{\tau-1} \dotproduct{\grad \tilde{F}_i(\localmodel{t}{i}{k})}{w\pow{t} - \overline w^\star}
        }_{=:\Tcal_1}
        \\ & 
    + \underbrace{\gamma^2
            \squarednorm{
                \sum_{i\in S} \pi_i\pow{t} \sum_{k=0}^{\tau-1} \grad \tilde{F}_i(\localmodel{t}{i}{k})
            }
        }_{=: \Tcal_2} \,.
\end{aligned}
\end{align}
In order to bound the first order term $\Tcal_1$, we analyze the effect of a single local step of learning rate $\tau\gamma$ rather than $\tau$ local steps of learning rate $\gamma$. The analogue of the first order term $\Tcal_1$, in this case, would be
\[
    \Tcal_1' := 2 \tau\gamma \, \sum_{i \in S} \pi_i\pow{t} \inp*{\grad \tilde F_i(w\pow{t})}{w\pow{t} - \overline w^\star}
    \stackrel{\eqref{eq:sfl:smooth-grad:sample}}{=} 2 \tau\gamma\, \inp*{\grad F_{\theta, S}^\nu(w\pow{t})}{w\pow{t} - \overline w^\star} \,.
\]
The difference $\Tcal_1 - \Tcal_1'$ is the effect of the drift from taking multiple local steps. From here, the proof consists of the following steps:
\begin{enumerate}[nosep]
    \item \label{item:sfl-cvx:1}
    bound the first order term $\Tcal_1'$,
    \item \label{item:sfl-cvx:2}
    bound the drift $\Tcal_1 - \Tcal_1'$,
    \item \label{item:sfl-cvx:3}
    bound the second order term $\Tcal_2$,
    \item \label{item:sfl-cvx:4}
    combine these to get the effect of one communication round $t$, 
    \item \label{item:sfl-cvx:5}
    unroll the bound over all communication rounds $t=1,\ldots, T$, 
    \item \label{item:sfl-cvx:6}
    connect optimization on the surrogate $\overline F_{\theta}^\nu$ to the original $F_\theta$, 
    \item \label{item:sfl-cvx:7}
    optimize the choices of the learning rate $\gamma$ and smoothing parameter $\nu$.  
\end{enumerate}

\myparagraph{\ref{item:sfl-cvx:1}. Bounding the first order term $\Tcal_1'$}
We use the $\lambda$-strong convexity (cf. \eqref{lem:str_convex}) of $F_{\theta, S}^\nu$ to get
\[
    \Tcal_1' \ge  2\tau\gamma\, \left(F_{\theta,S}^\nu(w\pow{t}) -  F_{\theta,S}^\nu (\overline w^\star) 
            + \frac{\lambda}{2} \normsq{w\pow{t} - \overline w^\star} \right) \,.
\]
Taking an expectation w.r.t. the sampling $S$ (i.e., conditioned on $\Fcal_t$) gives
\begin{align} \label{eq:sfl:cvx-pf:1}
    \expect_t[\Tcal_1'] \ge  2\tau\gamma\, \left(\overline F_{\theta}^\nu(w\pow{t}) - \overline F_{\theta}^\nu (\overline w^\star) 
            + \frac{\lambda}{2} \normsq{w\pow{t} - \overline w^\star} \right)  \,.
\end{align}

\myparagraph{\ref{item:sfl-cvx:2}. Bounding the effect of the drift $\Tcal_1 - \Tcal_1'$}
The contribution of $k$\textsuperscript{th} local step to the drift $\Tcal_1 - \Tcal_1'$ can be bounded as
\begin{align*}
    \Bigg\vert &\inp*{\sum_{i \in S}\pi_i\pow{t} \left(\grad \tilde F_i(w_{i, k}\pow{t}) - \grad \tilde F_i(w\pow{t})\right)}{w\pow{t} - \overline w^\star}\Bigg\vert  \\
    &\stackrel{\text{(i)}}{\le} \frac{\lambda}{4} \normsq*{w\pow{t}- \overline w^\star} + \frac{1}{\lambda} \normsq*{\sum_{i \in S} \pi_i\pow{t}\left(\grad \tilde F_i(w_{i, k}\pow{t}) - \grad \tilde F_i(w\pow{t}) \right) } \\
    &\stackrel{\text{(ii)}}{\le} \frac{\lambda}{4} \normsq*{w\pow{t}- \overline w^\star} + \frac{1}{\lambda} \sum_{i \in S} \pi_i\pow{t} \normsq*{\grad \tilde F_i(w_{i, k}\pow{t}) - \grad \tilde F_i(w\pow{t}) } \\
    &\stackrel{\text{(iii)}}{\le} \frac{\lambda}{4} \normsq*{w\pow{t}- \overline w^\star} + \frac{(L+\lambda)^2}{\lambda} \sum_{i \in S} \pi_i\pow{t} \normsq*{w_{i, k}\pow{t} - w\pow{t} } \,.
\end{align*}
Here, we first used (i) the Cauchy-Schwarz inequality, (ii) Jensen's inequality, and (iii) the $(L+\lambda)$-smoothness of $\tilde F_i$. 
Summing this over $k$, we get the bound
\begin{align} \label{eq:sfl:cvx-pf:2}
    \expect_t\vert \Tcal_1 - \Tcal_1'\vert
    \le \frac{\lambda \tau \gamma}{2}\normsq{w\pow{t}-\overline w^\star} + \frac{2 \gamma(L+\lambda)^2}{\lambda} d\pow{t} \,,
\end{align}
where we use the definition of $d\pow{t}$ from \eqref{eq:sfl:pf:client-drift}.

\myparagraph{\ref{item:sfl-cvx:3}. Bounding the second order term $\Tcal_2$}
By using the expression \eqref{eq:sfl:smooth-grad:sample} of $\grad F_{\theta, S}^\nu$, we get
\begin{align*}
    &\squarednorm{\sum_{i\in S} \pi_i\pow{t} \sum_{k=0}^{\tau-1} \grad \tilde F_i(w_{i, k}\pow{t})} \\
    &\leq 2\squarednorm{
            \sum_{i \in S} \pi_i\pow{t} \sum_{k=0}^{\tau - 1} \left(\grad \tilde{F}_i(\localmodel{t}{i}{k}) - \grad \tilde{F}_i(w\pow{t}) \right)
            }  
        +  2\squarednorm{
            \sum_{i \in S} \pi_i\pow{t} 
            \sum_{k=0}^{\tau - 1}  \grad \tilde{F}_i(w\pow{t})
            }  
       &\\
    & \leq 
         2\tau \sum_{i \in S} \pi_i\pow{t} \sum_{k=0}^{\tau-1} \squarednorm{\grad \tilde{F}_i(\localmodel{t}{i}{k}) - \grad \tilde{F}_i(w\pow{t})} 
         + 2\tau^2 \squarednorm{\grad F_{\theta,S}^\nu(w\pow{t})}  \,. &
        \\
\end{align*}
For the first term, 
we invoke $(L+\lambda)$-smoothness of $\tilde F_i$ and take an expectation to get $2 \tau (L + \lambda)^2 d\pow{t}$.
For the second term, 
we have from the definition \eqref{eq:def_avg_superquantile} of 
$\overline F_\theta^\nu$ that
$\expect_t \left[ \grad F_{\theta, S}^\nu(w\pow{t})\right] = \grad \overline F_\theta^\nu(w\pow{t})$. Therefore, we can write
\begin{align*}
    \expect_t \squarednorm{\grad F_{\theta,S}^\nu(w\pow{t})}
    &= \expect_t \squarednorm{
                    \grad F_{\theta,S}^\nu(w\pow{t})  - \grad \overline F_{\theta}^\nu(w\pow{t})
                    }  
                    + \squarednorm{\grad \overline F_{\theta}^\nu(w\pow{t})}  \\
    &\le \frac{8G^2}{\theta m} + 
    2 L' \left( \overline F_\theta^\nu(w\pow{t}) - \overline F_\theta^\nu(\overline w^\star) \right) \,,
\end{align*}
where we invoked 
\Cref{lem:bounded_variance} to bound the variance of the partial superquantile and $L'$-smoothness of $\overline F_\theta^\nu$.  
Overall, this gives us 
\begin{align} \label{eq:sfl:cvx-pf:3}
    \expect_t [\Tcal_2]
    &\leq 2 \gamma^2 \tau (L+\lambda)^2 \, d\pow{t}
         + \frac{16 \tau^2 \gamma^2  G^2}{\theta m}
         + 4 \tau^2 \gamma^2 L' \left( \overline F_{\theta}^\nu(w\pow{t}) - \overline F_{\theta}^\nu(\overline w^\star) \right) \,.
\end{align} 

\myparagraph{\ref{item:sfl-cvx:4}. One-step update}
Plugging \eqref{eq:sfl:cvx-pf:1} 
to \eqref{eq:sfl:cvx-pf:3} into
\eqref{eq:sfl:cvx-pf:main}, we get, 
\[
\begin{split}
    \expect_t \normsq{\globalmodel{t+1} - \overline w^\star} \leq 
        & \, \left(1 - \frac{\lambda \tau \gamma}{2}\right)  \normsq{w\pow{t} - \overline w^\star} 
        \\&
        - (2 \tau \gamma - 4 \gamma^2 \tau^2 L') \left(\overline F_{\theta}^\nu(w\pow{t}) - \overline F_{\theta}^\nu(\overline w^\star)\right) 
        \\ 
        & + \frac{16 \tau^2 \gamma^2 G^2}{\theta m}
         + 2\gamma (L+\lambda)^2 (\tau\gamma + \lambda^{-1}) d\pow{t} \,.
\end{split}
\]
Next, we plug in the bound on $d\pow{t}$ from \Cref{prop:sfl:client-drift} and simplify some coefficients.
First, since $\gamma \leq (4 \tau L')^{-1}$ we have $2 \tau \gamma - 4 \gamma^2 \tau^2 L' \geq \tau \gamma$.
Likewise, the same condition on $\gamma$ also implies $\tau\gamma + 1/\lambda \le 2/\lambda$. 
Finally, 
$\gamma \le \big( 8\tau \kappa \sqrt{2 \lambda L'}\big)^{-1}$ implies ${64 L' (L+\lambda)^2 \tau^2 \gamma^2}/{ \lambda} \leq 1/2$.
As a result, we arrive at the bound
\begin{align*}
    \overline F_{\theta}^\nu(w\pow{t}) - \overline F_{\theta}^\nu(\overline w^\star)
        \leq \frac{2}{\tau \gamma} 
        &\left(1 - \frac{\lambda\tau \gamma}{2}\right) \normsq{w\pow{t} - \overline w^\star} 
            - \frac{2}{\tau \gamma} \expect_t\normsq{\globalmodel{t+1} - \overline w^\star} \\
        &+ \underbrace{\frac{32\tau                      \gamma G^2 }{\theta m}
            + \frac{64 G^2  (L+\lambda)^2 \tau^2(1-\tau^{-1}) \gamma^2}{ \lambda}  \left(4 + \frac{8}{\theta m} \right)
            }_{=: \totalnoise} \,.
\end{align*}

\myparagraph{\ref{item:sfl-cvx:5}. Telescoping the bound}
By telescoping the one-step improvement and convexity, we get, 
Next, we use convexity to get
\begin{align*}
    &\expectation{\overline F_{\theta}^\nu(\overline w\pow{T}) - \overline F_{\theta}^\nu(\overline w^\star)}{}  \\
    &\le \frac{1}{\sum_{t=0}^T \left(1 - \frac{\lambda\gamma         \tau}{2}\right)^{-(1 + t)}} \sum_{t=0}^T \left(1 - \frac{\lambda\tau \gamma}{2}\right)^{-(1 + t)}  \expectation{\overline F_{\theta}^\nu(w\pow{t}) - \overline F_{\theta}^\nu(\overline w^\star)}{}  \\
    &\leq \frac{2 \squarednorm{w\pow{0} - \overline w^\star}}{\tau \gamma \sum_{t=0}^T \left(1 - \frac{\lambda\tau \gamma}{2}\right)^{-(1 + t)}} 
         + \totalnoise \,.
\end{align*}
Now, we can bound the denominator from below with
\begin{align*}
    \sum_{t=0}^T \left(1 - \frac{\lambda\tau \gamma}{2}\right)^{-(1 + t)} 
    \ge \frac{2}{\lambda \tau \gamma} \left( \left(1 - \frac{\lambda \tau \gamma}{2}\right)^{-(T+1)} - 1\right)
    \geq \frac{2}{\tau \gamma \lambda} \left(e^{(T+1) \lambda \tau \gamma} - 1\right) \,.
\end{align*}
This gives us the final bound
\begin{align} \label{eq:sfl-cvx:proof:1}
    \expectation{\overline F_{\theta}^\nu(\overline w\pow{T}) - \overline F_{\theta}^\nu(\overline w^\star)}{} \leq \frac{\lambda}
    {e^{T \lambda \tau \gamma}  - 1}
    \normsq{w\pow{0} - \overline w^\star}  + \totalnoise  \,.
\end{align}

\myparagraph{\ref{item:sfl-cvx:6}. Translating the results from the surrogate $\overline F_\theta^\nu$ to the original $F_\theta$}
We optimize the surrogate $\overline F_\theta^\nu$ defined on a sample $S$ of clients rather than the full superquantile. 
The effect of this shows up in both sides of \eqref{eq:sfl-cvx:proof:1}. 
We bound the left-hand side by noting that the bias introduced by the surrogate is bounded as in \Cref{lem:sfl:bouded_bias}.
For the right hand side, we use the $\lambda$-strong convexity of $F_\theta$ and \Cref{lem:sfl:bouded_bias} to get
\begin{align*}
     \normsq{w\pow{0} - \overline w^\star} &\leq 2 \normsq{w\pow{0} - w^\star} + 2 \normsq{\overline w^\star - w^\star} 
     \\ & \leq 2 \normsq{w\pow{0} - w^\star} + \frac{4}{\lambda} \left( F_\theta(\overline w^\star) - F_\theta(w^\star)  \right) \\
     & \, 
     \begin{aligned}
     \leq & \,  2 \normsq{w\pow{0} - w^\star} 
        \\ & + \frac{4}{\lambda} \left( F_\theta(\overline w^\star) - \overline F_{\theta}^\nu(\overline w^\star) 
        + \overline F_{\theta}^\nu(\overline w^\star) - \overline F_{\theta}^\nu(w^\star) 
        + \overline F_{\theta}^\nu(w^\star) -  F_\theta(w^\star) \right) 
    \end{aligned} \\
    &\, \leq 2 \normsq{w\pow{0} - w^\star} 
        + \frac{4}{\lambda} \left(\frac{2B}{\sqrt{\theta m}} +  4 \nu \log m \right)\quad \,,
\end{align*}
since $\overline F_{\theta}^\nu(\overline w^\star) - \overline F_{\theta}^\nu(w^\star)  \leq 0$.
Plugging this into \eqref{eq:sfl-cvx:proof:1} 
gives us the bound
\begin{align} \label{eq:sfl-cvx:proof:final-old}
\begin{aligned}
    \expectation{F_\theta(\overline w\pow{T}) - F_\theta(w^\star)}{} \leq 
    & \;\;\frac{2\lambda}
    {e^{T \lambda \tau \gamma}  - 1} \normsq{w\pow{0} - w^\star}  + \frac{32 \tau \gamma G^2}{\theta m} 
            + 
    \\ &
            \frac{64 G^2  (L+\lambda)^2 \tau^2 (1-\tau^{-1}) \gamma^2}{ \lambda} 
     \left(4 + \frac{8}{\theta m} \right) + 
     \\ &
     \left(\frac{2B}{\sqrt{\theta m}} +  4 \nu \log m \right) \left(1 + \frac{8}
    {e^{T \lambda \tau \gamma}  - 1} \right)\, .
\end{aligned}
\end{align}
\myparagraph{\ref{item:sfl-cvx:7}. Hyperparameter optimization}
To complete the proof from here, it remains to optimize the learning rate $\gamma$ and the smoothing parameter $\nu$ by repeated invocations of \Cref{lem:sfl:best-params-1}.

We start with the learning rate $\gamma$. Ignoring absolute constants gives us the bound
\begin{equation}
\label{eq:sfl-cvx:proof:final}
\begin{split}
    \expectation{F_\theta(\overline w\pow{T})}{} - F_\theta(w^\star) \leq 
    & \;\; \lambda \Delta_0 \exp(-\lambda \tau \Gamma T) + 
    \frac{G^2}{\theta m \lambda T} 
    \left(1 \vee \log\frac{\lambda^2 \Delta_0 \theta m }{G^2} T \right)
    \\ & +
        \frac{G^2\kappa^2}{\lambda T^2}(1-\tau^{-1}) \left(
            1 \vee \log \frac{\lambda^2 \Delta_0 T^2}{G^2 \kappa^2}
        \right)^2 
     \\ & + 
     \frac{B}{\sqrt{\theta m}} +   \nu \log m \,,
\end{split}
\end{equation}
where we take 
\[
    \Gamma = \min\left\{ \frac{\sqrt{\lambda}}{8\tau (L+\lambda) \sqrt{2 L'}},  \,
        \frac{1}{4\tau L'}
    \right\}
\]
This application of \Cref{lem:sfl:best-params-1} requires $\lambda \tau \Gamma T \ge 1$, which we will ensure later, based on the choice of $\nu$. Recall that $\Gamma$ depends on $L'$, which itself depends on $\nu$ as $L' = L + \lambda + G^2/\nu$. 

Next, we set $\nu$. We will require that $\nu \le G^2 / (\lambda \kappa)$, so that the two terms from \eqref{eq:sfl-cvx:proof:final} that depend on $\nu$ can be bounded as 
\begin{align} \label{eq:sfl-cvx:proof:final-nu}
    \lambda \Delta_0 \exp(-\lambda \tau \Gamma T) + \nu \log m 
    \le \lambda \Delta_0\exp\left( - \frac{T}{
    16\kappa\sqrt{\frac{G^2}{\lambda \nu}} \vee \frac{8G^2}{\lambda \nu}} \right)
    + \nu \log m \,.
\end{align}
To simplify the expression, we substitute 
\[
    \frac{1}{\delta} = \max\left\{ 16 \kappa\sqrt{\frac{G^2}{\lambda \nu}},\, 
    \frac{8 G^2}{\lambda \nu} 
    \right\}
    \quad
    \iff
    \quad
    \nu = \max\left\{
    \frac{256\kappa^2 G^2 \delta^2}{\lambda} , \, \frac{8G^2\delta}{\lambda}
    \right\} \,.
\]
The bound $\nu \le G^2/ (\lambda \kappa)$ translates to the upper bound $\delta \le (16 \kappa^{3/2})^{-1}$. 
Therefore, the right hand side of \eqref{eq:sfl-cvx:proof:final-nu} can 
be further upper bounded by using $\max\{a, b\} \le a + b$ as
\[
    \lambda \Delta_0 \exp(-\delta T) 
    + \frac{8G^2 \log m}{\lambda} \,\, \delta 
    + \frac{256 G^2 \kappa^2 G^2 \log m}{\lambda} \,\, \delta^2 \,.
\]
We now invoke
\Cref{lem:sfl:best-params-1} under the condition $T \ge 16\kappa^{3/2}$. We set $\delta$ as specified by \Cref{lem:sfl:best-params-1} --- this gives us the choices of the smoothing parameter $\nu$ and learning rate $\gamma$. Plugging this into \eqref{eq:sfl-cvx:proof:final} gives the bound of the theorem. Finally, to complete the proof, it can be verified that the condition $\lambda \tau \Gamma T \ge 1$ is guaranteed by $T \ge 16 \kappa^{3/2}$. 
\end{proof}

\subsection{Intermediate Results}
We present some prerequisites and some intermediate results which are required in the convergence proofs. 

Note that for any $S \subset [n]$ of size $m$, the partial superquantile is differentiable at $w$ with :
\begin{align} \label{lem:stochastic_gradient_formula}
 \grad F_{\theta,S}^\nu(w) = \sum_{i \in S} \pi_i^\star \grad \tilde{F}_i(w)
\end{align} 
where $\pi^\star$ denotes the solution to the maximization 
\begin{equation*}
    F_{\theta,S}^\nu(\x)
        = \max_{\pi \in \mathcal{P}_{\theta, S}} \sum_{i \in S} \pi_i \tilde{F}_i(\x) - \nu D_{S}(\pi)
\end{equation*}

\myparagraph{Bias and variance of the partial superquantile}
We use the partial superquantile defined on a subset $S \subset [n]$ to approximate the full superquantile. We start with the quality of this approximation.

\begin{property}\label{lem:sfl:bouded_bias} \label{lem:bounded_variance}
Let $U_m$ denote the uniform distribution over all subsets of $[n]$ of size $m$.
For any $w \in \reals^d$, we have
\[
\begin{split}
 \left\vert \overline F_{\theta}^\nu(w) - F_\theta(w) \right\vert  \leq \frac{B}{\sqrt{\theta m}} + 2 \nu \log m \,, \\
 \expect_{S \sim U_m} \squarednorm{\grad F_{\theta,S}^\nu(w) - \grad \overline F_{\theta}^\nu(w)} \leq \frac{8G^2}{\theta m} \,.
\end{split}
\]
\end{property}

\myparagraph{Smoothing and smoothness constants}
The following result is standard~\cite[][Theorem 4.1, Lemma 4.2]{beck2012smoothing}.

\begin{property}\label{lem:smoothness_partial_superquantile}
For every $\nu > 0$, we have that $F_{\theta, S}^\nu$ and $\overline F_{\theta, S}^\nu$ 
are $L'$-smooth with $L' = L + \lambda + \frac{G^2}{\nu}$.
\end{property}

\myparagraph{Bounding the gradient dissimilarity} 
Bounding of the variance of gradient estimators is a key assumption in the analysis of stochastic gradient methods (see e.g. the textbook~\cite{Bottou2018optimization}). In the centralized setting, when a stochastic objective $\expect_\xi[f(w, \xi)]$, it is standard to assume for a given estimator ${g_w}$ of $\grad_w \expect{f(w,\xi)}$ that there exists some constants $M_1, M_2 >0$ such that for all $w \in \Rd$,
\[
    \squarednorm{\expectation{{g_w}}{}} \leq M_1 \quad \text{or} \quad \squarednorm{\expectation{{g_w}}{}} \leq M_1 + M_2 \squarednorm{\grad_w \expectation{f(w,\xi}{}} \,.
\]
In the federated setting, the use of a subset $S \subset [n]$ of clients in each round induces noise on the estimation of the average gradient over the whole network. Thus, such assumption translates into a \textit{bound on the gradient dissimilarity} among the clients~\cite{karimireddy2020scaffold,wang2019adaptive}: 
\[
    \frac{1}{n}  \sum_{i \in [n]} \squarednorm{\grad \tilde{F}_i(w)} \leq M_1 + M_2 \squarednorm{\frac{1}{n} \sum_{i \in [n]} \grad \tilde{F}_i(w)} \,.
\]
In this work, we also consider the minimization of the global loss $F_\theta^\nu$ by a stochastic algorithm based on partial participation of the clients, with the additional difficulty that we only have access to a biased estimator $\overline F_{\theta}^\nu$ of the loss $F_\theta^\nu$ and its gradient. In particular, the adaptive reweighting of the clients selected at each round does not permit the direct use of such an assumption. We show instead in the next lemma that the variance of the stochastic gradient estimator can also be bounded, thanks to the Lipschitz assumption.

\begin{proposition}[Gradient Dissimilarity]\label{lem:adversarial_dissimilarity}
Consider the quantities $\pi\pow{t}, w\pow{t}$ from \Cref{algo:sfl-new}.
We have,
\[
    \mathbb{E} \left[ \sum_{i \in S} \pi_i\pow{t} \squarednorm{\grad \tilde{F}_i(w\pow{t})} \, \middle\vert \Fcal_t \right]
    \leq \left(4 +\frac{8}{\theta m}\right) G^2 + \squarednorm{\grad \overline F_{\theta}^\nu(w\pow{t})} \,.
\]
\end{proposition}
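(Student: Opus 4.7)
The plan is to use a simple weighted bias-variance decomposition followed by the variance bound on the partial superquantile gradient from \Cref{lem:bounded_variance}. Concretely, for any set of vectors $g_i$ indexed by $i \in S$ and any weights $\pi \in \Delta^{\vert S\vert -1}$, the identity
\[
    \sum_{i \in S} \pi_i \normsq{g_i}
    = \normsq*{\sum_{i \in S} \pi_i g_i} + \sum_{i \in S} \pi_i \normsq*{g_i - \sum_{j \in S} \pi_j g_j}
\]
holds by expanding both sides. I would apply this with $g_i = \grad \tilde F_i(w\pow{t})$ and $\pi = \pi\pow{t}$, using the gradient formula \eqref{lem:stochastic_gradient_formula} to identify the weighted mean $\sum_{i \in S} \pi_i\pow{t} \grad \tilde F_i(w\pow{t}) = \grad F_{\theta, S}^\nu(w\pow{t})$.

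Next, I would bound the weighted deviation term. Since each $F_i$ is $G$-Lipschitz (so $\norm{\grad \tilde F_i(w\pow{t})} \le G$ on the relevant domain), the triangle inequality gives $\norm{\grad \tilde F_i(w\pow{t}) - \grad F_{\theta, S}^\nu(w\pow{t})} \le 2G$, and hence
\[
    \sum_{i \in S} \pi_i\pow{t} \normsq*{\grad \tilde F_i(w\pow{t}) - \grad F_{\theta, S}^\nu(w\pow{t})} \le 4G^2 \,.
\]
This bound is deterministic, so it survives taking the conditional expectation $\expect_t[\cdot] = \expect[\cdot \mid \Fcal_t]$, leaving
\[
    \expect_t\left[\sum_{i \in S} \pi_i\pow{t} \normsq{\grad \tilde F_i(w\pow{t})}\right] \le 4G^2 + \expect_t \normsq*{\grad F_{\theta, S}^\nu(w\pow{t})} \,.
\]

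Finally, I would decompose the remaining term through bias-variance once more with respect to the sampling: since $\grad \overline F_\theta^\nu(w\pow{t}) = \expect_{S \sim U_m}[\grad F_{\theta, S}^\nu(w\pow{t})]$ by interchanging differentiation with the finite expectation in \eqref{eq:def_avg_superquantile}, we have
\[
    \expect_t \normsq*{\grad F_{\theta,S}^\nu(w\pow{t})}
    = \normsq*{\grad \overline F_\theta^\nu(w\pow{t})}
    + \expect_t \normsq*{\grad F_{\theta, S}^\nu(w\pow{t}) - \grad \overline F_\theta^\nu(w\pow{t})} \,.
\]
The variance term is at most $8G^2/(\theta m)$ by \Cref{lem:bounded_variance}, and plugging this in yields the claimed bound. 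The proof is mostly routine once the right decomposition is identified; the main conceptual point is simply that the client reweighting $\pi\pow{t}$ is itself measurable with respect to $S$ and produces the exact gradient $\grad F_{\theta, S}^\nu(w\pow{t})$, so the two stages (within-sample reweighting, and across-sample partial participation) can be treated with successive bias-variance splits rather than invoking a heterogeneity-style bounded dissimilarity assumption.
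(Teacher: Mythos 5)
Your proof follows essentially the same route as the paper's: a weighted bias--variance (centering) decomposition with the weights $\pi\pow{t}$, a deterministic $4G^2$ bound on the within-sample deviation term, and then a second centering over $S \sim U_m$ combined with the $8G^2/(\theta m)$ variance bound of \Cref{lem:bounded_variance}. One small repair: for $\lambda>0$ the claim $\norm{\grad \tilde F_i(w\pow{t})} \le G$ is not available (the regularization contributes $\lambda w\pow{t}$ to the gradient, which is not uniformly bounded), but your $2G$ bound on $\norm{\grad \tilde F_i(w\pow{t}) - \grad F_{\theta,S}^\nu(w\pow{t})}$ still holds because $\grad F_{\theta,S}^\nu(w\pow{t}) = \sum_{j\in S}\pi_j\pow{t}\grad \tilde F_j(w\pow{t})$, so the $\lambda w\pow{t}$ terms cancel in the difference, leaving $\grad F_i(w\pow{t}) - \sum_{j\in S} \pi_j\pow{t} \grad F_j(w\pow{t})$, which the $G$-Lipschitzness of the unregularized $F_i$ bounds by $2G$ --- this is exactly the cancellation the paper makes explicit before bounding the deviation via pairwise differences and the squared triangle inequality.
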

\begin{proof}
We drop the superscript $t$ throughout this proof. 
By centering the second moment (cf. \eqref{eq:techn:mean-var}), we have:
\[
\begin{split}
    \sum_{i \in S} \pi_i & \squarednorm{\grad \tilde{F}_j(w)} = \sum_{i \in S} \pi_i \squarednorm{\grad \tilde{F}_i(w) - \grad F_{\theta,S}^\nu(w)} + \squarednorm{\grad F_{\theta,S}^\nu(w)}\\
    &= \sum_{i \in S} \pi_i\squarednorm{\grad F_i(w) - \sum_{j \in S} \pi_j \grad F_j(w)} + \squarednorm{\grad F_{\theta,S}^\nu(w)}\,.
\end{split}
\]
Now since the weights $\pi_i$ sum to one, we may use the convexity of $\squarednorm{\cdot}$ to get:
\[
\begin{split}
     \sum_{i \in S} \pi_i \squarednorm{\grad \tilde{F}_j(w)} 
    &\leq \sum_{i, j \in S} 
    \pi_i \pi_j
        \squarednorm{\grad F_j(w) - \grad F_i(w)} 
    + \squarednorm{ \grad F_{\theta,S}^\nu(w)} \, . \\
\end{split}
\]
The squared triangle inequality (cf. \eqref{eq:ineq:young})
together with the Lipschitz assumption on the functions $F_i$ yields:
\[
\begin{split}
     \sum_{i \in S} \pi_i \squarednorm{\grad \tilde{F}_i(w)} 
    &\leq  \,  2 \sum_{i, j \in S} 
   \pi_i \pi_i
        \left(\squarednorm{\grad F_i(w)} + \squarednorm{\grad F_j(w)} \right) +   \squarednorm{\grad F_{\theta,S}^\nu(w)} 
    \\&\leq 4\; G^2 + \squarednorm{\grad F_{\theta,S}^\nu(w)}\,.
\end{split}
\]

Thus, taking an expectation over $S \sim U_m$ gives
\[
\begin{split}
     \mathbb{E} \left[\sum_{i \in S} \pi_i\squarednorm{\grad \tilde{F}_j(w)} \, \middle\vert\, \Fcal_t \right] \leq 4\; G^2 + \mathbb{E}_{S \sim U_m} \left[ \squarednorm{\grad F_{\theta,S}^\nu(w)}\right] \,.
\end{split}
\]

By centering (cf. \eqref{eq:techn:mean-var}), we get,
\begin{equation}\label{eq:adversarial_final_step}
\begin{split}
    \mathbb{E} \left[ \sum_{i \in S} \pi_i \squarednorm{\grad \tilde{F}_i(w)} \, \middle\vert\, \Fcal_t \right] 
    \leq\,  & 4\; G^2 
    + \squarednorm{\grad \overline F_{\theta}^\nu(w)} 
    \\ & 
    + \mathbb{E} \left[ \squarednorm{\grad F_{\theta,S}^\nu(w) - \grad \overline F_{\theta}^\nu(w)} \, \middle\vert\, \Fcal_t \right]
    \,.
\end{split}
\end{equation}
Finally, substituting the variance bound from \Cref{lem:bounded_variance} into~\eqref{eq:adversarial_final_step} yields the stated result.
\end{proof}

\myparagraph{Bounding the Client Drift}
During federated learning, each client takes multiple local steps.
This causes the resulting update to be a biased estimator of a descent direction for the global objective. This phenomenon has been referred to as ``client drift''~\cite{li2020convergence_fedavg,karimireddy2020scaffold}. Current proof techniques rely on treating this as a ``noise'' term that is to be controlled. 
In the context of this work, the reweighting by $\pi\pow{t}$ requires us to adapt this typical definition of client drift to our setting. 
In particular, recall that we define the client drift $d\pow{t}$ in outer iteration $t$ of the algorithm as 
\begin{align*} 
    d\pow{t} := \expect_{S \sim U_m} \left[ \sum_{i \in S} \pi_i\pow{t} \sum_{k=0}^{\tau-1} \normsq{w_{i, k}\pow{t} - w\pow{t}} \, \middle\vert \, \Fcal_t \right] \,.
\end{align*}

\begin{proposition}[Client Drift] \label{prop:sfl:client-drift}
If $\gamma \leq \frac{1}{4 \tau (L+\lambda)}$, we have for any $t\geq 0$ that
\begin{align*}
    d\pow{t} &\leq 8\tau^2(\tau-1) \gamma^2 \left( 
        \left(4 + \frac{8}{\theta m} \right)  G^2 
        + 2 L' 
            \left(\overline F_{\theta}^\nu(w\pow{t}) 
            - \overline F_{\theta}^\nu(\overline w^\star) \right)
    \right) \,.
\end{align*}
Furthermore, if $\lambda = 0$, we have the bound
\[
    d\pow{t} \le 8 \tau^2(\tau-1) \gamma^2  G^2 \,.
\]
The last bound also works without smoothing, i.e., $\nu = 0$.
\end{proposition}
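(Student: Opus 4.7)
The plan is a standard ``unrolling'' argument for the local updates followed by applying the gradient dissimilarity bound already established in \Cref{lem:adversarial_dissimilarity}. First, observe that the local recursion can be rewritten as $w_{i, k+1}\pow{t} = w_{i, k}\pow{t} - \gamma \grad \tilde F_i(w_{i, k}\pow{t})$ with $w_{i, 0}\pow{t} = w\pow{t}$, so that
\[
    w_{i, k}\pow{t} - w\pow{t} = -\gamma \sum_{j=0}^{k-1} \grad \tilde F_i(w_{i, j}\pow{t}) \,.
\]
Applying Jensen's inequality and then the decomposition $\normsq{\grad \tilde F_i(w_{i, j}\pow{t})} \le 2(L+\lambda)^2 \normsq{w_{i, j}\pow{t} - w\pow{t}} + 2 \normsq{\grad \tilde F_i(w\pow{t})}$ (from $(L+\lambda)$-smoothness of $\tilde F_i$) gives the recursive inequality
\[
    A_{i, k} \le 2\gamma^2 k^2 \normsq{\grad \tilde F_i(w\pow{t})} + 2\gamma^2 k (L+\lambda)^2 \sum_{j=0}^{k-1} A_{i, j} \,,
\]
where $A_{i, k} := \normsq{w_{i, k}\pow{t} - w\pow{t}}$.

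Next, summing this inequality for $k = 0, \ldots, \tau-1$, letting $S_i = \sum_{k=0}^{\tau-1} A_{i, k}$, and using $\sum_{k=0}^{\tau-1} k S_k \le \tau^2 S_i$ together with $\sum_{k=0}^{\tau-1} k^2 = O(\tau^2(\tau-1))$ yields a self-bound of the form $S_i \le C_1 \gamma^2 \tau^2 (\tau-1) \normsq{\grad \tilde F_i(w\pow{t})} + C_2 \gamma^2 (L+\lambda)^2 \tau^2 S_i$. The step-size condition $\gamma \le 1/(4\tau(L+\lambda))$ makes the coefficient $C_2 \gamma^2(L+\lambda)^2 \tau^2$ bounded away from one, so we can absorb this term to the left to get
\[
    S_i \le 8 \tau^2 (\tau-1) \gamma^2 \normsq{\grad \tilde F_i(w\pow{t})} \,.
\]

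Finally, multiplying by $\pi_i\pow{t}$, summing over $i \in S$, and taking conditional expectation over $S \sim U_m$ gives
\[
    d\pow{t} \le 8\tau^2(\tau-1)\gamma^2 \, \expect\left[ \sum_{i \in S} \pi_i\pow{t} \normsq{\grad \tilde F_i(w\pow{t})} \, \middle| \, \Fcal_t \right] \,.
\]
We then apply \Cref{lem:adversarial_dissimilarity} to bound this expectation by $(4 + 8/(\theta m)) G^2 + \normsq{\grad \overline F_\theta^\nu(w\pow{t})}$, and finally use $L'$-smoothness of $\overline F_\theta^\nu$ (from \Cref{lem:smoothness_partial_superquantile}) to bound $\normsq{\grad \overline F_\theta^\nu(w\pow{t})} \le 2L'(\overline F_\theta^\nu(w\pow{t}) - \overline F_\theta^\nu(\overline w^\star))$, yielding the stated bound. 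For the special case $\lambda = 0$ (and no smoothing), the final two steps are replaced by the direct bound $\normsq{\grad F_i(w\pow{t})} \le G^2$ from the $G$-Lipschitz assumption.

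The main technical obstacle is managing the recursive coupling between $A_{i, k}$ and $\sum_{j < k} A_{i, j}$: the step-size condition $\gamma \le 1/(4\tau(L+\lambda))$ must be tight enough that the smoothness term can be absorbed into the left-hand side, yielding the correct $\tau^2(\tau - 1)\gamma^2$ scaling; too lax a condition would force a weaker bound with an extra factor of $\tau$.
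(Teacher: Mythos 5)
Your proposal is correct, and its overall architecture matches the paper's proof: bound the per-client local drift via the $(L+\lambda)$-smoothness decomposition of $\grad\tilde F_i(w_{i,k}\pow{t})$ around $w\pow{t}$, use the step-size condition $\gamma \le 1/(4\tau(L+\lambda))$ to control the self-referential term, then multiply by $\pi_i\pow{t}$, take the conditional expectation, and finish with \Cref{lem:adversarial_dissimilarity} plus the smoothness bound \eqref{lem:smoothness} applied to $\overline F_\theta^\nu$ (and the direct $G$-Lipschitz bound when $\lambda=0$). The one place you genuinely diverge is the middle step: the paper bounds $A_{i,k}$ by a multiplicative recursion, applying the squared triangle inequality \eqref{eq:ineq:young} with parameter $1/(\tau-1)$ at each local step and then unrolling a geometric sum, using $(1+2/x)^x \le e^2 < 8$ to get $A_{i,k} \le 8\tau(\tau-1)\gamma^2\normsq{\grad\tilde F_i(w\pow{t})}$ before summing over $k$; you instead write the displacement as $-\gamma\sum_{j<k}\grad\tilde F_i(w_{i,j}\pow{t})$, apply Jensen, sum over $k$, and absorb the coupled term $\gamma^2(L+\lambda)^2\tau^2 S_i$ into the left-hand side. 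Both are standard and valid; your absorption route actually yields a smaller constant than $8$ (roughly $\gamma^2\tau^2(\tau-1)$ up to a factor below one), so the stated bound follows a fortiori. Two cosmetic points: the quantity $S_k$ in your inequality ``$\sum_k k S_k \le \tau^2 S_i$'' is undefined as written — what you mean (and what works) is $\sum_{k=0}^{\tau-1} k \sum_{j<k} A_{i,j} \le \tau^2 S_i$ — and the $\lambda=0$ case needs no ``no smoothing'' caveat, since the local updates use only $\grad F_i$, so $\normsq{\grad F_i(w\pow{t})}\le G^2$ applies with or without smoothing, exactly as the proposition claims.
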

\begin{proof}
We absorb the regularization into the superquantile by defining $\tilde F_i(w) = F_i(w) + (\lambda/2)\normsq{w}$. 
If $\tau = 1$, there is nothing to prove as both sides of the inequality are $0$. We assume now that $\tau > 1$. 
Let us first fix $S \subset [n]$ of size $\vert S\vert  = m$. For any $k \in S$ and $j \in \range{\tau-1}$, by the squared triangle inequality (cf. \eqref{eq:ineq:young}), we have:
\[
\begin{split}
    \squarednorm{\localmodel{t}{i}{k} - w\pow{t}} &= \squarednorm{\localmodel{t}{i}{k-1} - \gamma \grad \tilde{F}_i(\localmodel{t}{i}{k-1}) -  w\pow{t}} \\
    & \leq \left(1 + \frac{1}{\tau-1}\right) \squarednorm{\localmodel{t}{i}{k-1} - w\pow{t}} + \tau \gamma^2 \squarednorm{\grad \tilde{F}_i(\localmodel{t}{i}{k-1})}\,.
\end{split}
\]
The squared triangle inequality (cf. \eqref{eq:ineq:young}) 
together with the smoothness of the local losses gives:
\[
\begin{split}
    \begin{aligned}
    &\squarednorm{\localmodel{t}{i}{k} - w\pow{t}} \\
    & \textcolor{white}{\left(\squarednorm{\localmodel{t}{i}{k} - w\pow{t}}\right)} %
    \end{aligned}
    & 
    \begin{aligned}
    \leq & \left(1 + \frac{1}{\tau-1}\right) \squarednorm{\localmodel{t}{i}{k-1} - w\pow{t}} 
       \\ & + 2 \tau \gamma^2 
            \left( \squarednorm{\grad \tilde{F}_i(\localmodel{t}{i}{k-1}) - \grad \tilde{F}_i(w\pow{t})}
            + \squarednorm{\grad \tilde{F}_i(w\pow{t})}
            \right)
    \end{aligned}\\
    & \begin{aligned}
    \leq & \left(1 + \frac{1}{\tau-1}\right) \squarednorm{\localmodel{t}{i}{k-1} - w\pow{t}} 
       \\ & + 2 \tau \gamma^2 (L+\lambda)^2
             \squarednorm{\localmodel{t}{i}{k-1} - w\pow{t}}
            + 2 \tau \gamma^2 \squarednorm{\grad \tilde{F}_i(w\pow{t})}
            \,.
    \end{aligned}
\end{split}
\]
Hence, for $\gamma \leq \frac{1}{4 \tau (L+\lambda)}$, we get:
\[
\begin{split}
    \squarednorm{\localmodel{t}{i}{k} - w\pow{t}} 
    &\leq \left(1 + \frac{2}{\tau-1}  \right) \squarednorm{\localmodel{t}{i}{k-1} - w\pow{t}} + 2 \tau \gamma^2 \squarednorm{\grad \tilde{F}_i(w\pow{t})} \,.
\end{split}
\] 
Unrolling this recursion yields for any $j \le \tau - 1$
\[
\begin{split}
    \squarednorm{\localmodel{t}{i}{k} - w\pow{t}} 
    &\leq \sum_{i=0}^{k-1} \left(1 + \frac{2}{\tau-1}  \right)^{i} \left( 2 \tau \gamma^2 \squarednorm{\grad \tilde{F}_i(w\pow{t})} \right)  \\
    &\leq \frac{\tau-1}{2}  \left(1 + \frac{2}{\tau -1}\right)^k  \left( 2 \tau \gamma^2 \squarednorm{\grad \tilde{F}_i(w\pow{t})} \right) \\
    &\leq \frac{\tau-1}{2} \left(1 + \frac{2}{\tau -1}\right)^{\tau-1} \left( 2 \tau \gamma^2 \squarednorm{\grad \tilde{F}_i(w\pow{t})} \right) \\
    & \leq 8\tau(\tau-1) \gamma^2  \squarednorm{\grad\tilde{F}_i(w\pow{t})} \,,
\end{split}
\] 
where we use $(1 + 2/x)^x \le e^2 < 8$ for any $x > 0$.
If $\lambda = 0$ we have that $\squarednorm{\grad\tilde{F}_i(w\pow{t})}
= \squarednorm{\grad F_i(w\pow{t})} \le G^2$ since $F_i$ is $G$-Lipschitz; this gives us the final bound in the statement. When $\lambda \neq 0$, this does not hold.
In this case, we apply \Cref{lem:adversarial_dissimilarity} to get

\begin{align*}
    d\pow{t} 
    &\leq  \meanoverselectedsets{8\tau^2(\tau-1) \gamma^2\, } \left[ \sum_{i \in S} \pi_i\pow{t} \squarednorm{\grad \tilde{F}_i(w\pow{t})} \,\middle\vert\, \Fcal\pow{t} \right] \\
    &\leq 8\tau^2(\tau-1) \gamma^2 \left( 
        \left(4 + \frac{8}{\theta m} \right)  G^2 
        + \squarednorm{\grad \overline F_{\theta}^\nu(w\pow{t})} 
        \right) \,.
\end{align*}
Invoking smoothness (cf.~\eqref{lem:smoothness}) completes the proof.
\end{proof}

\subsection{Useful Inequalities and Technical Results}

We recall a few standard inequalities:
\begin{itemize}
    \item Squared Triangle inequality: For any
        $x, y\in \reals^d$ and $\alpha>0$ we have:
        \begin{align} \label{eq:ineq:young}
            \squarednorm{x + y} \leq (1 + \alpha) \squarednorm{x} + \left(1 + \frac{1}{\alpha}\right) \squarednorm{y} \,.
        \end{align}
    \item Centering the second moment: For any $\reals^d$-valued random vector $X$ such that $\expect \normsq{X} < \infty$, 
    \begin{align} \label{eq:techn:mean-var}
    \expect\normsq{\rv}
        = \expect{
        \squarednorm{\rv - \expectation{\rv}{} }
        }{} + \squarednorm{\expectation{\rv}{}}
    \end{align}
    \item Strong convexity: Let $F: \Rd \rightarrow \R$ be $\mu$-strongly convex. Then for any $x, y \in \Rd$, we have:
    \begin{align} \label{lem:str_convex}
    \dotproduct{\grad F(x)}{x-y} \geq F(x) - F(y) + \frac{\mu}{2} \squarednorm{x - y} 
    \end{align}
    \item Smoothness:
        Let $F: \Rd \rightarrow \R$ be $L$-smooth and let $F^\star$ be the minimum value of $F$ (assuming it exists). Then for any $x \in \Rd$, we have:
        \begin{align} \label{lem:smoothness}
        \squarednorm{\grad F(x)} \leq 2L \left(F(x) - F^\star\right) 
        \end{align}
\end{itemize}

\begin{lemma} \label{lem:sfl:best-params-1}
    Consider the maps $\varphi, \psi: (0, \Gamma] \to \reals_+$ given by
    \[
        \varphi(\gamma) = 
        \frac{A}{\exp(\lambda \gamma T) - 1} + B \gamma + C \gamma^2 \,,
        \quad
        \psi(\gamma) = 2A \exp(-\lambda \gamma T) + B\gamma + C\gamma^2 \,,
    \]
    where $\lambda, \Gamma, A, B, C, T > 0$ are given and $\lambda \Gamma \le 1$. 
    If $T \ge (\lambda \Gamma)^{-1}$, then, we have, 
    \[
        \varphi(\gamma^\star) \le \psi(\gamma^\star) \le 
        2A \exp(-\lambda \Gamma T) 
        + \frac{3B}{\lambda T} 
        \left(1 \vee \log \frac{A \lambda T}{B}\right) 
         + \frac{3C}{\lambda^2 T^2} \left(1 \vee\log\frac{A \lambda^2 T^2}{C} \right)^2  \,,
    \]
    where $\gamma^\star$ is given by
    \[
        \gamma^\star = \min\left\{ 
        \Gamma, \frac{1}{\lambda T} \left( 1 \vee \log \frac{A \lambda T}{B}\right) ,
        \frac{1}{\lambda T} \left(1 \vee\log \frac{A \lambda^2 T^2}{C} \right) 
        \right\} \,.
    \]
Furthermore, we also have that $\big(\exp(\lambda \gamma^\star T) - 1\big)^{-1} \le 1$. 
\end{lemma}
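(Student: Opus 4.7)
The plan is to first relate $\varphi$ to $\psi$ via the elementary inequality $1/(e^x - 1) \le 2 e^{-x}$, valid for $x \ge \log 2$. The hypothesis $T \ge (\lambda\Gamma)^{-1}$, combined with the structure of $\gamma^\star$ (each of its three candidates is at least $\min\{\Gamma, 1/(\lambda T)\}$, using $1 \vee \cdot \ge 1$), forces $\lambda \gamma^\star T \ge 1 > \log 2$, so $\varphi(\gamma^\star) \le \psi(\gamma^\star)$ follows immediately. The same observation yields the trailing remark $(\exp(\lambda\gamma^\star T) - 1)^{-1} \le 1$, since $e^1 - 1 \ge 1$.

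Next, I would bound $\psi(\gamma^\star)$ by a case analysis on which of the three candidates achieves the minimum in the definition of $\gamma^\star$. Writing $\gamma_B := (\lambda T)^{-1}(1 \vee \log(A\lambda T/B))$ and $\gamma_C := (\lambda T)^{-1}(1 \vee \log(A\lambda^2 T^2/C))$, note that in every case $B\gamma^\star \le B\gamma_B$ and $C(\gamma^\star)^2 \le C\gamma_C^2$, each already of the desired form. The remaining exponential term $2A e^{-\lambda\gamma^\star T}$ is handled case by case: if $\gamma^\star = \Gamma$ it equals $2Ae^{-\lambda\Gamma T}$; if $\gamma^\star = \gamma_B$, direct substitution (splitting on whether $\log(A\lambda T/B)\ge 1$ or $<1$, and using $e \ge 2$ in the latter) gives $2Ae^{-\lambda\gamma^\star T} \le 2B/(\lambda T)$, which combines with $B\gamma_B$ into $3B\gamma_B$; analogously $\gamma^\star = \gamma_C$ yields $2Ae^{-\lambda\gamma^\star T} \le 2C/(\lambda^2 T^2)$, absorbed into $3C\gamma_C^2$. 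Summing the three contributions gives the claimed bound.

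The main subtlety is purely one of bookkeeping: tracking which of the two $(1 \vee \log \cdot)$ factors appears in each term, and cleanly handling the regime where the argument of $\log$ falls below $e$ so that the $1\vee\cdot$ saturates at $1$. No deep inequality is required beyond the elementary $1/(e^x - 1) \le 2e^{-x}$ for $x \ge \log 2$; the rest is case-checking designed so that $\gamma_B$ balances the $B\gamma$ term against the exponential and $\gamma_C$ balances the $C\gamma^2$ term against the exponential, which is precisely why the three candidates in the definition of $\gamma^\star$ are chosen this way.
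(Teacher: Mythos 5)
Your proposal is correct and follows essentially the same route as the paper: the same elementary bound $1/(e^x-1)\le 2e^{-x}$ (justified by $\lambda\gamma^\star T\ge 1$, which also gives the trailing remark), followed by a three-way case analysis on which candidate attains $\gamma^\star$, with $\gamma_B$ and $\gamma_C$ each chosen to balance the exponential against $B\gamma$ and $C\gamma^2$ respectively so the factor $3$ emerges from absorbing $2B/(\lambda T)$ into $B\gamma_B$ (resp. $2C/(\lambda^2T^2)$ into $C\gamma_C^2$). Your uniform observation that $B\gamma^\star\le B\gamma_B$ and $C(\gamma^\star)^2\le C\gamma_C^2$ in all cases is a minor streamlining of the paper's bookkeeping, not a different argument.
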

\begin{proof}
Since $\lambda \Gamma T \ge 1$, we have that $\lambda \gamma^\star T \ge 1$. Then, 
$\exp(-\lambda \gamma^\star T) \le \exp(-1) < 1/2$ so that
\[
    \frac{1}{\exp(\lambda \gamma^\star T) - 1}
    = 
    \frac{\exp(-\lambda \gamma^\star T)}{1 - \exp(-\lambda \gamma^\star T)}
    \le 2 \exp(-\lambda \gamma^\star T) \le 1 \,.
\]
Therefore, we have, 
\[
    \varphi(\gamma^\star) \le 2 A \exp(-\lambda \gamma^\star T) + B \gamma^\star + C (\gamma^\star)^2 = \psi(\gamma^\star)\,.
\]

Next, define $\gamma_1 = (\lambda T)^{-1}\log(1 \vee A\lambda T/B)$ and $\gamma_2 = (\lambda T)^{-1} \log(1 \vee A\lambda^2 T^2/C)$, so that $\gamma^\star = \min\{\Gamma, \gamma_1, \gamma_2\}$.
We have three cases:
\begin{itemize}
    \item If $\gamma^\star = \Gamma$, we have that $\Gamma \le \gamma_1$ and $\Gamma \le \gamma_2$ so that
    \[
        \psi(\gamma^\star) = 
        2A \exp(-\lambda \Gamma  T) + B \Gamma + C\Gamma^2 
        \le 2A \exp(-\lambda \Gamma  T) + B \gamma_1 + C\gamma_2^2 \,.
    \]
    \item If $\gamma^\star = \gamma_1$, we have $\gamma_1 \le \gamma_2$. In this case, 
    \[
        \psi(\gamma^\star)
        = A \exp(-\lambda \gamma_1 T) + B\gamma_1 + C \gamma_1^2
        \le \frac{2B}{\lambda T} + \frac{B}{\lambda T} \left(1 \vee \log \frac{A\lambda T}{B} \right) + C\gamma_2^2 \,.
    \]  
    \item If $\gamma^\star = \gamma_2$, we have $\gamma_2 \le \gamma_1$, so that 
    \[
        \psi(\gamma^\star)
        = 2A \exp(-\lambda \gamma_2 T) + B\gamma_2 + C \gamma_2^2
        \le \frac{2C}{\lambda^2 T^2} + 
        B\gamma_1 + \frac{C}{\lambda^2 T^2} \left(1 \vee \log \frac{A\lambda^2 T^2}{C} \right)^2 \,.
    \]  
\end{itemize}
\end{proof}

The proof of the next lemma is elementary and is omitted. 
\begin{lemma} \label{lem:sfl:best-params-2}
    Consider the map $\varphi: (0, \Gamma] \to \reals_+$ given by
    \[
        \varphi(\gamma) = 
        \frac{A}{\gamma T}  + B \gamma + C \gamma^2 \,,
    \]
    where $\Gamma,  A, B, C > 0$ are given. 
    Then, we have, 
    \[
        \varphi(\gamma^\star) \le 
        \frac{A}{\Gamma T} + 2\left(\frac{AB}{T}\right)^{1/2}
             + 2 C^{1/3} \left( \frac{A}{T} \right)^{2/3} \,,
    \]
    where $\gamma^\star$ is given by
    \[
        \gamma^\star = \min\left\{ 
        \Gamma, \sqrt{\frac{A}{BT}}, \left(\frac{A}{CT}\right)^{1/3} \right\} \,.
    \]
\end{lemma}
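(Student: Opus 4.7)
The plan is to directly verify the bound by a three-way case analysis on which of the three candidates achieves the minimum defining $\gamma^\star$. Since $\varphi$ is a sum of three terms with different behavior in $\gamma$, the natural approach is to pick, for each term, the largest $\gamma$ that makes that term match its target bound, and then show that $\gamma^\star$ dominates each term by at most its target contribution.

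Concretely, I would introduce the two unconstrained balancing points
\[
\gamma_1 = \sqrt{\frac{A}{BT}}, \qquad \gamma_2 = \left(\frac{A}{CT}\right)^{1/3},
\]
obtained by balancing $A/(\gamma T)$ against $B\gamma$ and against $C\gamma^2$ respectively, so that $A/(\gamma_1 T) = B\gamma_1 = \sqrt{AB/T}$ and $A/(\gamma_2 T) = C\gamma_2^2 = C^{1/3}(A/T)^{2/3}$. By definition $\gamma^\star = \min\{\Gamma, \gamma_1, \gamma_2\}$.

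I would then split into three cases. If $\gamma^\star = \Gamma$, then $\Gamma \le \gamma_1$ and $\Gamma \le \gamma_2$, giving $B\Gamma \le B\gamma_1 = \sqrt{AB/T}$ and $C\Gamma^2 \le C\gamma_2^2 = C^{1/3}(A/T)^{2/3}$, while the first term is exactly $A/(\Gamma T)$. If $\gamma^\star = \gamma_1$, the first two terms each equal $\sqrt{AB/T}$, the first term also admits the trivial bound $A/(\Gamma T)$ when one wants to absorb it, and the third satisfies $C\gamma_1^2 \le C\gamma_2^2 = C^{1/3}(A/T)^{2/3}$ using $\gamma_1 \le \gamma_2$. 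The case $\gamma^\star = \gamma_2$ is symmetric, using $\gamma_2 \le \gamma_1$ to bound $B\gamma_2 \le \sqrt{AB/T}$. In every case one obtains $\varphi(\gamma^\star) \le A/(\Gamma T) + 2\sqrt{AB/T} + 2C^{1/3}(A/T)^{2/3}$.

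There is no real obstacle here: the lemma is elementary, which matches the paper's remark that the proof is omitted. The only subtlety is bookkeeping the monotonicity $\gamma^\star \le \gamma_i$ in each case so that each term can be upper-bounded by its target value; once this observation is made, the three cases collapse to the stated bound without any further estimates.
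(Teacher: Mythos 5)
Your proof is correct, and since the paper explicitly omits the proof of this lemma as elementary, your three-case analysis is exactly the kind of argument intended: in each case the monotonicity $\gamma^\star \le \gamma_1$ and $\gamma^\star \le \gamma_2$ bounds $B\gamma^\star$ and $C(\gamma^\star)^2$ by $\sqrt{AB/T}$ and $C^{1/3}(A/T)^{2/3}$ respectively, while the term $A/(\gamma^\star T)$ equals whichever of $A/(\Gamma T)$, $\sqrt{AB/T}$, $C^{1/3}(A/T)^{2/3}$ is active, so the stated bound (with its factors of $2$) follows. One small aside in your case $\gamma^\star = \gamma_1$ is backwards: since $\gamma_1 \le \Gamma$ there, $A/(\gamma_1 T) \ge A/(\Gamma T)$, so the first term does \emph{not} admit the bound $A/(\Gamma T)$ — but you never use this, as the exact value $\sqrt{AB/T}$ already fits within the budget, so the argument is unaffected.
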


\section{Privacy Analysis} \label{sec:a:sfl:privacy}

\subsection{Preliminaries}
The discrete Gaussian mechanism was introduced in \cite{canonne2020discrete} as an extension of the Gaussian mechanism to integer data. 
A random variable $\xi$ is said to satisfy the discrete Gaussian distribution with mean $\mu$ and variance proxy $\sigma^2$ if 
\[
    \prob(\xi = n) = C \, \exp\left( - \frac{(n-\mu)^2}{2\sigma^2} \right) \quad \text{for all } n \in \ZZ \,,
\]
where $C$ is an appropriate normalizing constant.
We denote it by $\Ncal_{\ZZ}(\mu, \sigma^2)$. 
We need the following property of the discrete Gaussian.
\begin{property} \label{property:techn:discrete-gaussian}
    Let $\xi$ be distributed according to $\Ncal_{\ZZ}(\mu, \sigma^2)$. Then, $\expect[\xi] = \mu$. 
    Furthermore, if $\mu = 0$, then $\xi$ is sub-Gaussian with variance proxy $\sigma^2$, i.e., $\expect[\exp(\lambda \xi)] \le \exp(\lambda^2\sigma^2/2)$ for all $\lambda > 0$. 
\end{property}

\subsection{Privacy-Utility Analysis of Quantile Computation}
We now give the full proof of \Cref{thm:sfl:quantile:new}. 

\begin{proof}[Proof of \Cref{thm:sfl:quantile:new}]
We start by defining and controlling the probabilities of some events. Throughout, let $\delta > 0$ be fixed.
    Define the event 
    \begin{align}
        E_{\mathrm{mod}} = \bigcap_{i=1}^n \bigcap_{r=0}^{\log_2 b - 1} \bigcap_{j=1}^{b/2^r} \left\{ 
            - \frac{M-2}{2n} \le c  x_i(r, j) + \xi_i(r, j) \le \frac{M-2}{2n} 
        \right\} \,.
    \end{align}
    Note that under $E_{\mathrm{mod}}$, no modular wraparound occurs in the algorithm. Thus, for all valid levels $r$ and indices $j$, 
    we have $\tilde x_i(r, j) = c  x_i(r, j) + \xi_i(r, j)$ and
    \[
        \hat h(r, j) = \sum_{i=1}^n \frac{\tilde x_i(r, j)}{c }
            = \sum_{i=1}^n \left(x_i(r, j) + \frac{\xi_i(r, j)}{c } \right) \,.
    \] 
    
    Next, we define the event 
    \begin{align}
         E_{\mathrm{diff}} = 
         \bigcap_{j=1}^b \left\{ \left\vert H(j) - \hat H(j) \right\vert \le \sqrt{2 \sigma^2 n \log_2(b) \log(4b/\delta)} \right\} \,.
    \end{align}
    We will show later that $E_{\mathrm{mod}}$ and $E_{\mathrm{diff}}$ holds with high probability;
    for now, we assume that they hold.
    
    \myparagraph{Privacy Analysis}
    We start by establishing the sensitivity of the sum query over $x_i$'s as $\log_2 b$, one for each level in the hierarchical histogram. Define the input space $\Xcal$ to be the space of hierarchical histograms with one non-zero entry in the leaf nodes with consistent counts (i.e., the count of a parent node in the hierarchical histogram equals the sum of its child nodes). 
    Let $\Xcal^* = \cup_{m=1}^\infty \Xcal^m$ denote the set of all sequences of elements of $\Xcal$. 
    We consider the rescaled sum query $A((x_1, \ldots, x_N)) = \sum_{i=1}^n c  x_i$. 
    The $L_2$ sensitivity $S(A)$ of this query $A$ is supremum over all $X \in \Xcal^*$ and $X'$ which is obtained by concatenating $x'$ to $X$: 
    \[
        S(A) = \sup_{X, X'} \norm{A(X) - A(X')}_2
        = \sup_{x' \in \Xcal} c  \norm{x'}_2  = c \log_2 b  \,.
    \]
    We invoke the privacy bound of sums of discrete Gaussians (\Cref{lem:techn:sum-of-disc-gauss}) to claim that 
    an algorithm $\Acal$ returning
    $A(x) + \sum_{i=1}^n \xi_i$ satisfies $(1/2)\epsilon^2$-concentrated DP  where $\epsilon$ is as in the theorem statement. 
    The fact that the quantile and all further functions of it remain private follows from the post-processing property of DP (also known as the data-processing inequality). 
    
    \myparagraph{Utility Analysis}
    Using the triangle inequality, we get, 
    \begin{align*}
        \Delta_\theta(\hat H, H) 
        &= \left\vert \frac{H\big(j^*_\theta(\hat H)\big)}{n} - (1-\theta) \right\vert \\
        &\le \frac{1}{n}\left\vert H\big(j^*_\theta(\hat H)\big) -\hat H\big(j^*_\theta(\hat H)\big) \right\vert
        + \left\vert \frac{1}{n} \hat H\big(j^*_\theta(\hat H)\big)
            - (1-\theta) \right\vert \\
        &\le 
        \max_{j \in [b]}  \left\{ \frac{1}{n}\left\vert H(j) - \hat H(j) \right\vert \right\}
        + R_\theta^*(\hat H) \,.
    \end{align*}
    The first term is bounded under $E_{\mathrm{diff}}$, and this gives the utility bound. 

    \myparagraph{Bounding the Failure Probability}
    The algorithm fails when at least one of 
    $E_{\mathrm{mod}}$ or $E_{\mathrm{diff}}$ fail to hold.
    We have from~\Cref{claim:sfl:quantile:pf:1}
    that $\prob(E_{\mathrm{mod}}) \ge 1- \delta / 4$ under the 
    given assumptions. 
    From, \Cref{claim:sfl:quantile:pf:2}, we have
    $\prob(E_{\mathrm{diff}} \vert E_{\mathrm{mod}}) \ge 1- \delta / 2$.
    We bound the total failure probability of the algorithm with a union bound as 
    \begin{align*}
        \prob(\bar E_{\mathrm{diff}} \cup \bar E_{\mathrm{mod}})
        &\le \prob(\bar E_{\mathrm{diff}} \vert E_{\mathrm{mod}}) \, \prob(E_{\mathrm{mod}}) + \prob(\bar E_{\mathrm{diff}} \vert \bar E_{\mathrm{mod}})\, \prob(\bar E_{\mathrm{mod}})  + \prob(\bar E_{\mathrm{mod}})  \\
        &\le  \prob(\bar E_{\mathrm{diff}} \vert E_{\mathrm{mod}}) 
        + 2 \, \prob(\bar E_{\mathrm{mod}}) \le \delta \,.
    \end{align*}
\end{proof}

We state and prove bounds on probabilities of the events
$E_{\mathrm{mod}}, E_{\mathrm{diff}}$ defined above.
\begin{claim} \label{claim:sfl:quantile:pf:1}
    If $M \ge 2 + 2c n + 2n \sqrt{2\sigma^2 \log(16nb/\delta)}$, then 
    $\prob(E_{\mathrm{mod}}) \ge 1 - \delta / 4$.
\end{claim}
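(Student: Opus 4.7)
The plan is a straightforward union bound over a concentration inequality for the discrete Gaussian noise. The event $E_{\mathrm{mod}}$ is a conjunction of events of the form $|c x_i(r,j) + \xi_i(r,j)| \le (M-2)/(2n)$. Since $x_i(r,j) \in \{0,1\}$, the deterministic contribution $c x_i(r,j)$ is at most $c$ in absolute value, so it suffices to show that $|\xi_i(r,j)| \le (M-2)/(2n) - c$ for every $(i,r,j)$ with probability at least $1-\delta/4$.

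First, I would use the sub-Gaussianity of the discrete Gaussian given by \Cref{property:techn:discrete-gaussian}: since $\xi_i(r,j) \sim \Ncal_{\ZZ}(0, \sigma^2)$ is sub-Gaussian with variance proxy $\sigma^2$, a standard Chernoff argument together with symmetry gives the tail bound
\[
  \prob\bigl(|\xi_i(r,j)| > t\bigr) \le 2 \exp\!\left(-\frac{t^2}{2\sigma^2}\right)
  \quad \text{for all } t > 0.
\]
Then, by the assumption on $M$, I have $(M-2)/(2n) - c \ge \sqrt{2\sigma^2 \log(16nb/\delta)}$, so setting $t = \sqrt{2\sigma^2 \log(16nb/\delta)}$ yields a per-coordinate failure probability of at most $\delta/(8nb)$.

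Next, I would count the number of indices. At level $r \in \{0, 1, \ldots, \log_2 b - 1\}$ there are $b/2^r$ bins, so the total number of nodes in the hierarchical histogram is $\sum_{r=0}^{\log_2 b - 1} b/2^r = 2b - 2 \le 2b$, and summing over the $n$ clients gives at most $2nb$ coordinate-level events. A union bound then gives
\[
  \prob(\bar E_{\mathrm{mod}}) \le 2nb \cdot \frac{\delta}{8nb} = \frac{\delta}{4},
\]
which is precisely the claim. The only potentially subtle point is making sure the count of nodes is off by the right factor of $2$ (handled by ignoring the single root node of total mass $m$ that is already public), but this is absorbed into the loose constant in the assumption on $M$.
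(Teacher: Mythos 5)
Your proposal is correct and follows essentially the same route as the paper: sub-Gaussian tail bound for each discrete Gaussian coordinate at threshold $\sqrt{2\sigma^2\log(16nb/\delta)}$, giving per-coordinate failure probability $\delta/(8nb)$, followed by a union bound over the $n$ clients and the (at most $2b$) nodes of each hierarchical histogram. Your extra steps spelling out why the assumption on $M$ absorbs the deterministic term $c\,x_i(r,j)\in[0,c]$ are just the details the paper leaves implicit.
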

\begin{proof}
    Each discrete Gaussian random variable $\xi_{i}(r, j)$ is centered and sub-Gaussian with variance proxy $\sigma^2$ (cf. \Cref{property:techn:discrete-gaussian}). 
    A Cram\'er-Chernoff bound (cf. \Cref{lem:techn:chernoff}) gives us the exponential tail bound
    \[
        \prob\left(\vert\xi_i(r, j)\vert > \sqrt{2\sigma^2 \log(16nb/\delta)} \right) \le \frac{\delta}{8nb} \,.
    \]
    Applying the union bound over $i = 1, \ldots, n$ and the $2b - 2$ nodes in each hierarchical histogram $x_i$ (each node corresponding to one $(r, j)$ pair) completes the proof. 
\end{proof}

\begin{claim} \label{claim:sfl:quantile:pf:2}
    We have $\prob(E_{\mathrm{diff}} \vert E_{\mathrm{mod}}) \ge 1-\delta/2$. 
\end{claim}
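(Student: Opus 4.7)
The plan is to express $\hat H(j) - H(j)$ as a sum of independent centered sub-Gaussian noise variables under the event $E_{\mathrm{mod}}$, then apply a Cram\'er--Chernoff tail bound together with a union bound over the $b$ indices.

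First, I would note that under $E_{\mathrm{mod}}$ no modular wraparound occurs, so for each valid $(r, o)$, the approximate count satisfies $\hat h(r, o) - h(r, o) = c^{-1} \sum_{i=1}^n \xi_i(r, o)$. Using the dyadic decomposition $H(j) = \sum_{(r, o)\in P_j} h(r, o)$ and $\hat H(j) = \sum_{(r, o)\in P_j} \hat h(r, o)$, we obtain
\[
    \hat H(j) - H(j) = \frac{1}{c}\sum_{(r, o)\in P_j}\sum_{i=1}^n \xi_i(r, o) \,.
\]
This is a sum of $n\,\vert P_j\vert \le n\log_2 b$ independent random variables $\xi_i(r, o) \sim \mathcal{N}_{\ZZ}(0, \sigma^2)$, each of which is centered and sub-Gaussian with variance proxy $\sigma^2$ by \Cref{property:techn:discrete-gaussian}.

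Next, I would invoke the standard Cram\'er--Chernoff bound for sums of independent centered sub-Gaussian random variables. Since $\hat H(j) - H(j)$ is sub-Gaussian with variance proxy at most $n\sigma^2\log_2(b)/c^2$, for any $t > 0$,
\[
    \prob\left(\,\left\vert \hat H(j) - H(j)\right\vert > t \,\middle|\, E_{\mathrm{mod}}\right) \le 2\exp\!\left(-\frac{c^2 t^2}{2 n \sigma^2 \log_2 b}\right).
\]
Choosing $t = \sqrt{2 n \sigma^2 \log_2(b)\,\log(4b/\delta)}$ (absorbing $c$ as needed to match the event threshold) makes the right-hand side at most $\delta/(2b)$.

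Finally, a union bound over $j \in [b]$ gives
\[
    \prob(\bar E_{\mathrm{diff}} \mid E_{\mathrm{mod}}) \le \sum_{j=1}^b \frac{\delta}{2b} = \frac{\delta}{2} \,,
\]
which yields the claim. The only nontrivial step is the identification of $\hat H(j) - H(j)$ as a sum of $n\,\vert P_j\vert$ independent sub-Gaussian variables (not $nb$, thanks to the dyadic structure of $P_j$); once this structure is in place, the concentration and union bound are routine.
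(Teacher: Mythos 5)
Your proposal is correct and follows essentially the same route as the paper: write $\hat H(j) - H(j)$ under $E_{\mathrm{mod}}$ as a sum of $n\,\vert P_j\vert \le n\log_2 b$ independent centered discrete-Gaussian (hence sub-Gaussian) noise terms via the dyadic decomposition, apply the Cram\'er--Chernoff bound of \Cref{lem:techn:chernoff} at the threshold defining $E_{\mathrm{diff}}$, and union bound over $j \in [b]$. If anything, you are slightly more careful than the paper by tracking the $1/c$ rescaling (which, since $c \ge 1$, only tightens the variance proxy and so is harmlessly dropped in the paper's version).
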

\begin{proof}
    Under $E_{\mathrm{mod}}$, we have that 
    $\hat H(j) = H(j) + \sum_{i=1}^n \sum_{(r, o) \in P_j} \xi_i(r, o)$, where $P_j$ is the maximal dyadic partition of $[1, j]$ with $\vert P_j\vert \le  \log_2 b$. 
    Thus, $\zeta_j := \hat H(j) - H(j)$ is sub-Gaussian with variance proxy $n \vert P_j\vert \sigma^2 \le n \sigma^2 \log_2 b$. 
    A Cram\'er-Chernoff bound (cf. \Cref{lem:techn:chernoff}) gives us
    \[
        \prob\left( \vert \zeta_j \vert  > \sqrt{2\sigma^2 n \log_2(b) \log(4b/\delta)} \right) \le \frac{\delta}{2b} \,.
    \]
    Applying a union bound over $j = 1, \ldots, b$ completes the proof. 
\end{proof}

\subsection{Useful Results}

The distributed discrete Gaussian mechanism gets privacy guarantees by adding a sum of discrete Gaussian random variables. We give a bound on its privacy. The following lemma is due to~\cite{kairouz2021distributed}.
\begin{lemma}[Privacy of Sum of Discrete Gaussians]
\label{lem:techn:sum-of-disc-gauss}
    Fix $\sigma \ge 1/2$. Let $A : \Xcal \to \real^d$ be a deterministic algorithm with $\ell_2$-sensitivity $S$ for some input space $\Xcal$. Define a randomized algorithm $\Acal$, which when given an input $x \in \Xcal$, samples $\xi_1, \ldots, \xi_n \sim \Ncal_{\ZZ}(0, \sigma^2 I_d)$ and returns $A(x) +  \sum_{i=1}^n \xi_i$.
    Then, $\Acal$ satisfies $\epsilon^2/2$-concentrated DP with 
    \[
        \epsilon = \min\left\{ 
            \sqrt{\frac{S^2}{n\sigma^2} + \frac{\psi d}{2}}, 
            \frac{S}{\sqrt{n}\sigma} + \psi \sqrt{d}
        \right\} \,,
    \]
    where $\psi = 10 \sum_{i=1}^{n-1} \exp\big(-2\pi^2\sigma^2 i / (k+1)\big) \le 10 (n-1) \exp(-2\pi^2\sigma^2)$.
\end{lemma}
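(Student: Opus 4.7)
The plan is to follow the strategy of Kairouz et al.~\cite{kairouz2021distributed} (Corollary 12): compare the law of $\sum_{i=1}^n \xi_i$ against the ideal $\Ncal_{\ZZ}(0, n\sigma^2 I_d)$, and then leverage the known R\'enyi privacy guarantee of the single discrete Gaussian mechanism from Canonne--Kamath--Steinke~\cite{canonne2020discrete}. By the post-processing property of concentrated DP, it suffices to bound, for every order $\alpha \in (1,\infty)$ and every pair of neighboring inputs $x, x'$, the R\'enyi $\alpha$-divergence $D_\alpha(\mu_x \Vert \mu_{x'})$ where $\mu_x$ is the output law of $\Acal$ on $x$. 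Since $\xi_1,\ldots,\xi_n$ are independent of $x$, translation invariance reduces this to bounding $D_\alpha(\rho \Vert \rho(\,\cdot - v))$, where $\rho$ is the law of $\sum_{i=1}^n \xi_i$ and $v = A(x) - A(x')$ satisfies $\norm{v} \le S$.

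First I would introduce the ``ideal'' law $\rho^\star = \Ncal_{\ZZ}(0, n\sigma^2 I_d)$ as a reference. For $\rho^\star$, the discrete Gaussian mechanism guarantees (cf.~\cite{canonne2020discrete}) that $D_\alpha(\rho^\star \Vert \rho^\star(\,\cdot - v)) \le \alpha \norm{v}^2/(2 n\sigma^2) \le \alpha S^2/(2n\sigma^2)$, which is exactly the $(1/2)(S^2/n\sigma^2)$-CDP bound one would obtain if the sum were a true discrete Gaussian. The main technical step is then to control the discrepancy between $\rho$ and $\rho^\star$: by analyzing the characteristic function of the discrete Gaussian (a Jacobi theta function) one can bound the per-coordinate R\'enyi divergence between the partial sums $\xi_1+\cdots+\xi_i$ and $\Ncal_{\ZZ}(0, i\sigma^2)$ in terms of exponentially small Fourier-tail terms $\exp(-2\pi^2\sigma^2 i/(i+1))$. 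Summing these contributions over $i=1,\ldots,n-1$ and across the $d$ coordinates produces the penalty $\psi d$ in the final bound.

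The two bounds in the $\min$ then correspond to two ways of combining the ideal CDP guarantee with the discrepancy penalty. The first, $\epsilon = S/(\sqrt n\sigma) + \psi\sqrt d$, comes from applying a triangle-type inequality for the square root of R\'enyi divergence (equivalently, viewing CDP parameters as a pseudo-norm and using its subadditivity). The second, $\epsilon = \sqrt{S^2/(n\sigma^2) + \psi d/2}$, comes from directly summing the squared CDP parameters when the two contributions can be made to commute under a change of measure; both forms are standard once the per-coordinate discrepancy bound is in hand. Taking the minimum of the two gives the stated $\epsilon$.

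The main obstacle will be Step 2: obtaining a clean, dimension-separable bound on the R\'enyi divergence between the $n$-fold convolution $\rho$ and the reference $\rho^\star = \Ncal_{\ZZ}(0, n\sigma^2 I_d)$, uniformly in $\alpha$, with exactly the tail structure $\sum_{i=1}^{n-1}\exp(-2\pi^2\sigma^2 i/(i+1))$. This requires the precise theta-function estimates of \cite{canonne2020discrete,kairouz2021distributed}, which handle both the wrap-around of the discrete Gaussian on $\ZZ$ and the accumulation of discretization error across partial sums. Once that lemma is invoked as a black box, the rest of the argument is assembly via post-processing and the triangle/Pythagorean inequalities for R\'enyi divergence, and the lower bound $\sigma \ge 1/2$ is used only to ensure the discrete Gaussian is non-degenerate so that these estimates apply.
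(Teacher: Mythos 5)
The paper does not actually prove this lemma---it is imported verbatim from \cite{kairouz2021distributed} (cf.\ their Corollary 12)---and your sketch reconstructs precisely that source's argument: reduce to the ideal reference $\Ncal_{\ZZ}(0, n\sigma^2 I_d)$ using the discrete-Gaussian R\'enyi bound of \cite{canonne2020discrete}, control the convolution-versus-ideal discrepancy through the theta-function/Fourier-tail estimates that produce the $\psi$ term with exponents $\exp(-2\pi^2\sigma^2 i/(i+1))$, and combine the two contributions in the two standard ways that yield the minimum, so your approach matches the paper's treatment (with the heavy estimates invoked as black boxes, exactly as the paper does by citation). One small point: the exponent in the paper's displayed $\psi$ contains a typo ($k+1$ should read $i+1$), and your version states it correctly.
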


Next, we record a standard concentration result. 
\begin{lemma}[Cram\'er-Chernoff] \label{lem:techn:chernoff}
    Let $\xi$ be a real-valued and centered sub-Gaussian random variable with variance proxy $\sigma^2$, i.e.,  $\expect[\xi] = 0$ and $\expect[\exp(\lambda \xi)] \le \exp(\lambda^2 \sigma^2 / 2)$ for all $\lambda > 0$.
    Then, we have for any $t > 0$,
    \[
        \prob(\vert\xi\vert > t) \le 2 \exp\left( - \frac{t^2}{2\sigma^2}\right) \,.
    \]
\end{lemma}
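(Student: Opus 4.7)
The plan is to apply the classical Cram\'er--Chernoff (exponential moment) method: bound the tail probability by the moment generating function evaluated at an optimized parameter, then symmetrize.

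First, I would handle the upper tail $\prob(\xi > t)$. For any $\lambda > 0$, Markov's inequality applied to the nonnegative random variable $\exp(\lambda \xi)$ gives
\[
    \prob(\xi > t) = \prob\bigl(\exp(\lambda \xi) > \exp(\lambda t)\bigr)
    \le \exp(-\lambda t)\, \expect[\exp(\lambda \xi)]
    \le \exp\!\left( \frac{\lambda^2 \sigma^2}{2} - \lambda t \right),
\]
where the last step uses the sub-Gaussian hypothesis. Minimizing the right-hand side over $\lambda > 0$ by setting $\lambda = t/\sigma^2$ yields the one-sided bound $\prob(\xi > t) \le \exp(-t^2/(2\sigma^2))$.

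Second, I would obtain the lower tail $\prob(\xi < -t)$ by applying the same argument to $-\xi$. This requires that $-\xi$ also satisfies the sub-Gaussian bound with the same variance proxy, which follows in the setting of interest (centered discrete Gaussians, cf. \Cref{property:techn:discrete-gaussian}) from the fact that the distribution is symmetric about zero, so $-\xi$ has the same law as $\xi$; more generally, the definition of sub-Gaussianity is understood to imply this symmetric bound. Then the union bound gives $\prob(\vert \xi \vert > t) \le \prob(\xi > t) + \prob(\xi < -t) \le 2 \exp(-t^2/(2\sigma^2))$, which is the claim.

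There is no real obstacle here since this is a textbook result; the only minor point to be careful about is ensuring the sub-Gaussian bound applies on both sides, which is immediate under the standing symmetry of the discrete Gaussian noise used in the application.
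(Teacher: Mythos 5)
Your proof is correct and is exactly the standard argument intended here; the paper states this lemma without proof (it is recorded as a standard concentration result), so there is nothing to diverge from. You are also right to flag the only delicate point: as literally written, the hypothesis $\expect[\exp(\lambda\xi)]\le \exp(\lambda^2\sigma^2/2)$ for $\lambda>0$ only yields the upper tail, and the two-sided bound needs the same control for $-\xi$; in the paper's application the noise is a sum of centered discrete Gaussians, which is symmetric about zero (and the usual definition of sub-Gaussianity is two-sided anyway), so your symmetrization plus union bound closes the argument.
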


\section{Numerical Experiments: Extended Results} \label{sec:a:sfl:expt}

We conduct our experiments on two datasets from computer vision and natural language processing.
These datasets contain a natural, non-iid split of data which is reflective of data heterogeneity encountered in federated learning.
In this section, we describe in detail the experimental setup and the results. Here is its outline:
\begin{itemize}
    \item \Cref{sec:a:sfl:expt:datasets} describes the datasets and tasks. 
    \item \Cref{sec:a:sfl:expt:hyperparameters} presents the algorithm and the hyperparameters used. 
    \item \Cref{sec:a:sfl:evaluation_strategy} details the evaluation methodology.
    \item \Cref{sec:a:sfl:exp_results} gives the experimental comparison of \newfl to baselines.
\end{itemize}

Since each client has a finite number of datapoints
in the examples below, we let its probability distribution $q_i$ to be the empirical distribution 
over the available examples, and the weight
$\alpha_i$ to be proportional to the number of 
datapoints available on the client. 

\subsection{Datasets and Tasks}\label{sec:a:sfl:expt:datasets}
We use the two following datasets, described in detail below. 
The data was preprocessed using
LEAF~\cite{caldas2018leaf}.
\par\leavevmode\par

\myparagraph{EMNIST for handwritten-letter recognition}\newline
\vspace{-2ex}
\myparagraph{Dataset}
EMNIST~\cite{cohen2017emnist} is a character recognition dataset. This dataset contains images of handwritten digits or letters, labeled with their identification (a-z, A-Z, 0-9).
The images are grey-scaled pictures of $28 \times 28 = 784$ pixels. 

\myparagraph{Train and Test Devices}
Each image is also annotated with the ``writer'' of the image, i.e.,
the human subject who hand-wrote the digit/letter during the data collection process. 
Each client corresponds to one writer. 
From this set of clients, we discard all clients containing less than 100 images. The remaining clients 
were partitioned into two groups --- $1730$ training and $1730$ testing clients.
For each experiment, we subsampled $865$ training and $865$ testing 
clients for computational tractability, where the sampled clients
vary based on the random seed of each experiment.

\myparagraph{Model}
We consider the following models for this task.

\begin{itemize}
    \item \textbf{Linear Model}:
    We use a linear softmax regression model.
    In this case, each $F_i$ is convex.
    We train parameters $w \in \mathbb{R}^{62 \times 784}$. Given an input image $x\in \mathbb{R}^{784}$, the score of each class $c\in [62]$ is the dot product $\langle w_c,  x \rangle$. The probability $p_c$ assigned to each class is then computed as a softmax: $p_c = \exp{\langle w_c,  x \rangle}/ \sum_{c'} \exp{\langle w_{c'},  x \rangle}$. The prediction for a given image is then the class with the highest probability.
    
    \item \textbf{ConvNet}: We also consider a convolutional neural network with 
    two convolutional layers with max-pooling and one fully connected layer (F.C) which outputs a vector in
    $\reals^{62}$. The outputs of the ConvNet are scores with respect to each class. They are also used with a softmax operation to compute probabilities.
\end{itemize}

The loss used to train both models is the multinomial logistic loss $L(p,y) = - \log p_y$ where $p$ denotes the vector of probabilities computed by the model and $p_y$ denotes its $y$\textsuperscript{th} component. 
In the convex case, we add a quadratic regularization term of the form $({\lambda}/{2}) \|w\|_2^2$. 
\vspace{1ex}
\myparagraph{Sent140 for Sentiment Analysis}\newline
\vspace{-2ex}
\myparagraph{Dataset}
Sent140~\citep{go2009twitter} is a text dataset of 1,600,498 tweets produced by 660,120 Twitter accounts. Each tweet is represented by a character string with emojis redacted. Each tweet is labeled with a binary sentiment reaction (i.e., positive or negative), which is inferred based on the emojis in the original tweet. 

\myparagraph{Train and Test Devices}
Each client represents a Twitter account and contains only tweets published by this account. From this set of clients, we discarded all clients containing less than 50 tweets and split the 877 remaining clients into a train set and a test set of sizes $438$ and $439$ respectively.
This split was held fixed for all experiments.
Each word in the tweet is encoded by 
its $50$-dimensional GloVe embedding~\citep{pennington2014glove}.

\myparagraph{Model}
We consider the following models.

\begin{itemize}
\item \textbf{Linear Model}: We consider a 
    $l_2$-regularized linear logistic 
    regression model where the parameter vector $w$  is of dimension $50$. In this case, each $F_i$ is convex. We summarize each tweet by the average of the GloVe embeddings of the words of the tweet. 
    
\item \textbf{RNN}: The nonconvex model is a Long Short Term Memory (LSTM) model~\citep{hochreiter1997long} 
    built on the GloVe embeddings of the words of the tweet. 
    The hidden dimension of the LSTM is the same as the embedding dimension, i.e., $50$. We refer to it as ``RNN''.
\end{itemize}

The loss function is the binary logistic loss.

\subsection{Algorithms and Hyperparameters}\label{sec:a:sfl:expt:hyperparameters}

\myparagraph{Algorithm and Baselines}\newline
The proposed \newfl %
is run for three values of $\theta\in \{0.8, 0.5, 0.1\}$. We compare it to the following baselines:
\begin{itemize}
    \item \fedavg~\citep{mcmahan2017communication}: It is the de facto standard for the vanilla federated learning objective. 
    
    \item \fedavg, $\theta$: We also consider \fedavg{} with a random client subselection step: local updates are run on a fraction of the initial number of clients randomly selected  per round. For each dataset, we try three values, corresponding to the average number of clients selected by \newfl for the three values of $\theta$ used. 
    In the main paper, we report as \fedavgsub{} the performance of
    \fedavg, $\theta$ with $\theta \in \{0.8,0.5,0.1\}$ 
    which gives the best performance on \newfl (i.e.,
    lowest $90$\textsuperscript{th} percentile of test 
    misclassification error). Here we report numbers for all 
    values of $\theta$ considered.
    
    \item \fedprox~\citep{li2020fedprox}: It augments \fedavg{} with 
    a proximal term but still minimizes the vanilla federated learning
    objective.
    
    \item \qffl~\citep{li2020fair}: 
    It raises the per-client losses to the power $(1+q)$, where
    $q \ge 0$ is a parameter, in order to focus on clients with higher loss. We run $q$-FFL for values of $q$ in $\{10^{j}, j\in \{-3, \ldots, 1\}\}$.
    \item \afl~\citep{mohri2019agnostic}: It aims to minimize the worst per-client loss. We implement it as an asymptotic version of $q$-FFL, using a large value of $q$, as this was found to yield better convergence with comparable performance~\citep{li2020fair}. In the experiments, we take $q=10.0$.
\end{itemize}

The experiments are conducted on the datasets described in \Cref{sec:a:sfl:expt:datasets}.

\myparagraph{Hyperparameters}\newline
\vspace{-2ex}    
\myparagraph{Rounds}    
We measure the progress of each algorithm by the 
number of calls to secure aggregation routine for weight vectors,
i.e., the 
number of communication rounds.

    For the experiments, we choose the number of 
    communication rounds depending on the convergence of the optimization for \fedavg. %
    For the EMNIST dataset, we run the algorithm for $3000$ communication rounds with the linear model and $1000$ for the ConvNet. For the Sent140 dataset, we run the $1000$ communication rounds for the linear model and $600$ for the RNN. 
 
\myparagraph{Devices per Round}   
    We choose the same number of clients per round for each 
    method, with the exception of $\fedavg, \theta$.
    All clients are assumed to be available and selections are made uniformly at random. 
    In particular, we select $100$ clients per round
    for all experiments with the exception of Sent140 RNN
    for which we used $50$ clients per round.

\myparagraph{Local Updates and Minibatch Size}
    Each selected client locally runs $1$ epoch of mini-batch stochastic gradient descent locally.
    We used the default mini-batch of $10$ for all experiments~\cite{mcmahan2017communication}, except for $16$ for EMNIST ConvNet. This is because the latter experiments were run using on a GPU, as we describe in the section on the hardware.
    
\myparagraph{Learning rate scheme} 
    We now describe the learning rate $\gamma_t$ used during \textit{LocalUpdate}.
    For the linear model, we used a constant fixed learning rate $\gamma_t \equiv \gamma_0$, while 
    for the neural network models, 
    we used a step decay scheme of the learning rate $\gamma_t = \gamma_0 c^{-\lfloor t/t_0 \rfloor}$ for some $t_0$ where $\gamma_0$ and $0 < c \le 1$ are tuned. 
    We tuned 
    the learning rates only for the baseline \fedavg{}
    and used the same learning rate for the other baselines and \newfl at all values of $\theta$. 
    
    For the neural network models,
    we fixed $t_0$ so that the learning rate decayed once or twice during the fixed time horizon $T$. 
    In particular, we used $t_0 = 400$ for EMNIST ConvNet (where $T=1000$) and
    $t_0 = 200$ for Sent140 RNN (where $T=600$).
    We tuned $c$ from the set $\{2^{-3}, 2^{-2}, 2^{-1}, 1\}$,
    while the choice of the range of $\gamma_0$ 
    depended on the dataset-model pair. 
    The tuning criterion we used was the mean of
    the loss distribution over the training clients
    (with client $i$ weighted by $\alpha_i$)
    at the end of the time horizon.
    That is, we chose the $\gamma_0, c$ 
    which gave the best terminal training loss. 

\myparagraph{Tuning of the regularization parameter}
    The regularization parameter $\lambda$ for linear models was tuned with cross validation from the set $\{10^{-k}\,:\, k \in \{3,\ldots, 8\} \}$. This was performed as described below.
    
    For each dataset, we held out half the training clients as validation clients. Then, for different values of the regularization parameter, 
    we trained a model with the (smaller subset of) training clients and evaluate its performance on the validation clients. 
    We selected the value of the regularization parameter
    as the one which gave the smallest $90$\textsuperscript{th} percentile of the misclassification error on the validation clients. 

\myparagraph{Baselines Parameters}
    We tune the proximal parameter of \fedprox{} with cross validation. The procedure we followed is identical 
    to the procedure we described above for the regularization 
    parameter $\lambda$. The set of parameters tested is $\{10^{-j}, j \in \{0,\dots, 3\}\}$. 
    We did not attempt to tune the parameter $q$ of \qffl{} and 
    report the performance of all values of $q$ which we tried.

\myparagraph{Hyperparameters of \newfl}
    We optimize \newfl via 
    \Cref{algo:sfl-new}
    with a fixed number of local steps, corresponding to one epoch.
    For simplicity, we calculate the quantile exactly, assuming 
    client losses are available to the server.

\begin{table*}[t!]
\caption{\small{
Metrics for the test misclassification error for EMNIST (Linear Model).
 }}\label{table:a:sfl:ext_comp_emnist_linear}
\begin{center}
\begin{adjustbox}{max width=0.99\linewidth}
\begin{tabular}{lcccccccccc}
\toprule
Method  &  Mean  &  Standard Deviation  &  $10^{th}$ Percentile  &  Median  &  $90^{th}$ Percentile \\ 
\midrule 
 FedAvg  & $34.38\pm0.38$  &  $18.39\pm0.33$  &  $21.54\pm0.35$  &  $32.61\pm0.39$  &  $49.65\pm0.67$ \\ 
 FedAvg $ \theta=0.8$  & $34.20\pm0.45$  &  $18.25\pm0.22$  &  $\mathbf{21.37}\pm0.26$  &  $32.10\pm0.34$  &  $49.92\pm1.16$ \\ 
 FedAvg $\theta=0.5$  & $34.51\pm0.47$  &  $18.21\pm0.30$  &  $21.40\pm0.36$  &  $32.36\pm0.59$  &  $50.28\pm0.77$ \\ 
 FedAvg $\theta=0.1$  & $34.60\pm0.46$  &  $18.58\pm0.31$  &  $21.71\pm0.37$  &  $32.54\pm0.37$  &  $50.33\pm1.28$ \\ 
 FedProx  & $\mathbf{33.82} \pm0.30$  & $18.25\pm0.23$  &  $21.37\pm0.35$  &  $\mathbf{31.75}\pm0.20$  &  $49.15\pm0.74$ \\ 
 $q$-FFL (Best $q=1.0$)  & $34.71\pm0.27$  & $19.34\pm0.30$  &  $22.33\pm0.41$  &  $32.80\pm0.23$  &  $49.90\pm0.58$ \\ 
 Tilted-ERM (Best $t=1.0$)  & $34.15 \pm 0.25$ & $10.78 \pm 0.30$ & $22.43 \pm 0.29$ & $32.36 \pm 0.23$ & $48.59 \pm 0.62$ \\
 AFL  & $39.32\pm0.27$  & $25.42\pm0.27$  &  $28.64\pm0.43$  &  $38.16\pm0.34$  &  $51.62\pm0.28$ \\ 
 \midrule
 $\Delta$-FL $ \theta =0.8$  & $34.48\pm0.26$  & $19.16\pm0.32$  &  $22.24\pm0.32$  &  $32.85\pm0.31$  &  $49.10\pm0.24$ \\ 
 $\Delta$-FL $ \theta =0.5$  & $35.01\pm0.20$  & $20.46\pm0.34$  &  $23.64\pm0.22$  &  $33.83\pm0.34$  &  $\mathbf{48.44}\pm0.38$ \\ 
 $\Delta$-FL $ \theta =0.1$  & $38.32\pm0.48$  & $23.86\pm0.59$  &  $27.27\pm0.64$  &  $37.52\pm0.67$  &  $50.34\pm0.95$ \\ 
 \bottomrule
\end{tabular}
\end{adjustbox}
\end{center}
\end{table*}

\begin{table*}[th!]
\caption{\small{
Metrics for the test misclassification error for EMNIST  (ConvNet Model).
 }}\label{table:a:sfl:ext_comp_emnist_convnet}
\begin{center}
\begin{adjustbox}{max width=0.99\linewidth}
\begin{tabular}{lccccc}
\toprule
Method  &  Mean  &  Standard Deviation  &  $10^{th}$ Percentile  &  Median  &  $90^{th}$ Percentile \\ 
\midrule
 FedAvg  & $16.63\pm0.50$  & $\mathbf{4.94}\pm0.14$  &  $\mathbf{6.43}\pm0.24$  &  $15.34\pm0.37$  &  $28.46\pm1.07$ \\ 
 FedAvg $\theta=0.8$  & $15.95\pm0.42$  & $5.25\pm0.19$  &  $6.86\pm0.38$  &  $14.84\pm0.24$  &  $26.82\pm1.28$ \\ 
 FedAvg $\theta=0.5$  & $16.22\pm0.23$  & $5.06\pm0.17$  &  $6.47\pm0.28$  &  $15.05\pm0.25$  &  $27.56\pm0.81$ \\ 
 FedAvg $ \theta=0.1$  & $15.97\pm0.43$  & $5.40\pm0.42$  &  $7.10\pm0.64$  &  $14.76\pm0.20$  &  $26.35\pm2.08$ \\ 
 FedProx  & $16.01\pm0.54$  & $5.16\pm0.32$  &  $6.68\pm0.44$  &  $14.88\pm0.29$  &  $27.01\pm1.86$ \\ 
 $q$-FFL (Best $q=0.001) $  & $16.58\pm0.30$  & $5.05\pm0.21$  &  $6.53\pm0.20$  &  $15.40\pm0.43$  &  $28.02\pm0.80$ \\ 
 Tilted-ERM (Best $t=1.0$)  & $15.69 \pm 0.38$ & $7.31 \pm 0.68$ & $7.26 \pm 0.51$ & $\textbf{14.66} \pm 0.16$ & $25.46 \pm 1.49$ \\
AFL  & $33.00\pm0.37$  & $20.38\pm0.23$  &  $22.92\pm0.23$  &  $31.58\pm0.27$  &  $45.07\pm1.00$ \\ 
\midrule
 $\Delta$-FL $ \theta =0.8$  & $16.08\pm0.40$  & $5.60\pm0.14$  &  $7.31\pm0.29$  &  $14.85\pm0.48$  &  $26.23\pm1.15$ \\ 
 $\Delta$-FL $ \theta =0.5$  & $\mathbf{15.48}\pm0.30$  & $6.13\pm0.15$  &  $8.08\pm0.16$  &  $14.73\pm0.22$  &  $\mathbf{23.69\pm0.94}$ \\ 
 $\Delta$-FL $ \theta =0.1$  & $16.37\pm1.03$  & $6.61\pm0.42$  &  $8.28\pm0.65$  &  $15.49\pm1.03$  &  $25.45\pm2.77$ \\ 
 \bottomrule
\end{tabular}
\end{adjustbox}
\end{center}
\end{table*} 
\begin{table*}[th!]
\caption{\small{
Metrics for the test misclassification error for Sent140 (Linear Model).
 }}\label{table:a:sfl:ext_comp_sent140_linear}
\begin{center}
\begin{adjustbox}{max width=0.99\linewidth}
\begin{tabular}{lccccc}
\toprule
Method  &  Mean  &  Standard Deviation  &  $10^{th}$ Percentile  &  Median  &  $90^{th}$ Percentile \\ 
\midrule 
 FedAvg  & $34.74\pm0.31$  &  $12.16\pm0.15$  &  $21.89\pm0.24$  &  $34.81\pm0.38$  &  $46.83\pm0.54$ \\ 
 FedAvg $\theta=0.8$  & $34.47\pm0.03$  &  $12.08\pm0.16$  &  $21.69\pm0.26$  &  $34.62\pm0.17$  &  $46.59\pm0.38$ \\ 
FedAvg $\theta=0.5$  & $34.46\pm0.07$  &  $12.11\pm0.24$  &  $\mathbf{21.55}\pm0.51$  &  $\mathbf{34.48}\pm0.20$  &  $47.00\pm0.40$ \\ 
FedAvg $\theta=0.1$  & $34.79\pm0.32$  &  $11.97\pm0.37$  &  $22.08\pm0.75$  &  $34.93\pm0.35$  &  $46.69\pm0.84$ \\ 
FedProx  & $34.74\pm0.31$  &  $12.16\pm0.15$  &  $21.89\pm0.24$  &  $34.82\pm0.39$  &  $46.83\pm0.54$ \\ 
 $q$-FFL (Best $q=1.0$)  & $34.48\pm0.06$  &  $11.96\pm0.14$  &  $21.61\pm0.24$  &  $34.57\pm0.16$  &  $\mathbf{46.38}\pm0.40$ \\ 
 Tilted-ERM (Best $t=1.0$)  & $34.71 \pm 0.31$ & $12.00 \pm 0.14$ & $21.83 \pm 0.34$ & $34.91 \pm 0.39$ & $46.70 \pm 0.50$ \\
AFL  & $35.97\pm0.08$  &  $11.83\pm0.09$  &  $23.58\pm0.28$  &  $36.09\pm0.17$  &  $47.51\pm0.32$ \\ 
\midrule
 $\Delta$-FL $\theta =0.8$  & $\mathbf{34.41}\pm0.22$  &  $12.17\pm0.11$  &  $21.77\pm0.34$  &  $34.64\pm0.25$  &  $46.44\pm0.38$ \\ 
$\Delta$-FL $\theta =0.5$  & $35.28\pm0.25$  &  $\mathbf{11.68}\pm0.40$  &  $23.03\pm0.38$  &  $35.55\pm0.53$  &  $46.64\pm0.41$ \\ 
$\Delta$-FL $\theta =0.1$  & $37.78\pm0.89$  &  $12.86\pm0.52$  &  $23.93\pm0.99$  &  $37.80\pm1.30$  &  $51.38\pm1.07$ \\ 
\bottomrule
\end{tabular}
\end{adjustbox}
\end{center}
\end{table*} 

\begin{table*}[th!]
\caption{\small{
Metrics for the test misclassification error for Sent140  (RNN Model).
 }}\label{table:a:sfl:ext_comp_sent140_rnn}
\begin{center}
\begin{adjustbox}{max width=0.99\linewidth}
\begin{tabular}{lccccc}
\toprule
Method  &  Mean  &  Standard Deviation  &  $10^{th}$ Percentile  &  Median  &  $90^{th}$ Percentile \\ 
\midrule 
 FedAvg  & $30.16\pm0.44$  & $4.36\pm1.26$  &  $10.06\pm2.06$  &  $29.51\pm0.33$  &  $49.66\pm3.95$ 1\\ 
 FedAvg $\theta=0.8$  & $\mathbf{29.85}\pm0.46$  & $5.39\pm1.32$  &  $11.90\pm2.27$  &  $29.57\pm0.31$  &  $46.93\pm3.84$ 1\\ 
 FedAvg $\theta=0.5$  & $31.06\pm1.01$  & $\mathbf{4.33}\pm2.73$  &  $\mathbf{9.69}\pm4.89$  &  $30.14\pm0.71$  &  $53.10\pm7.22$ 1\\ 
 FedAvg $\theta=0.1$  & $31.96\pm1.47$  & $4.82\pm2.09$  &  $11.65\pm4.83$  &  $31.55\pm1.13$  &  $52.87\pm8.41$ 1\\ 
 FedProx  & $30.20\pm0.48$  & $4.35\pm1.23$  &  $10.37\pm2.08$  &  $\mathbf{29.51}\pm0.32$  &  $49.85\pm4.07$ \\ 
 $q$-FFL (Best $q=0.01$)  & $29.99\pm0.56$  & $4.90\pm1.66$  &  $10.98\pm2.88$  &  $29.56\pm0.39$  &  $48.65\pm4.68$ \\ 
 Tilted-ERM (Best $t=1.0$)  & $30.13 \pm 0.49$ & $14.17 \pm 2.10$ & $13.18 \pm 3.33$ & $29.96 \pm 0.84$ & $46.54 \pm 3.27$ \\
 AFL  & $37.74\pm0.65$  & $9.90\pm1.46$  &  $18.19\pm1.99$  &  $36.95\pm1.03$  &  $57.78\pm1.19$ \\ 
 \midrule
 $\Delta$-FL $\theta =0.8$  & $30.30\pm0.33$  & $6.75\pm2.68$  &  $13.05\pm3.87$  &  $29.92\pm0.38$  &  $\mathbf{46.46}\pm4.39$ \\ 
 $\Delta$-FL $\theta =0.5$  & $33.58\pm2.44$  & $8.74\pm3.98$  &  $16.77\pm6.62$  &  $33.28\pm2.27$  &  $50.47\pm8.24$ \\ 
 $\Delta$-FL $\theta =0.1$  & $51.97\pm11.81$  & $9.11\pm5.47$ &  $16.67\pm9.15$  &  $52.44\pm13.21$  &  $86.44\pm10.95$ \\ 
 \bottomrule
\end{tabular}
\end{adjustbox}
\end{center}
\end{table*}

\begin{figure}[t!]
   \includegraphics[width=0.97\linewidth]{./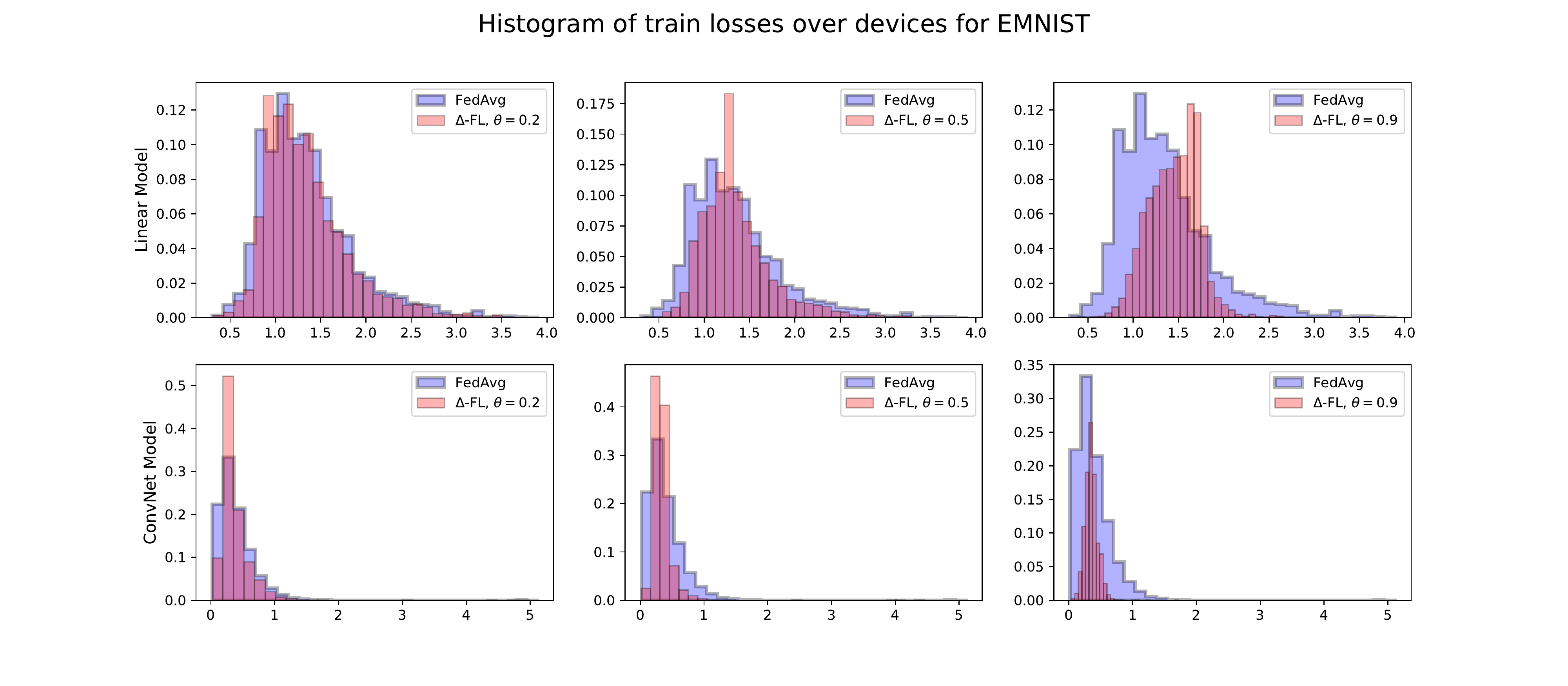}
   \includegraphics[width=0.97\linewidth]{./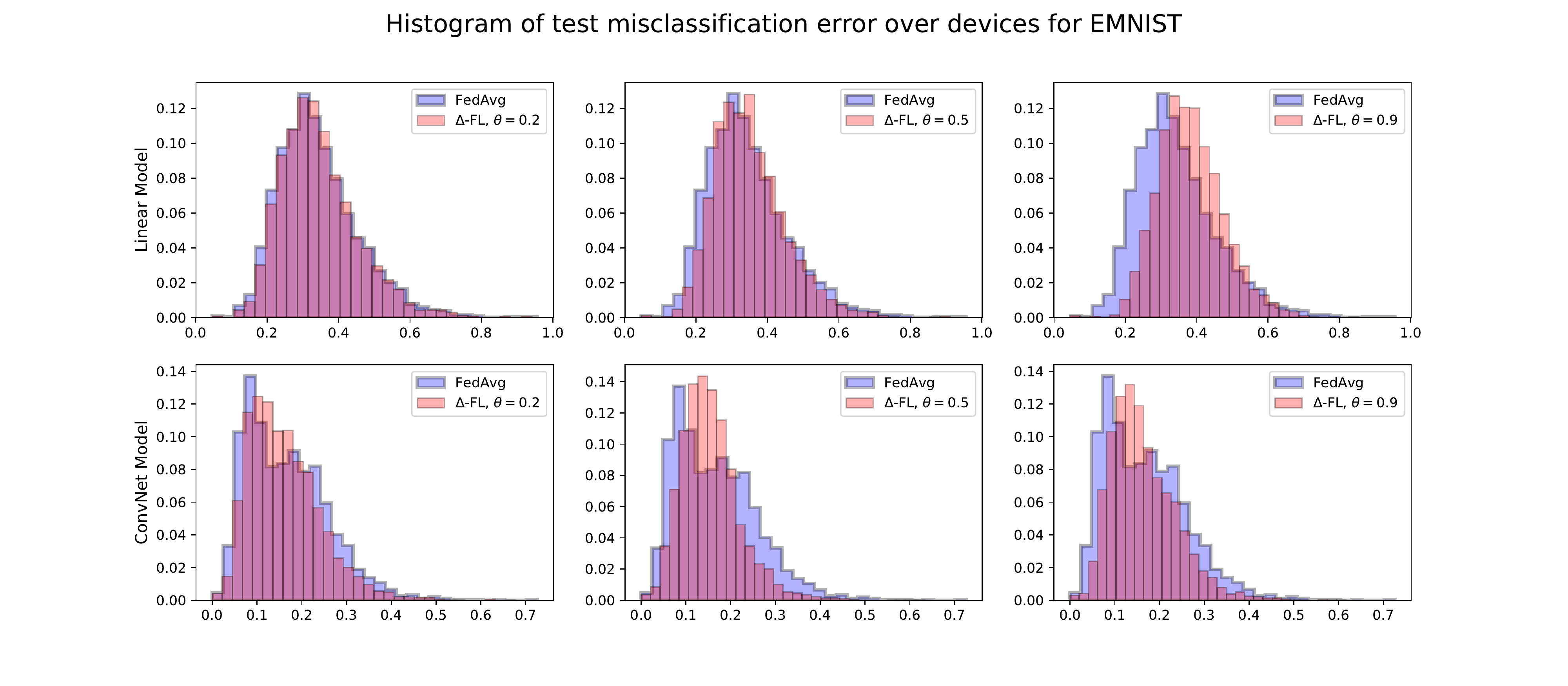}
   \caption{Histogram of loss distribution over training clients and misclassification error distribution over testing clients for EMNIST. The identification of the model (linear or ConvNet) is given 
   on the $y$-axis of the histograms.}
   \label{fig:a:sfl:expt:hist-emnist}
\end{figure}

\begin{figure}[t!]
    \centering
   \includegraphics[width=0.97\linewidth]{./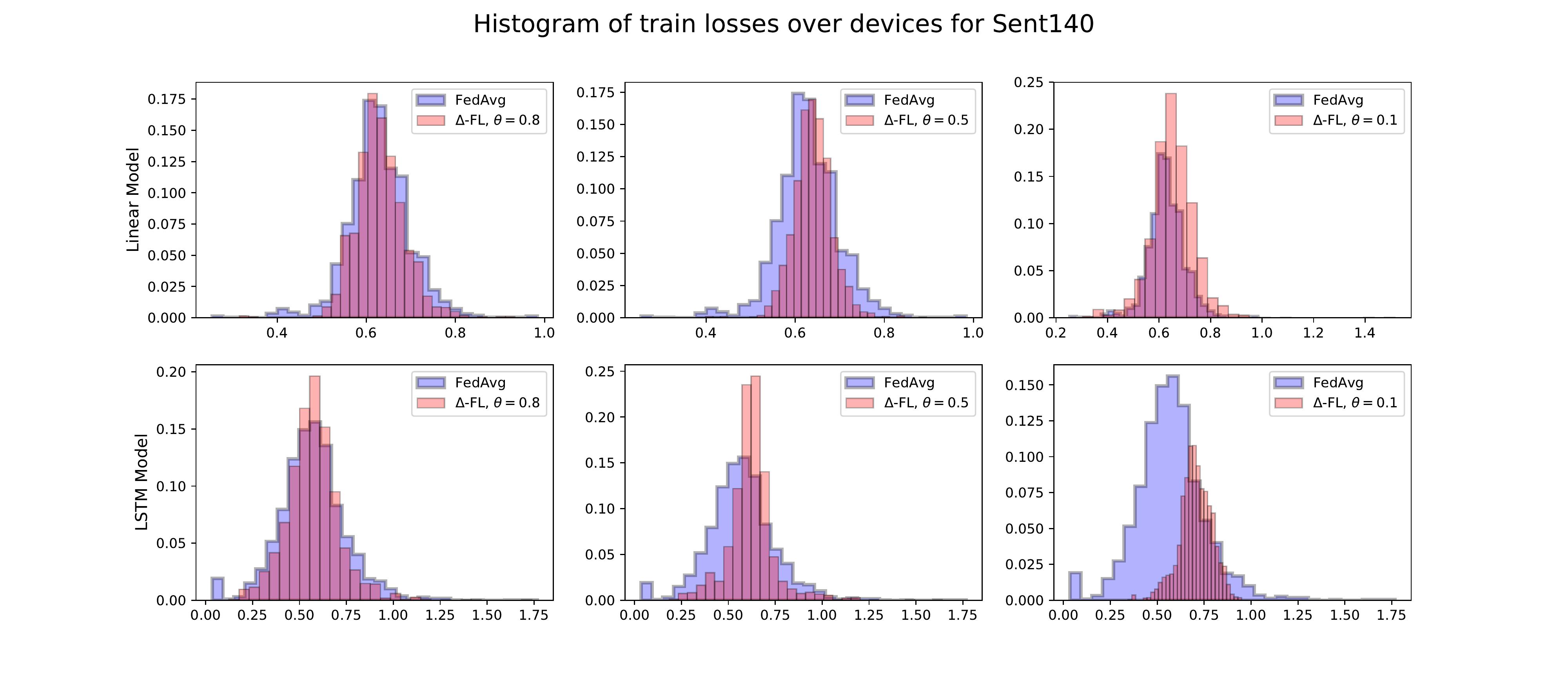}

   \includegraphics[width=0.97\linewidth]{./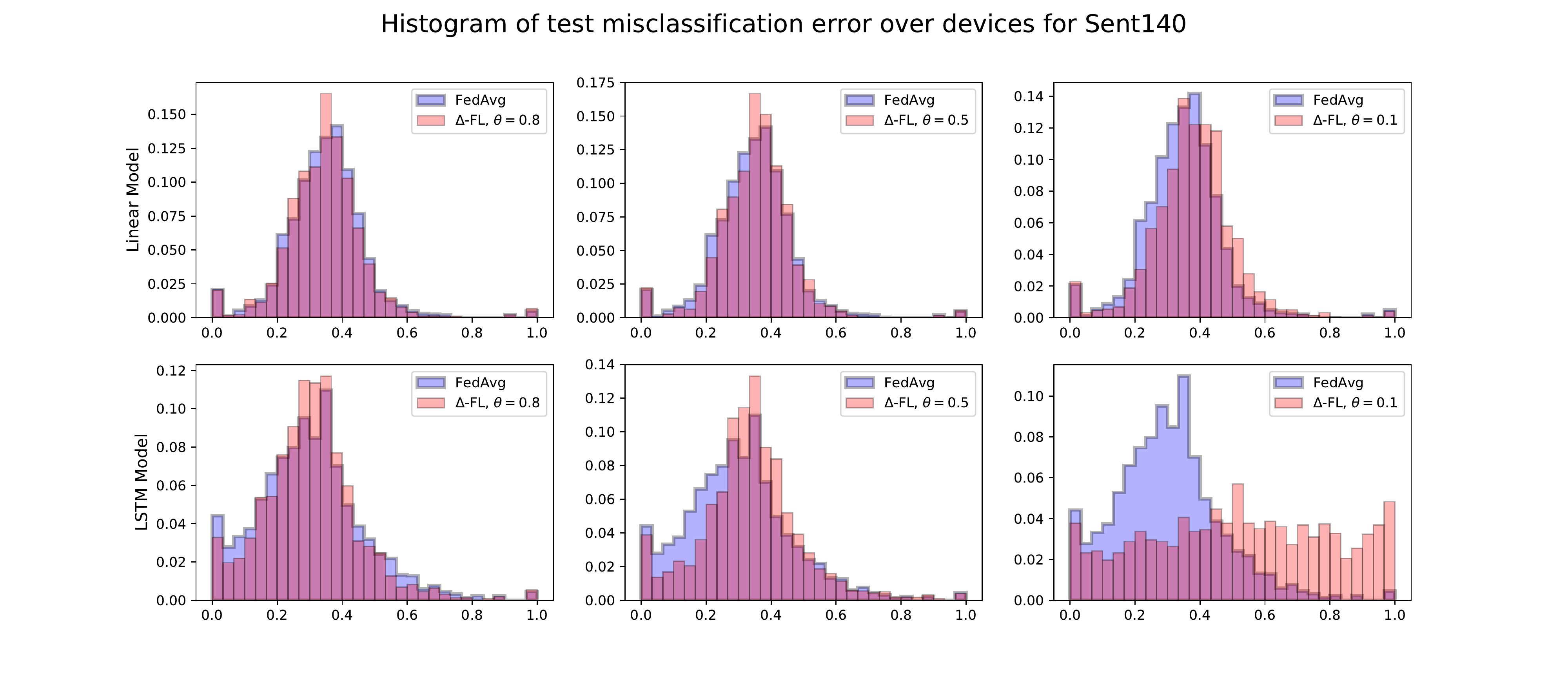}

   \caption{Histogram of loss distribution over training clients and misclassification error distribution over testing clients for Sent140. The identification of the model (linear or RNN) is given on the $y$-axis of the histograms.}
   \label{fig:a:sfl:expt:hist-sent-shake}
\end{figure}

\begin{figure}[t!]
\centering
   \includegraphics[width=0.9\linewidth]{./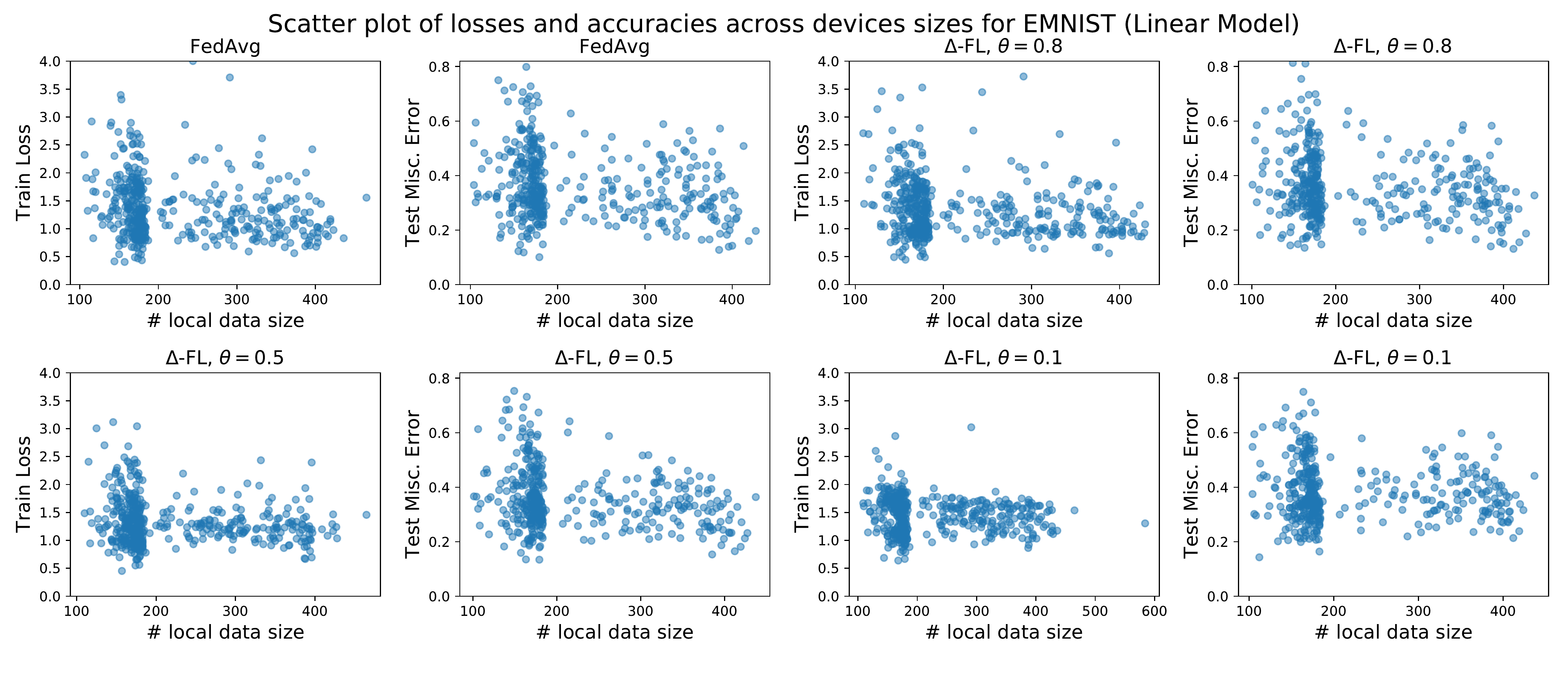}
   
   \includegraphics[width=0.9\linewidth]{./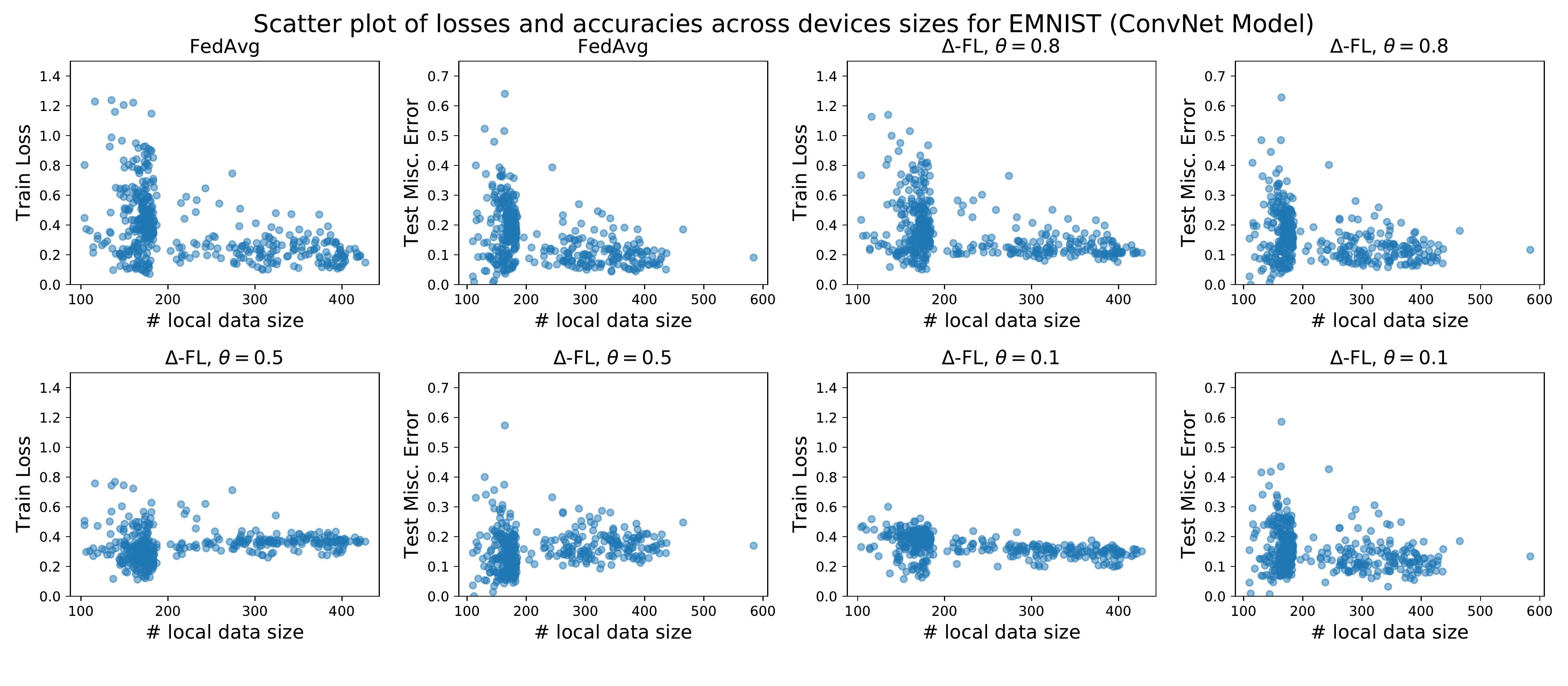}
   \caption{Scatter plot of (a) loss on training client vs. amount of local data, and (b) misclassification error on testing client vs. amount of local data for EMNIST.}
   \label{fig:a:sfl:expt:scatter-emnist}
\end{figure}

\begin{figure}[t!]

   \includegraphics[width=0.96\linewidth]{./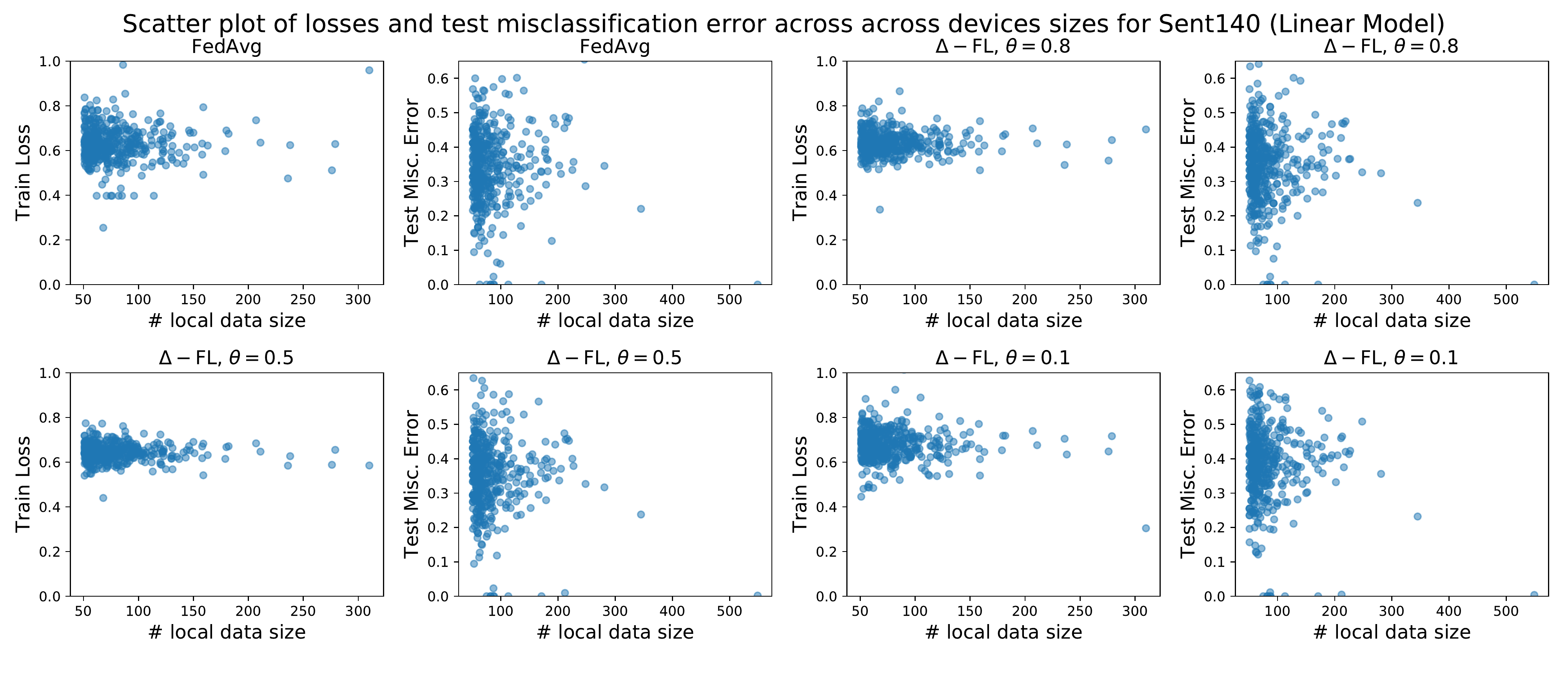}
   
   \includegraphics[width=0.96\linewidth]{./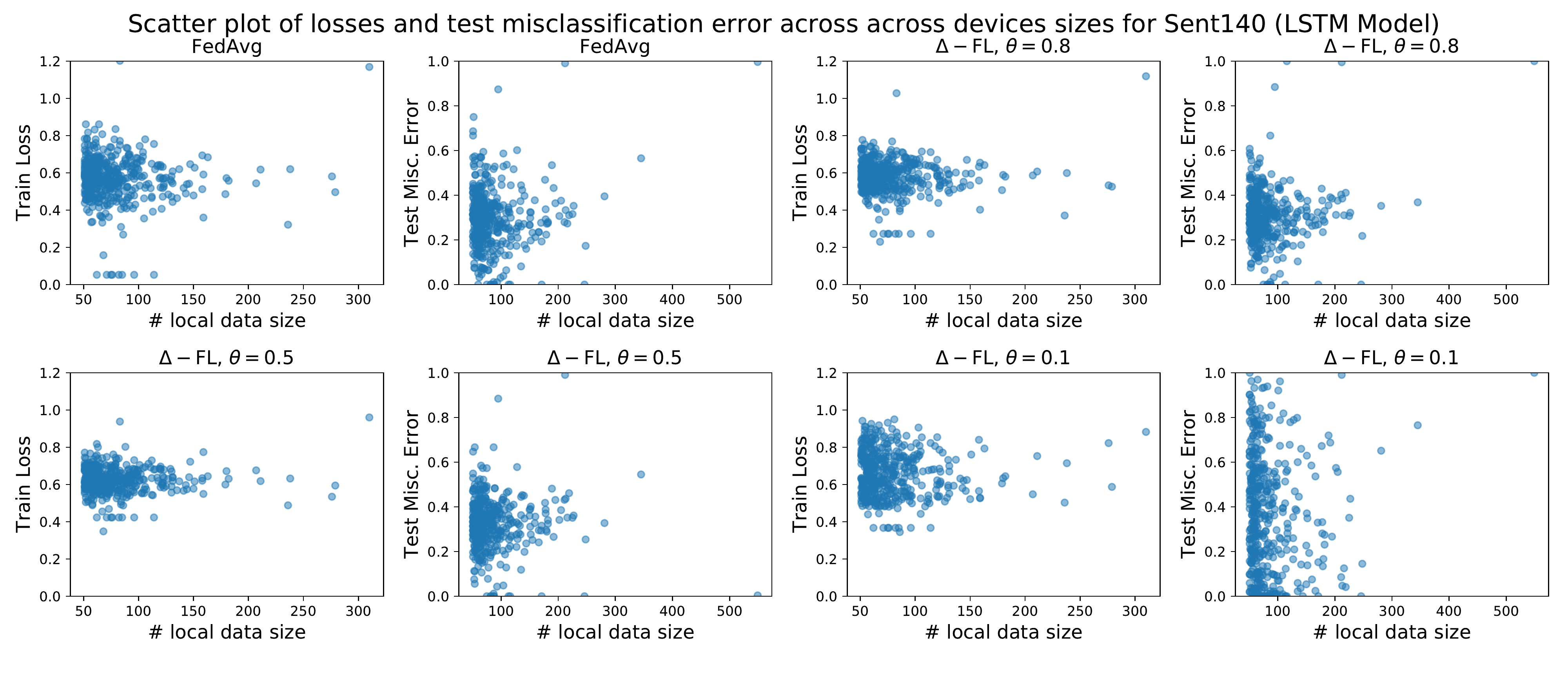}

   \caption{Scatter plot of (a) loss on training client vs. amount of local data, and (b) misclassification error on testing client vs. amount of local data for Sent140.}
   \label{fig:a:sfl:expt:scatter-sent-shake}
\end{figure}

\subsection{Evaluation Strategy and Other Details}\label{sec:a:sfl:evaluation_strategy}

\myparagraph{Evaluation metrics}
    We record the loss of each training client and the misclassification error of each testing client, as measured on its local data. 
    
    The evaluation metrics noted in \Cref{sec:a:sfl:exp_results} are the following: the weighted mean of the loss distribution over the training clients, the (unweighted) mean misclassification error over the testing clients, the weighted $\tau$-percentile of the loss over the training client and the (unweighted) $\tau$-percentile of the misclassification error over the testing clients for values of $\tau$ among $\{20, 50, 60, 80, 90, 95 \}$. We also present the $90$\textsuperscript{th} and $95$\textsuperscript{th} superquantile of the test misclassification error (i.e., average misclassification error
    of the worst $10\%$ and $5\%$ of the clients respectively),
    as well as the
    average test misclassification error of the best $10\%$ clients. The weight $\alpha_i$ used for training
    client $i$ was set as proportional 
     to the number of datapoints on the client.
     
\myparagraph{Evaluation times}    
    We evaluate the model during the training process once every $l$ communication rounds. The value of $l$ used was $l=50$ for EMNIST linear model, 
    $l=10$ for EMNIST ConvNet, $l=20$ for Sent140 linear model and $l=25$ for Sent140 RNN.

\myparagraph{Hardware}
    We run each experiment as a simulation as a single process. 
    The linear models were trained on 
    m5.8xlarge AWS instances, each with an Intel Xeon Platinum 8000 series processor with $128$ GB of memory running at most $3.1$ GHz. 
    The neural network experiments were trained on workstation with 
    an Intel i9 processor with $128$ GB of memory at $1.2$ GHz, and two Nvidia Titan Xp GPUs. The Sent140 RNN experiments were run on a CPU
    while the other neural network experiments were run using GPUs.

\myparagraph{Software Packages}
    Our implementation is based on NumPy using the Python language. 
    In the neural network experiments, we use PyTorch to implement the 
    \textit{LocalUpdate} procedure, i.e., the model itself and the 
    automatic differentiation routines provided by PyTorch to make
    SGD updates. 
    
\myparagraph{Randomness}
    Since several sampling routines appear in the procedures such as the selection of clients or the local stochastic gradient, we carry out our experiments with five different seeds and plot the average metric value over these seeds. Each simulation is run on a single process. 
    Where appropriate, we report one standard deviation from the mean.
    
\subsection{Experimental Results}\label{sec:a:sfl:exp_results}

We now present the experimental results of the paper. 
\begin{itemize}[topsep=0.2pt]
    \setlength\itemsep{0.2 pt}
    \item We present different metrics on the distribution of test misclassification error over the clients, comparing \newfl
    to baselines.

    \item We study the convergence of \Cref{algo:sfl-new} for 
    \newfl over the course of the optimization,
    and compare it with \fedavg.
    
    \item We plot the histograms of the distribution of losses over train clients as well as the test misclassification errors over test clients at the end of the training process.
    
    \item We present in the form of scatter plots the training loss and test misclassification error across clients achieved at the end of the training, versus the number of local data points on the client.
    
    \item We present the number of clients having a loss greater than the quantile at each communication round for \newfl. This gives the effective number of clients selected in each round, cf. \Cref{prop:sfl-quantile} and \Cref{remark:sfl-client-filtering}. 
    
\end{itemize}

\myparagraph{Comparison to Baselines}
We now present a detailed comparison of 
various statistics of the test misclassification error distribution
for different methods in \Cref{table:a:sfl:ext_comp_emnist_linear}-
For each column, the smallest mean over five random runs is highlighted
in bold. Further, if no other method is within one standard deviation of this method, the entire entry (i.e., mean\,$\pm$\,std) 
is highlighted in bold.

\myparagraph{Histograms of Loss and Test Misc. Error over Devices}
Here, we plot the histograms of the loss distribution over training clients and the misclassification error distribution over testing clients.
We report the losses and errors obtained at the end of the training 
process. Each metric is averaged per client over 5 runs of the random seed. \Cref{fig:a:sfl:expt:hist-emnist} shows the histograms for EMNIST,
while \Cref{fig:a:sfl:expt:hist-sent-shake} shows the histograms for Sent140 dataset.
for Sent140. We note that \newfl tends to exhibit thinner upper tails at multiple values of $\theta$ and a lower variance of the distribution in most of the cases. This is also confirmed by the figures in Tables \ref{table:a:sfl:ext_comp_emnist_linear} to~\ref{table:a:sfl:ext_comp_sent140_rnn}. This shows the benefit of using \newfl over vanilla \fedavg.

\myparagraph{Performance compared to local data size}
Next, we plot the loss on training clients versus the amount of local data on the client
and the misclassification error on the test clients versus the 
amount of local data on the client.
See \Cref{fig:a:sfl:expt:scatter-emnist} for EMNIST and 
\Cref{fig:a:sfl:expt:scatter-sent-shake} for Sent140.

Observe firstly that improvement over the worst cases is achieved regardless of the local data size of the clients. Indeed, the client re-weighting step operates a sorting of the loss of the clients which does not prevent small clients from being selected. In contrast, \fedavg, by averaging with respect to the weights of the clients is likely to put more weight on the clients with larger local data size. Secondly, \newfl appears to reduce the variance of 
of the loss on the train clients.
Lastly, note that amongst test clients with a small number 
of data points (e.g., $<200$ for EMNIST or $<100$ for Sent140), \newfl reduces the variance of the misclassification error. Both effects are more pronounced on 
the neural network models.

\section{Numerical Experiments: End-to-End Differential Privacy}
\label{sec:a:sfl:dp-expt}
We consider a synthetic classification dataset to evaluate the privacy-utility tradeoff of \newfl under end-to-end differential privacy.

\subsection{End-to-End Differential Privacy with \newfl}

To obtain an end-to-end differentially private version of \newfl, we modify the weight aggregation step of \Cref{algo:sfl-private} (line~\ref{line:sfl-private:aggregation}).
Specially, we clip the weight updates and add Gaussian noise to obtain differential privacy via the Gaussian mechanism.
The overall algorithm is given in \Cref{algo:sfl-private:end-to-end}.

\myparagraph{Privacy Accounting}
We now discuss the privacy spent in each communication round. For simplicity, we assume the number $m\pow{t} = \sum_{i \in S} \mathbb{I}(F_i(w\pow{t}) \ge Q\pow{t})$ of selected clients is publicly known. 

\begin{claim}
    Consider the setting of \Cref{algo:sfl-private:end-to-end} with noise scale $\sigma_w$, norm bound $C$
    and \Cref{algo:sfl:quantile-dp:new} with $b$ bins and noise scale $\sigma = \sigma_q$.
    Each round of \Cref{algo:sfl-private:end-to-end} satisfies $(1/2)\eps^2$-concentrated DP where 
    \[
        \frac{1}{2}\eps^2 
        = 
          \frac{1}{2} \min \left\{ \frac{c^2 \log_2^2 b}{m\sigma_q^2} + \psi b \,, \, \left( \frac{c \log_2 b}{\sqrt{m} \sigma_q} + \psi \sqrt{2b} \right)^2  \right\}  + \frac{\sigma_w^2}{2 C^2}  \,,
    \]
    where $\psi = 10 \sum_{i=1}^{m-1} \exp\left( - 2 \pi^2 \sigma_q^2 i/(i+1)  \right)$.
\end{claim}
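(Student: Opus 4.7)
The plan is to decompose one round of \Cref{algo:sfl-private:end-to-end} into two differentially private sub-mechanisms and apply (additive) composition of concentrated differential privacy. Concretely, each round consists of (i) the quantile estimation step, implemented by \Cref{algo:sfl:quantile-dp:new} with noise scale $\sigma_q$ and $b$ bins, and (ii) the clipped, Gaussian-noised weight aggregation step with clipping norm $C$ and noise scale $\sigma_w$. Because both steps are applied to disjoint queries on the same dataset (loss statistics for the quantile and clipped model updates for the aggregate), their cDP guarantees compose additively: if the two sub-mechanisms are $\rho_1$- and $\rho_2$-cDP respectively, the round is $(\rho_1+\rho_2)$-cDP. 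The publicly known cohort size $m^{(t)}$ assumption ensures that no additional privacy leakage is introduced by conditioning on the number of tail clients (since we treat $m^{(t)}$ as a deterministic function of public information, not of private data).

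For sub-mechanism (i), I would invoke \Cref{thm:sfl:quantile:new}(a) verbatim, substituting the cohort size $m$ for the parameter $n$ that appears there. This directly yields the $\rho_1 = \tfrac12\epsilon_q^2$ term displayed in the claim, with $\epsilon_q$ the minimum of the two bounds inside the $\min$ operator. The only thing to check is that the hypotheses $\sigma_q \ge 1/2$ and the ring-size bound on $M$ hold in the ambient setting, which we inherit from the algorithm's input specification.

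For sub-mechanism (ii), the analysis reduces to the standard Gaussian mechanism. After clipping each client's update to $\ell_2$-norm $C$, the $\ell_2$-sensitivity of the sum $\sum_{i}\Delta w_i$ with respect to addition or removal of one client's contribution is at most $C$. Adding isotropic Gaussian noise of scale $\sigma_w$ then yields a cDP bound of $\rho_2$, where $\rho_2$ is exactly the second summand displayed in the claim; this follows from the well-known fact that the Gaussian mechanism with sensitivity $\Delta$ and noise variance $\sigma^2$ satisfies $\tfrac{\Delta^2}{2\sigma^2}$-cDP (one reads off $\rho_2$ by matching notation with the claim's definition of $\sigma_w$ as a noise multiplier). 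Distributed instantiation via secure aggregation does not change the view of the privacy adversary, so the central-DP Gaussian bound transfers directly.

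The main (and essentially only) subtle point is verifying that the quantile output does not amplify the privacy cost of the subsequent aggregation step. This is handled by two observations: first, the set of tail clients $S_\theta$ is a post-processing of the DP quantile $Q^{(t)}$ together with the per-client loss indicators, which are themselves part of the aggregation query and protected by the Gaussian noise; second, because we release the quantile and the noisy weighted sum as a joint output, sequential composition applies without additional correction. Assembling these pieces gives exactly the stated bound $\tfrac12\epsilon^2 = \rho_1 + \rho_2$.
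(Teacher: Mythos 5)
Your proposal is correct and follows essentially the same route as the paper: decompose each round into the quantile step, whose guarantee is imported from \Cref{thm:sfl:quantile:new} with the cohort size $m$ in place of $n$, and the clipped Gaussian weight-update step, then add the two concentrated-DP parameters by composition. Your extra remarks on post-processing, the public $m^{(t)}$, and secure aggregation are harmless elaborations of what the paper leaves implicit, and like the paper you read off the weight-update term in the form $\sigma_w^2/(2C^2)$ exactly as stated in the claim.
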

\begin{proof}
    The privacy bound of the quantile computation from  \Cref{algo:sfl:quantile-dp:new} is given by \Cref{thm:sfl:quantile:new}. 
    Since the contribution $\delta_i\pow{t}$ of each client has $\ell_2$ norm $\norm{\delta_i\pow{t}} \le C$ and we add Gaussian noise $\Ncal(0, \sigma_w^2 I_d)$, the weight update step satisfies $\sigma_w^2/(2C^2)$-concentrated DP. 
    The proof is completed by noting that concentrated differential privacy composes additively. 
\end{proof}

To obtain a bound on the concentrated DP of the entire algorithm, 
we rely on generic upper bounds of \cite{zhu2019poisson} for privacy amplification by subsampling. 

\begin{algorithm}[t]
	\caption{The \newfl Algorithm with End-to-End Differential Privacy}
	\label{algo:sfl-private:end-to-end}
\begin{algorithmic}[1]
		\Require Initial iterate $w\pow{0}$,
		    number of communication rounds $T$, 
		    number of clients per round $m$, 
		    number of local updates $\tau$,
		    local step size $\gamma$, 
		    $\ell_2$ norm bound $C$ for weight updates, 
		    noise variance $\sigma_w^2$
	    \For{$t=0, 1, \ldots, T-1$}
	        \State Sample $m$ clients from $[n]$ without replacement in $S$
	        \State Estimate the $(1-\theta)$-quantile of 
	            $F_i(w\pow{t})$ for $i \in S$ with distributed differential privacy (\Cref{algo:sfl:quantile-dp:new}); call this $Q\pow{t}$ 
	        \State Set $m\pow{t} = \sum_{i \in S} \mathbb{I}\left(F_i(w\pow{t}) \ge Q\pow{t}\right)$
	        \For{each selected client $i \in S$ in parallel}
	            \State Initialize $w_{k, 0}\pow{t} = w\pow{t}$
	    	    \For{$k=0, \ldots, \tau-1$} 
	    	        \State $w_{i, k+1}\pow{t} = (1-\gamma\lambda)w_{i, k}\pow{t} - \gamma \grad F_i(w_{i, k}\pow{t})$
	    	    \EndFor
	    	    \State Define the norm-clipped update contributed by the client 
	    	    \[
	    	        \delta_i \pow{t} = 
	    	        \begin{cases}
	    	            \frac{C \, (w_{i, \tau}\pow{t} - w\pow{t})}{\max\left\{C, \, \norm{w_{i, \tau}\pow{t} - w\pow{t}}_2 \right\} } \,,
	    	            &  \text{ if } F_i(w\pow{t}) \ge Q\pow{t} \\
	    	            \bm{0}_d, & \text{ else }
	    	        \end{cases}
	    	  \]
	    	\EndFor
	    	\State Sample Gaussian noise $\xi\pow{t} \sim \Ncal(0, \sigma_w^2 I_d)$ and 
	    	update 
	    	\[
	    	w\pow{t+1} = w\pow{t} + \frac{1}{m\pow{t}}\sum_{i \in S} \delta_i \pow{t} + \xi\pow{t} 
	    	\]
	    \EndFor
	    \State \Return $w_T$
\end{algorithmic}
\end{algorithm}

\subsection{Experimental Setup}
We consider a synthetic classification dataset and train a linear model on it. 

\myparagraph{Dataset Description}
We create a $10$-class classification dataset in $d=20$ dimensions, inspired by \cite{guyon2003design}. 
The input $x$ for each class $k$ is drawn from a Gaussian of mean $\mu_i$ and identity covariance in $\reals^{15}$. The means $\mu_i$'s are the corners of a random polytope in $\reals^{15}$. 
We add $2$ features that are linear combinations of the $15$ informative ones and $3$ features that are pure noise. Overall, the dataset can be generated using the \texttt{make\_classification} function of scikit-learn~\cite{scikit-learn} as

\begin{center}
\begin{adjustbox}{max width=0.9\linewidth}
\begin{lstlisting}
x, y = make_classification(
	n_samples=int(5e5), n_features=20, 
	n_informative=15, n_redundant=2, n_repeated=0, 
	n_classes=10, n_clusters_per_class=1, 
	class_sep=5.0, hypercube=False, random_state=2345
)
\end{lstlisting}
\end{adjustbox}
\end{center}

We now split this dataset into a federated dataset with $n=2500$ training clients and $n'=500$ validation and $n''=500$ test clients. 
The data distribution $q_i(x, y) = q_i(y) q_i(x \vert y)$ across the clients is designed to exhibit a label shift, i.e., the distribution $q_i(y)$ over labels for each client is different while the class-conditional distribution $q_i(x \vert y=k) = \Ncal(\mu_k, I_d)$ is the same across clients. 
The class distribution $q_i(y)$ on each training client $i$ is drawn from a Dirichlet distribution $\text{Dir}(0.5)$, while that for a validation or test client is drawn from $\text{Dir}(0.01)$.
We sample $100$ input-output pairs for each training, validation, and test client. 

\myparagraph{Model and Per-Client Objective} 
We use a linear model (with intercept) on each client and the multinomial logistic loss, also known as the cross entropy loss, to define the per-client objective. 

\myparagraph{Algorithms and Hyperparameters}
We compare \Cref{algo:sfl-private:end-to-end} with DP-FedAvg~\cite{mcmahan2018learning}, a version of FedAvg with differential privacy. 

Both algorithms used a single full gradient step per client with a fixed learning rate of $0.1$. For each algorithm, we sample $100$ clients per round and run the training for a total of $1000$ rounds. 
We vary the privacy budget $\eps \in \{3, 5, 10, 15, 20\}$ and tune the following hyperparameters for each algorithm. 

For DP-FedAvg, we tune the $\ell_2$ norm bound (analogous to $C$ in \Cref{algo:sfl-private:end-to-end}) and set the noise scale $\sigma_w$ depending on the target privacy budget $\eps$ and the norm bound $C$. 
For \Cref{algo:sfl-private:end-to-end}, we allocate $r$-times the privacy budget of the weight updates to the quantile updates. In addition, we also tune: 
\begin{itemize}[noitemsep]
    \item the loss upper bound $B$, so that all losses are truncated to $[0, B]$, 
    \item the number of bins $b$ in the hierarchical histogram,
    \item the $\ell_2$ norm bound $C$ for the weight update. 
\end{itemize}
We tune all $4$ hyperparameters with a grid search and set the noise scale $\sigma_w$ for the weight update, and $\sigma_q/c$ for the quantile update depending on the selected hyperparameters and the privacy budget $\eps$. The objective of the grid search was to minimize the $90$\textsuperscript{th} percentile of the misclassification errors across all validation clients. 

The ranges of the hyperparameters considered are 
quantile privacy ratio $r \in \{0.1, 0.25, 0.5, 0.75\}$, 
loss upper bound $B \in \{0.7, 0.9, 1.1, 1.3, 1.5\},$\footnote{
The loss at convergence was around $0.7$, while that at random guessing is $\log 10 \approx 2.3$.}, 
number of bins $b \in \{16, 32, 64\}$, and
update norm $C \in \{0.9, 1.1, 1.3, 1.5\}$.\footnote{
These correspond approximately to the $0.3, 0.5, 0.7, 0.9$ quantiles of the update norms of FedAvg without differential privacy, during the latter half of training.}

\bibliography{bib/fl}
\bibliographystyle{abbrvnat} %

\end{document}